\newif\ifabstract
\newif\iffull
\newcommand{\myparskip}{3pt}
\newcommand{\sgn}{\text{sgn}}
\newcommand{\e}{\varepsilon}
\newtheorem{theorem}{Theorem}
\newtheorem{lemma}{Lemma}
\newtheorem{corollary}{Corollary}
\newtheorem{claim}{Claim}
\newtheorem{proposition}{Proposition}
\newtheorem{assumption}{Assumption}
\newtheorem{definition}{Definition}
\newenvironment{proof}{\par \smallskip{\bf Proof:}}{\hfill\stopproof}
\def\stopproof{\square}
\def\square{\vbox{\hrule height.2pt\hbox{\vrule width.2pt height5pt \kern5pt
\vrule width.2pt} \hrule height.2pt}}
\newcommand{\footremember}[2]{%
   \footnote{#2}
    \newcounter{#1}
    \setcounter{#1}{\value{footnote}}%
}
\newcommand{\footrecall}[1]{%
    \footnotemark[\value{#1}]%
}
\begin{document}

\title{Adding One Neuron Can Eliminate All Bad Local Minima}
\author{Shiyu Liang\footremember{uiuc}{University of Illinois at Urbana-Champaign}\\ sliang26@illinois.edu
            \and Ruoyu Sun\footrecall{uiuc}\\ ruoyus@illinois.edu
            \and Jason D. Lee\footremember{usc}{ University of Southern California}\\jasonlee@marshall.usc.edu
            \and R. Srikant\footrecall{uiuc}\\rsrikant@illinois.edu
            }

\date{}

\maketitle

\begin{abstract}
One of the main difficulties in analyzing neural networks is the non-convexity of the loss function which may have many bad local minima.
 In this paper, we study the landscape of neural networks for binary classification tasks. Under mild assumptions, we prove that after adding one special neuron with a skip connection to the output, or one special neuron per layer, every local minimum is a global minimum.

\end{abstract}

\section{Introduction}
Deep neural networks have recently achieved huge success in various machine learning tasks (see,  \cite{krizhevsky2012imagenet}; \cite{goodfellow2013maxout}; \cite{wan2013regularization}, for example). However, a theoretical understanding of neural networks is largely lacking.
One of the difficulties in analyzing neural networks is the non-convexity of the loss function which allows the existence of many local minima with large losses. This was long considered a bottleneck of neural networks, and one of the reasons why convex formulations such as support vector machine~\cite{cortes1995support} were preferred previously. Given the recent empirical success of the deep neural networks, an interesting question is whether the non-convexity of the neural network is really an issue.

It has been widely conjectured that all local minima of the empirical loss lead to similar training performance~\cite{lecun2015deep,choromanska2015loss}. For example, prior works empirically showed that neural networks with identical architectures but different initialization points can converge to local minima with similar classification performance~\cite{krizhevsky2012imagenet, he2016deep, huang2017densely}. 
On the theoretical side, there have been many recent attempts to analyze the landscape of the neural network loss functions. A few works have studied deep networks, but they either require linear activation functions ~\cite{baldi1989neural, kawaguchi2016deep, freeman2016topology, hardt2016identity, yun2017global}, or require assumptions such as independence of ReLU activations  \cite{choromanska2015loss} and significant overparametrization  ~\cite{nguyen2017loss1,nguyen2017loss2,livni2014computational}.
There is a large body of works that study single-hidden-layer neural networks and provide various conditions under which a local search algorithm can find a global minimum \cite{du2018power,ge2017learning,andoni2014learning, sedghi2014provable, janzamin2015beating, haeffele2015global, gautier2016globally, brutzkus2017globally, soltanolkotabi2017learning, soudry2017exponentially, goel2017learning, du2017convolutional, zhong2017recovery, li2017convergence, liang2018understanding}. Note that even for single-layer networks, strong assumptions such as over-parameterization, very special neuron activation functions,  fixed second layer parameters and/or Gaussian data distribution are  often needed in the existing works. 
The presence of various strong assumptions reflects the difficulty of the problem: even for the single-hidden-layer nonlinear neural network, it seems hard to analyze the landscape, so it is reasonable to make various assumptions.

In addition to strong assumptions, the conclusions in many existing works do not apply to all local minima. One typical conclusion is about the local geometry, i.e., in a small neighborhood of the global minima no bad local minima exist \cite{zhong2017recovery, du2017convolutional, li2017convergence}. Another typical conclusion is that a subset of local minima are global minima \cite{haeffele2014structured, haeffele2015global, soudry2016no, nguyen2017loss1,nguyen2017loss2}. 
\cite{shamir2018resnets} has shown that a subset of second-order local minima can perform nearly as well as linear predictors. 
The presence of various conclusions also reflects the difficulty of the problem: while analyzing the global landscape seems hard, we may step back and analyze the local landscape or a ``majority'' of the landscape. 
Based on the above discussions,  an ideal theoretical result would state that with mild assumptions on the dataset, neural architectures and loss functions, every local minimum is a global minimum; existing results often make more than one strong assumption and/or prove weaker conclusions on the landscape. 
\subsection{Our Contributions}
Given this context, our main result is quite surprising: for binary classification, with a small modification of the neural architecture,  every local minimum is a global minimum of the loss function. Our result requires no assumption on the network size, the specific type of the original neural network, etc., yet our result applies to every local minimum. 
The major trick is adding one special neuron (with a skip connection) and an associated regularizer of this neuron.  
Our major result and its implications are as follows:
\begin{itemize}
	\item We focus on the binary classification problem with a smooth hinge loss function. 
	We prove the following result: for any neural network, by adding a special neuron (e.g., exponential neuron) to the network and adding a quadratic regularizer of this neuron, the new loss function has no bad local minimum. In addition, every local minimum achieves the minimum misclassification error. 
	 \item In the main result, the augmented neuron can be viewed as a skip connection from the input to the output layer. However, this skip connection is not critical, as the same result also holds if we add one special neuron to each layer of a fully-connected feedforward neural network.
	\item To our knowledge, this is the first result that no spurious local minimum exists for a wide class of deep nonlinear networks. Our result indicates that the class of ``good neural networks'' (neural networks such that there is an associated loss function with no spurious local minima) contains any network with one special neuron, thus this class is rather ``dense'' in the class of all neural networks: the distance between any neural network and a good neural network is just a neuron away.
\end{itemize}

The outline of the paper is as follows.  In Section~\ref{sec::prelim}, we present several notations.  In Section~\ref{sec::main-results}, we present the main result and several extensions on the main results are presented in Section~\ref{sec::extensions}.  We present the proof idea of the main result in Section~\ref{sec::proof-idea} and conclude this paper in Section~\ref{sec::conclusions}. All proofs are presented in Appendix.
	
\vspace{-0.05cm}
\section{Preliminaries}\label{sec::prelim}\vspace{-0.05cm}
\textbf{Feed-forward networks.} Given an input vector of dimension $d$, we consider a neural network with $L$ layers of neurons for binary classification. We denote by $M_{l}$ the number of neurons in the $l$-th layer (note that $M_{0}=d$). We denote the neural activation function by $\sigma$. Let $\bm{W}_{l}\in\mathbb{R}^{M_{l-1}\times M_{l}}$ denote the weight matrix connecting the $(l-1)$-th and $l$-th layer and $\bm{b}_{l}$ denote the bias vector for neurons in the $l$-th layer. Let $\bm{W}_{L+1}\in\mathbb{R}^{M_{L}}$ and ${b}_{L}\in\mathbb{R}$ denote the weight vector and bias scalar in the output layer, respectively. Therefore, the output of the network $f:\mathbb{R}^{d}\rightarrow\mathbb{R}$ can be expressed by\vspace{-0.2cm}
\begin{equation}\label{eq::fnn}
f(x;\bm{\theta})=\bm{W}_{L+1}^{\top}\bm{\sigma}\left(\bm{W}_{L}\bm{\sigma}\left(...\bm{\sigma}\left(\bm{W}_{1}^{\top}x+\bm{b}_{1}\right)+\bm{b}_{L-1}\right)+\bm{b}_{L}\right)+b_{L+1}.\end{equation}
\textbf{Loss and error}. We use $\mathcal{D}=\{(x_{i},y_{i})\}_{i=1}^{n}$ to denote a dataset containing $n$ samples, where $x_{i}\in\mathbb{R}^{d}$ and $y_{i}\in\{-1,1\}$ denote the feature vector and the label of the $i$-th sample, respectively.   Given a neural network $f(x;\bm{\theta})$ parameterized by $\bm{\theta}$ and a loss function $\ell:\mathbb{R}\rightarrow\mathbb{R}$, in binary classification tasks, we define the empirical loss $L_{n}(\bm{\theta})$ as the average loss of the network $f$ on a sample in the dataset and define  
the {training error} (also called the {misclassification error}) $R_{n}(\bm{\theta};f)$ as the misclassification rate of the  network $f$ on the dataset $\mathcal{D}$, i.e.,\vspace{-0.1cm}
\begin{equation}\label{eq::loss-error}
L_{n}(\bm{\theta})=\sum_{i=1}^{n}\ell(-y_{i}f(x_{i};\bm{\theta}))\quad\text{and}\quad {R}_{n}(\bm{\theta};f)=\frac{1}{n}\sum_{i=1}^{n}\mathbb{I}\{y_{i}\neq\sgn(f(x_{i};\bm{\theta}))\}.
\end{equation}
where $\mathbb{I}$ is the indicator function.  

\textbf{Tensors products.} We use $\bm{a}\otimes\bm{b}$ to denote the tensor product of vectors $\bm{a}$ and $\bm{b}$ and use $\bm{a}^{\otimes k}$ to denote the tensor product $\bm{a}\otimes ...\otimes\bm{a}$ where $\bm{a}$ appears $k$ times. For an $N$-th order tensor $\bm{T}\in\mathbb{R}^{d_{1}\times d_{2}\times...\times d_{N}}$ and $N$ vectors $\bm{u}_{1}\in\mathbb{R}^{d_{1}},\bm{u}_{2}\in\mathbb{R}^{d_{2}},...,\bm{u}_{N}\in\mathbb{R}^{d_{N}}$, we define 
$$\bm{T}\otimes\bm{u}_{1}...\otimes\bm{u}_{N}=\sum_{i_{1}\in[d_{1}],...,i_{N}\in[d_{N}]}\bm{T}(i_{1},...,i_{N})\bm{u}_{1}(i_{1})...\bm{u}_{N}(i_{N}),$$
where we use  $\bm{T}(i_{1},...,i_{N})$ to denote the $(i_{1},...,i_{N})$-th component of the tensor $\bm{T}$, $\bm{u}_{k}(i_{k})$ to denote the $i_{k}$-th component of the vector $\bm{u}_{k}$, $k=1,...,N$ and $[d_{k}]$ to denote the set $\{1,...,d_{k}\}$.

\section{Main Result}\label{sec::main-results}
In this section, we first present several important conditions on the loss function and the dataset in order to derive the main results. After that, we will present the main results. 

\subsection{Assumptions}\label{sec::assump}
In this subsection, we introduce two assumptions on the loss function and the dataset. 
\begin{assumption}[Loss function]\label{assump::loss}\vspace{-0.2cm}
 Assume that the loss function $\ell:\mathbb{R}\rightarrow\mathbb{R}$ is monotonically non-decreasing and twice differentiable, i.e., $\ell\in C^{2}$. Assume that every critical point of the loss function $\ell(z)$ is also a global minimum and every global minimum $z$ satisfies $z<0$.
 \vspace{-0.1cm}
\end{assumption}
A simple example of the loss function satisfying Assumption~\ref{assump::loss} is the polynomial hinge loss, i.e., $\ell(z)=[\max\{z+1,0\}]^{p}$, $p\ge 3$. It is always zero for $z \leq -1$ and behaves like a polynomial function in the region $z > -1$. Note that the condition that every global minimum of the loss function $\ell(z)$ is negative is not needed to prove the result that every local minimum of the empirical loss is globally minimal, but is necessary to prove that the global minimizer of  the empirical loss is also the minimizer of the misclassification rate.

\begin{assumption}[Realizability]\label{assump::realizability}\vspace{-0.2cm}
 Assume that there exists a set of parameters $\bm{\theta}$ such that the neural network $f(\cdot;\bm{\theta})$ is able to correctly classify all samples in the dataset $\mathcal{D}$. 
 \vspace{-0.1cm}
\end{assumption}
 By Assumption~\ref{assump::realizability}, we assume that the dataset is realizable by the neural architecture $f$. We note that this assumption is consistent with previous empirical observations~\cite{zhang2016understanding, krizhevsky2012imagenet, he2016deep} showing that at the end of the training process, neural networks usually achieve zero misclassification rates on the training sets.  However, as we will show later, if the loss function $\ell$ is convex, then we can prove the main result even without  Assumption~\ref{assump::realizability}.

\subsection{Main Result}\label{sec::single-exp}
In this subsection, we first introduce several  notations and next present the main result of the paper.   

Given a neural architecture $f(\cdot;\bm{\theta})$ defined on a $d$-dimensional Euclidean space and parameterized by a set of parameters $\bm{\theta}$, we define a new architecture $\tilde{f}$ by adding the output of an exponential neuron to the output of the network $f$, i.e., 
\begin{equation}\label{eq::single-exp}
\tilde{f}(x, \tilde{\bm{\theta}})=f(x;\bm{\theta})+a\exp\left(\bm{w}^{\top}x+b\right),
\end{equation}
where the vector $\tilde{\bm{\theta}}=(\bm{\theta},a,\bm{w}, b)$ denote the parametrization of the network $\tilde{f}$.
For this designed model, we define the empirical loss function as follows,
\begin{equation}\label{eq::loss-single}
\tilde{L}_{n}(\tilde{\bm{\theta}})=\sum_{i=1}^{n}\ell\left(-y_{i}\tilde{f}(x;\tilde{\bm{\theta}})\right)+\frac{\lambda a^{2}}{2},\end{equation}
where the scalar $\lambda$ is a positive real number, i.e., $\lambda>0$. Different from the empirical loss function ${L}_{n}$, the loss $\tilde{L}_{n}$ has an additional regularizer on the parameter $a$, since we aim to eliminate the impact of the exponential neuron on the output of the network $\tilde{f}$ at every local minimum of $\tilde{L}_{n}$. As we will show later, the exponential neuron is inactive at every local minimum of the empirical loss $\tilde{L}_{n}$. 
Now we  present the following theorem to show that every local minimum of the loss function $\tilde{L}_{n}$ is also a global minimum. 

\textbf{Remark:} Instead of viewing the exponential term in Equation~\eqref{eq::single-exp} as a neuron, one can also equivalently think of modifying the loss function to be 
$$\tilde{L}_{n}(\tilde{\bm{\theta}})=\sum_{i=1}^{n}\ell\left(-y_{i}(f(x_{i};\bm{\theta})+a\exp(\bm{w}^{\top}x_{i}+b))\right)+\frac{\lambda a^{2}}{2}.$$
Then, one can interpret Equation~\eqref{eq::single-exp} and~\eqref{eq::loss-single} as maintaining the original neural architecture and slightly modifying the loss function.

\vspace{-0.2cm}
\begin{theorem}\label{thm::single-exp}
Suppose that Assumption~\ref{assump::loss} and \ref{assump::realizability} hold. Assume that $\tilde{\bm{\theta}}^{*}=(\bm{\theta}^{*},a^{*},\bm{w}^{*}, b^{*})$ is a local minimum of the empirical loss function $\tilde{L}_{n}(\tilde{\bm{\theta}})$, then $\tilde{\bm{\theta}}^{*}$ is a global minimum of $\tilde{L}_{n}(\tilde{\bm{\theta}})$.  Furthermore,  $\bm{\theta}^{*}$ achieves the minimum loss value and the minimum misclassification rate on the dataset $\mathcal{D}$, i.e., $\bm{\theta}^{*}\in\arg\min_{\bm{\theta}}{L}_{n}(\bm{\theta})$ and $\bm{\theta}^{*}\in\arg\min_{\bm{\theta}}{R}_{n}(\bm{\theta};f)$.
\vspace{-0.2cm}
\end{theorem}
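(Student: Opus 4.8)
The plan is to combine the first-order conditions coming from the added neuron's parameters $(a,\bm{w},b)$ with a single analyticity argument. The target is to show that at a local minimum $\tilde{\bm{\theta}}^{*}=(\bm{\theta}^{*},a^{*},\bm{w}^{*},b^{*})$ one has $c_i:=\ell'(-y_if(x_i;\bm{\theta}^{*}))=0$ for every $i$. Granting this, Assumption~\ref{assump::loss} forces each $-y_if(x_i;\bm{\theta}^{*})$ to be a critical point of $\ell$, hence a global minimizer, hence $\ell(-y_if(x_i;\bm{\theta}^{*}))=\ell^{*}$ (the minimal value of $\ell$) and $-y_if(x_i;\bm{\theta}^{*})<0$. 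Since $L_n(\bm{\theta})\ge n\ell^{*}$ for every $\bm{\theta}$, and $n\ell^{*}$ is attained — take a $\bm{\theta}$ correctly classifying all data (Assumption~\ref{assump::realizability}) and rescale the output-layer weights so that every margin exceeds the threshold beyond which $\ell$ equals $\ell^{*}$ — we get $\bm{\theta}^{*}\in\arg\min L_n$; together with $a^{*}=0$ (shown below) this yields $\tilde{L}_n(\tilde{\bm{\theta}}^{*})=L_n(\bm{\theta}^{*})=n\ell^{*}=\min\tilde{L}_n$; and $-y_if(x_i;\bm{\theta}^{*})<0$ for all $i$ gives $R_n(\bm{\theta}^{*};f)=0$, which is optimal (again by realizability). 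Note that every perturbation used below acts only on $(a,\bm{w},b)$, in which $\tilde{L}_n$ is smooth regardless of the activation $\sigma$, so I will need no smoothness of $f$ in $\bm{\theta}$.

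\textbf{Step 1: $a^{*}=0$.} I would set $\partial_a\tilde{L}_n=0$ and $\partial_b\tilde{L}_n=0$ at $\tilde{\bm{\theta}}^{*}$ and write $S:=\sum_i \ell'(-y_i\tilde{f}(x_i;\tilde{\bm{\theta}}^{*}))\,y_i\,e^{(\bm{w}^{*})^{\top}x_i+b^{*}}$. Since $\partial\tilde{f}/\partial a=e^{\bm{w}^{\top}x+b}$ and $\partial\tilde{f}/\partial b=a\,e^{\bm{w}^{\top}x+b}$, the two equations read $\lambda a^{*}=S$ and $a^{*}S=0$; multiplying gives $\lambda(a^{*})^{2}=0$, so $a^{*}=0$ and hence $\tilde{f}(x_i;\tilde{\bm{\theta}}^{*})=f(x_i;\bm{\theta}^{*})$ for all $i$.

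\textbf{Step 2: an open family of stationarity conditions, then analyticity.} Because $a^{*}=0$, the value $\tilde{L}_n(\bm{\theta}^{*},0,\bm{w},b)$ equals $L_n(\bm{\theta}^{*})$ independently of $(\bm{w},b)$; therefore every point $(\bm{\theta}^{*},0,\bm{w},b)$ with $(\bm{w},b)$ in a small neighborhood of $(\bm{w}^{*},b^{*})$ is \emph{also} a local minimum of $\tilde{L}_n$. For each such $(\bm{w},b)$, the scalar map $t\mapsto\tilde{L}_n(\bm{\theta}^{*},t,\bm{w},b)$ has a local minimum at $t=0$, so its derivative there vanishes, i.e. $\sum_i c_i y_i e^{\bm{w}^{\top}x_i+b}=0$, with $c_i=\ell'(-y_if(x_i;\bm{\theta}^{*}))\ge 0$ (nonnegativity from monotonicity of $\ell$). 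Thus the real-analytic function $(\bm{w},b)\mapsto\sum_i c_i y_i e^{\bm{w}^{\top}x_i+b}$ vanishes on an open set, hence identically on $\mathbb{R}^{d+1}$.

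\textbf{Step 3: concluding $c_i=0$ and wrapping up.} Grouping the samples by their distinct feature vectors $z_1,\dots,z_m$ and using linear independence of the maps $\bm{w}\mapsto e^{\bm{w}^{\top}z_k}$, the identity from Step 2 forces $\sum_{i:\,x_i=z_k}c_iy_i=0$ for each $k$. By Assumption~\ref{assump::realizability} all samples with a common feature vector share a label $\bar{y}_k\in\{-1,1\}$, so $\bar{y}_k\sum_{i:\,x_i=z_k}c_i=0$ and, since $c_i\ge 0$, every $c_i=0$; then finish as in the first paragraph. The step I expect to be the crux is Step 2: the observation that $a^{*}=0$ turns a single stationarity equation into a whole open family of them, which analyticity then promotes to a global identity — this is exactly what makes linear independence of exponentials (together with realizability, used only to handle repeated features) enough to kill all $c_i$; the remaining pieces (rescaling to attain $n\ell^{*}$, and translating $c_i=0$ into optimality of the loss and of the misclassification rate via Assumption~\ref{assump::loss}) are routine.
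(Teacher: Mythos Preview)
Your argument is correct and, in fact, cleaner than the paper's. Step~1 coincides with the paper's Lemma~1(i). The real divergence is Step~2: the paper never uses the flat-direction observation. Instead, it proves (its Lemma~1(ii)) that for every integer $p\ge 0$ and every unit vector $\bm{u}$,
\[
\sum_{i=1}^n \ell'\bigl(-y_if(x_i;\bm{\theta}^{*})\bigr)\,y_i\,e^{(\bm{w}^{*})^{\top}x_i+b^{*}}\,(\bm{u}^{\top}x_i)^{p}=0,
\]
via a delicate perturbation $|\delta_a|=e^{-1/\varepsilon}$, $\bm{\delta_w}=\varepsilon\bm{u}$, a second-order Taylor bound on $\ell$, and an induction on $p$ that peels off one power of $\varepsilon$ at a time. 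It then invokes a tensor lemma (symmetric tensors vanish iff their restriction to the diagonal vanishes) to conclude that every monomial annihilates the weighted sum, and finally applies Lagrange interpolation to produce a separating polynomial. Your route replaces all of this machinery with two lines: the loss is constant along the $(\bm{w},b)$-slice through $a=0$, so the $\partial_a$-condition holds on an open set of $(\bm{w},b)$, and real analyticity plus linear independence of $\bm{w}\mapsto e^{\bm{w}^{\top}z_k}$ (for distinct $z_k$) finishes.

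What the two approaches buy: yours is shorter and avoids the perturbation analysis, the tensor lemma, and the interpolation step. The paper's moment-by-moment extraction, however, is what makes its extensions work. For the monomial-neuron variant (Proposition~\ref{thm::single-monomial}) the added neuron is $a(\bm{w}^{\top}x+b)^{p}$, which is polynomial rather than analytic in $\bm{w}$ with globally linearly independent ``characters'', so your identity-theorem shortcut does not transfer directly; the paper's induction does. Similarly, for the $k$-th order stationary-point result (Proposition~\ref{thm::stationary}) the paper needs to control exactly how many moments are forced to vanish by a $k$-th order condition, which its stepwise argument provides and your all-or-nothing analyticity argument does not. One small aside: your remark that $n\ell^{*}$ is attained by rescaling output weights is unnecessary (and may not be available for a completely generic $f$); once you have $c_i=0$ for all $i$, $L_n(\bm{\theta}^{*})=n\ell^{*}\le L_n(\bm{\theta})$ for all $\bm{\theta}$ is immediate.
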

\textbf{Remarks:} (i) Theorem~\ref{thm::single-exp} shows that every local minimum $\tilde{\bm{\theta}}^{*}$ of the empirical loss $\tilde{L}_{n}$ is also a global minimum and shows that  $\bm{\theta}^{*}$ achieves the minimum training error and the minimum loss value on the original loss function $L_{n}$ at the same time. 
(ii) Since we do not require the explicit form of the neural architecture $f$, Theorem~\ref{thm::single-exp} applies to the neural architectures widely used in practice such as convolutional neural network~\cite{krizhevsky2012imagenet}, deep residual networks~\cite{he2016deep}, etc. This further indicates that the result holds for any real neural activation functions such as rectified linear unit (ReLU), leaky rectified linear unit (Leaky ReLU), etc. (iii) As we will show in the following corollary, at every local minimum $\tilde{\bm{\theta}}^{*}$, the exponential neuron is inactive. Therefore, at every local minimum $\tilde{\bm{\theta}}^{*}=(\bm{\theta}^{*},a ^{*},\bm{w}^{*},b^{*})$, the neural network $\tilde{f}$ with an augmented exponential neuron  is equivalent to the original neural network $f$.  
\begin{corollary}\label{cor::single-exp}\vspace{-0.2cm}
Under the conditions of Theorem~\ref{thm::single-exp}, if $\tilde{\bm{\theta}}^{*}=(\bm{\theta}^{*},a^{*},\bm{w}^{*}, b^{*})$ is a local minimum of the empirical loss function $\tilde{L}_{n}(\tilde{\bm{\theta}})$, then two neural networks $f(\cdot;{\bm{\theta}}^{*})$ and $\tilde{f}(\cdot;\tilde{\bm{\theta}}^{*})$ are equivalent, i.e., $f(x;{\bm{\theta}}^{*})=\tilde{f}(x;\tilde{\bm{\theta}}^{*})$, $\forall x\in\mathbb{R}^{d}$.
\vspace{-0.2cm}
\end{corollary}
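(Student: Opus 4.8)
The statement is equivalent to the assertion that $a^{*}=0$ at every local minimum $\tilde{\bm{\theta}}^{*}$: indeed $\tilde f(x;\tilde{\bm{\theta}}^{*})-f(x;\bm{\theta}^{*})=a^{*}\exp(\bm{w}^{*\top}x+b^{*})$, and the exponential is strictly positive everywhere, so the two networks coincide on $\mathbb{R}^{d}$ if and only if $a^{*}=0$. Thus the entire task is to show that the augmented neuron is inactive at any local minimum.

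The plan is to invoke first-order stationarity, but only in the two \emph{auxiliary} coordinates $a$ and $b$. The key observation is that, however non-smooth the base architecture $f(x;\bm{\theta})$ may be in $\bm{\theta}$ (e.g.\ for ReLU networks), the loss $\tilde L_{n}$ is a genuinely $C^{2}$ function of $(a,b)$ once the remaining coordinates are frozen: each summand is $\ell\in C^{2}$ composed with $a\mapsto -y_{i}(f(x_{i};\bm{\theta}^{*})+a\exp(\bm{w}^{*\top}x_{i}+b))$, which is smooth in $(a,b)$, and the regularizer $\lambda a^{2}/2$ is a polynomial. Hence restricting $\tilde L_{n}$ to the line through $\tilde{\bm{\theta}}^{*}$ in the $a$-direction, and separately in the $b$-direction, yields one-dimensional functions attaining a local minimum at the relevant coordinate of $\tilde{\bm{\theta}}^{*}$, so $\partial \tilde L_{n}/\partial a$ and $\partial \tilde L_{n}/\partial b$ both vanish there.

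Writing $e_{i}:=\exp(\bm{w}^{*\top}x_{i}+b^{*})>0$ and $c_{i}:=-y_{i}\,\ell'(-y_{i}\tilde f(x_{i};\tilde{\bm{\theta}}^{*}))$, the condition $\partial \tilde L_{n}/\partial a=0$ reads $\sum_{i=1}^{n}c_{i}e_{i}+\lambda a^{*}=0$, while $\partial \tilde L_{n}/\partial b=0$ reads $a^{*}\sum_{i=1}^{n}c_{i}e_{i}=0$. Substituting the first identity into the second gives $-\lambda (a^{*})^{2}=0$, and since $\lambda>0$ we conclude $a^{*}=0$, hence $\tilde f(\cdot;\tilde{\bm{\theta}}^{*})\equiv f(\cdot;\bm{\theta}^{*})$. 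This same fact --- the exponential neuron is off at every local minimum --- is also the engine behind Theorem~\ref{thm::single-exp}.

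The only genuinely delicate step is the first-order stationarity claim in the $a$- and $b$-coordinates when $f$ fails to be differentiable in $\bm{\theta}$; the resolution is precisely the remark above, that the non-differentiability of $f$ is confined to the $\bm{\theta}$-block, which is never differentiated in this argument. Everything after that is the two-line elimination $\sum_{i}c_{i}e_{i}=-\lambda a^{*}$ together with $a^{*}\sum_{i}c_{i}e_{i}=0\Rightarrow\lambda(a^{*})^{2}=0$.
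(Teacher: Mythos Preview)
Your proposal is correct and follows essentially the same approach as the paper: the paper's proof of this corollary simply invokes Lemma~\ref{lemma::a=0}(i), whose proof is exactly the two-line first-order stationarity argument in $a$ and $b$ that you carry out, yielding $\lambda(a^{*})^{2}=0$ and hence $a^{*}=0$. Your additional remark about why differentiability in $(a,b)$ is unaffected by possible non-smoothness of $f$ in $\bm{\theta}$ is a useful clarification that the paper leaves implicit.
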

Corollary~\ref{cor::single-exp} shows that at every local minimum, the exponential neuron does not contribute to the output of the neural network $\tilde{f}$. However, this does not imply that the exponential neuron is unnecessary, since several previous results~\cite{safran2017spurious,liang2018understanding} have already shown that the loss surface of pure ReLU neural networks are guaranteed to have bad local minima. Furthermore, to prove the main result under any dataset, the regularizer is also necessary, since \cite{liang2018understanding} has already shown that even with an augmented exponential neuron, the empirical loss without the regularizer still have bad local minima under some datasets. 

\section{Extensions}\label{sec::extensions}

\subsection{Eliminating the Skip Connection}\label{sec::fnn}

\begin{wrapfigure}{R}{0.45\linewidth}
\centering
 \vspace{-0.6cm}
  \includegraphics[width=1\linewidth]{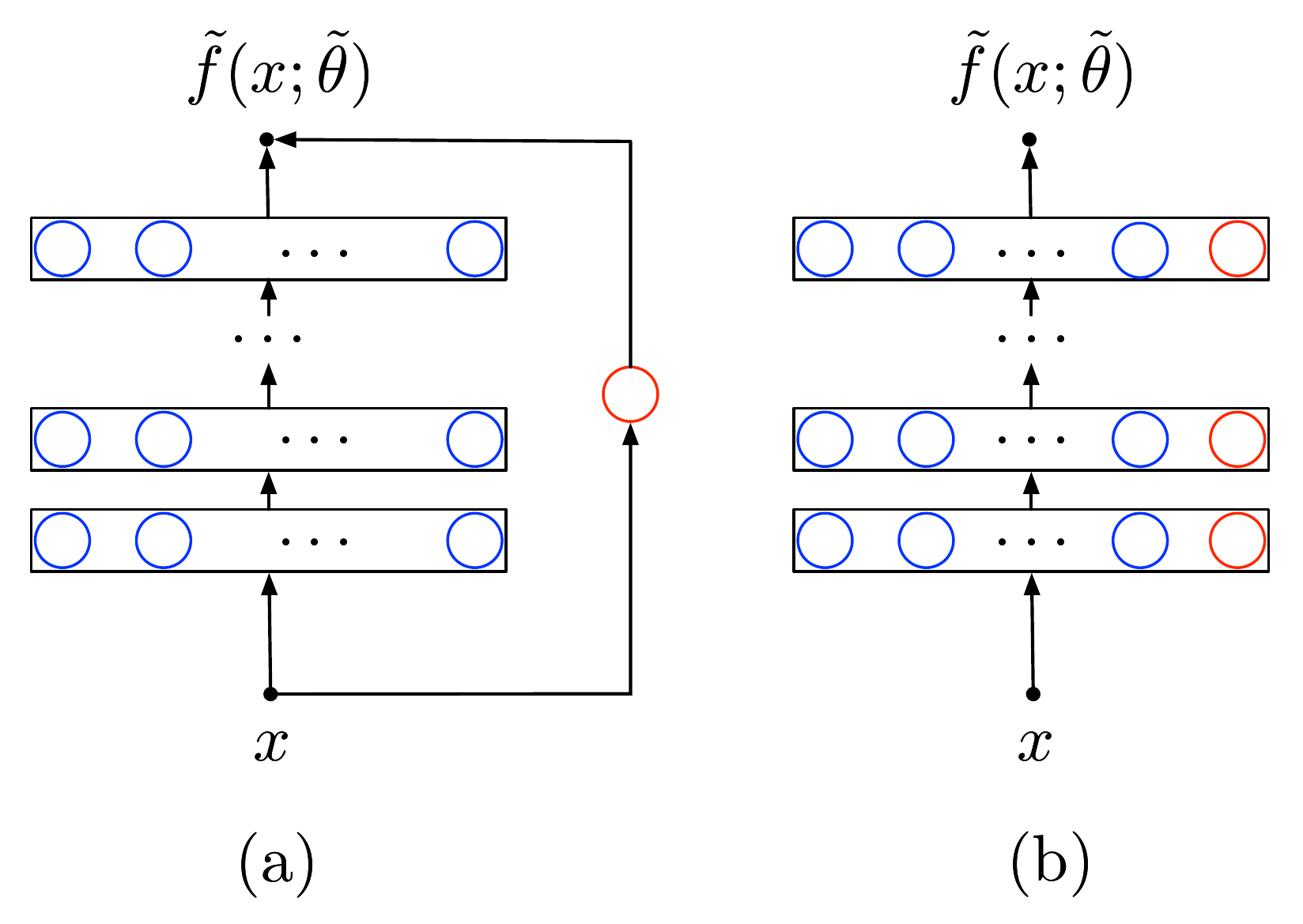}
  \vspace{-0.6cm}
  \caption{\small (a) The neural architecture considered in Theorem~\ref{thm::single-exp}. (b) The neural architecture considered in Theorem~\ref{thm::multi-exp}. The blue and red circles denote the neurons $\sigma$ in the original network and the augmented exponential neurons, respectively.}
  \vspace{-0.3cm}
  \label{fig::shortcut}
\end{wrapfigure}

As noted in the previous section, the exponential term in Equation~\eqref{eq::single-exp} can be viewed as a skip connection or a modification to the loss function. Our analysis also works under other architectures as well.
When the exponential term is viewed as a skip connection, the network architecture is as shown in Fig.~\ref{fig::shortcut}(a). 
This architecture is different from the canonical feedforward neural architectures as there is a direct path from the input layer to the output layer. 
In this subsection, we will show that the main result still holds if the model $\tilde{f}$ is defined as a
 feedforward neural network shown in Fig.~\ref{fig::shortcut}(b), where each layer of the network $f$ is augmented by an additional exponential neuron. 
 This is a standard fully connected neural network except for one special neuron at each layer.

\textbf{Notations.}  Given a fully-connected feedforward neural network $f(\cdot;\bm{\theta})$ defined by Equation~\eqref{eq::fnn}, we define a new fully connected feedforward neural network  $\tilde{f}$ by adding an additional exponential neuron to  each layer of the network $f$. We use the vector $\tilde{\bm{\theta}}=(\bm{\theta},\bm{\theta}_{\exp})$ to denote the parameterization of the network $\tilde{f}$, where $\bm{\theta}_{\exp}$ denotes the vector consisting of all augmented weights and biases. Let $\tilde{\bm{W}}_{l}\in\mathbb{R}^{(M_{l-1}+1)\times(M_{l}+1)}$ and $\tilde{\bm{b}}_{l}\in\mathbb{R}^{M_{l}+1}$ denote the weight matrix and the bias vector in the $l$-th layer of the network $\tilde{f}$, respectively. Let $\tilde{\bm{W}}_{L+1}\in\mathbb{R}^{(M_{L}+1)}$ and $\tilde{{b}}_{L+1}\in\mathbb{R}$ denote the weight vector and the bias scalar in the output layer of the network $\tilde{f}$, respectively. Without the loss of generality, we assume that the $(M_{l}+1)$-th neuron in the $l$-th layer is the augmented exponential neuron. 
Thus, the output of the network $\tilde{f}$ is expressed by 
\begin{equation}
\tilde{f}(x;\bm{\theta})=\tilde{\bm{W}}_{L+1}^{\top}\tilde{\bm{\sigma}}_{L+1}\left(\tilde{\bm{W}}_{L}\tilde{\bm{\sigma}}_{L}\left(...\tilde{\bm{\sigma}}_{1}\left(\tilde{\bm{W}}_{1}^{\top}x+\tilde{\bm{b}}_{1}\right)+\tilde{\bm{b}}_{L-1}\right)+\tilde{\bm{b}}_{L}\right)+\tilde{b}_{L+1},
\end{equation}
where $\tilde{\bm{\sigma}}_{l}:\mathbb{R}^{M_{l-1}+1}\rightarrow\mathbb{R}^{M_{l}+1}$ is a vector-valued activation function with the first $M_{l}$ components being the activation functions $\sigma$ in the network $f$ and with the last component being the exponential function, i.e., $\tilde{\bm{\sigma}}_{l}(z)=(\sigma(z),...,\sigma(z),\exp(z))$. Furthermore, we use the $\tilde{\bm{w}}_{l}$ to denote the vector  in the $(M_{l-1}+1)$-th row of the matrix $\tilde{\bm{W}}_{l}$. In other words, the components of the vector $\tilde{\bm{w}}_{l}$ are the weights on the edges connecting the exponential neuron in the $(l-1)$-th layer and the neurons in the $l$-th layer. 
For this feedforward network, we define an empirical loss function as \vspace{-0.1cm}
\begin{equation}
\tilde{L}_{n}(\tilde{\bm{\theta}})=\sum_{i=1}^{n}\ell(-y_{i}\tilde{f}(x_{i};\tilde{\bm{\theta}}))+\frac{\lambda}{2}\sum_{l=2}^{L+1}\left\|\tilde{\bm{w}}_{l}\right\|_{2L}^{2L}
\end{equation}
where  $\|\bm{a}\|_{p}$ denotes the $p$-norm of a vector $\bm{a}$ and $\lambda$ is a positive real number, i.e., $\lambda>0$. Similar to the empirical loss  discussed in the previous section, we add a regularizer to eliminate the impacts of all exponential neurons on the output of the network. Similarly, we can prove that at every local minimum of $\tilde{L}_{n}$, all exponential neurons are inactive. Now we present the following theorem to show that if the set of parameters $\tilde{\bm{\theta}}^{*}=(\bm{\theta}^{*},\bm{\theta}^{*}_{\exp})$ is a local minimum of the empirical loss function $\tilde{L}_{n}(\tilde{\bm{\theta}})$, then $\tilde{\bm{\theta}}^{*}$ is a global minimum and
$\bm{\theta}^{*}$ is a  global minimum of both minimization  problems $\min_{\bm{\theta}}L_{n}(\bm{\theta})$ and $\min_{\bm{\theta}}R_{n}(\bm{\theta};f)$. This means that the neural network $f(\cdot;\bm{\theta}^{*})$ simultaneously achieves the globally minimal loss value and misclassification rate on the dataset $\mathcal{D}$. 
\begin{theorem}\label{thm::multi-exp}\vspace{-0.2cm}
Suppose that Assumption~\ref{assump::loss} and \ref{assump::realizability} hold. Suppose that the activation function $\sigma$ is differentiable.  Assume that $\tilde{\bm{\theta}}^{*}=(\bm{\theta}^{*},\bm{\theta}_{\exp}^{*})$ is a local minimum of the empirical loss function $\tilde{L}_{n}(\tilde{\bm{\theta}})$, then $\tilde{\bm{\theta}}^{*}$ is a global minimum of $\tilde{L}_{n}(\tilde{\bm{\theta}})$.  Furthermore, $\bm{\theta}^{*}$ achieves the minimum loss value and the minimum misclassification rate on the dataset $\mathcal{D}$, i.e., $\bm{\theta}^{*}\in\arg\min_{\bm{\theta}}{L}_{n}(\bm{\theta})$ and $\bm{\theta}^{*}\in\arg\min_{\bm{\theta}}{R}_{n}(\bm{\theta};f)$.
\vspace{-0.2cm}
\end{theorem}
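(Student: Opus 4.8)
My plan is to follow the two‑phase strategy behind Theorem~\ref{thm::single-exp}: first a deactivation phase showing that every exponential neuron is switched off at a local minimum, and then a phase playing the role of the skip‑connection argument but with one exponential neuron's output ``routed'' to the network output along the existing forward weights. For the deactivation phase I would prove $\tilde{\bm{w}}_l^* = \bm{0}$ for all $l = 2,\dots,L+1$, so that $\tilde f(\cdot;\tilde{\bm{\theta}}^*) = f(\cdot;\bm{\theta}^*)$ and $\tilde L_n(\tilde{\bm{\theta}}^*) = L_n(\bm{\theta}^*)$. The mechanism is a rescaling invariance: since $\exp(z+c) = e^{c}\exp(z)$, the output $\tilde f$ depends on the bias of the $l$‑th exponential neuron and on the vector $\tilde{\bm{w}}_{l+1}$ it sends forward only through the product of $e^{(\text{bias})}$ with $\tilde{\bm{w}}_{l+1}$, whereas the regularizer sees $\|\tilde{\bm{w}}_{l+1}\|_{2L}^{2L}$ alone. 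Working from the output layer backwards, suppose the exponential neurons of layers $l+1,\dots,L$ are already known to be inactive; then the only entries of $\tilde{\bm{w}}_{l+1}$ influencing $\tilde f$ are those feeding the original neurons of layer $l+1$, and if one were nonzero we could raise the $l$‑th exponential bias by $\eta>0$ and multiply $\tilde{\bm{w}}_{l+1}$ by $e^{-\eta}$, leaving $\tilde f$ (hence the data term of $\tilde L_n$) unchanged while multiplying that regularizer term by $e^{-2L\eta}<1$, contradicting local minimality; the single remaining entry of $\tilde{\bm{w}}_{l+1}$ (which feeds the already‑dead exponential neuron of layer $l+1$, or is absent when $l=L$) can be shrunk to $0$, again strictly lowering the regularizer. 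Hence $\tilde{\bm{w}}_{l+1}^* = \bm{0}$, and the backward induction closes.

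For the second phase I would reduce to an argument like that of Theorem~\ref{thm::single-exp}. By Assumption~\ref{assump::loss}, $\{z:\ell(z)=\min\ell\}$ is a half‑line $(-\infty,z_{\max}]$ with $z_{\max}<0$ and $\ell'>0$ on $(z_{\max},\infty)$, and, using Assumption~\ref{assump::realizability} together with the fact that scaling the output weight vector and bias by $c$ scales $f$ by $c$, the global value of $\tilde L_n$ is $n\min\ell$; so it suffices to prove $\ell_i' := \ell'(-y_i f(x_i;\bm{\theta}^*))=0$ for every $i$, which then also forces $-y_i f(x_i;\bm{\theta}^*)<0$ (the minimum misclassification rate) and makes $\tilde{\bm{\theta}}^*$ global. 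Assume some $\ell_{i_0}'>0$. For a layer $l$, perturb the incoming weights $(\bm{v},c)$ of the $l$‑th exponential neuron (near their values at $\tilde{\bm{\theta}}^*$) and switch on, by a small amount $\delta$, a single edge from that neuron to one original neuron $k$ of layer $l+1$; to first order in $\delta$ this adds to $\tilde f(x)$ the term $\delta\,D_l^{(k)}(x)\exp\!\big(\bm{v}^\top\bm{\Phi}_{l-1}(x)+c\big)$, where $\bm{\Phi}_{l-1}(x)$ is the vector of layer‑$(l-1)$ activations of $f(\cdot;\bm{\theta}^*)$ and $D_l^{(k)}(x)=\partial f(x;\bm{\theta}^*)/\partial(\text{pre-activation of neuron }k\text{ in layer }l{+}1)$ is constant on the level sets of $\bm{\Phi}_{l-1}$ (and for $l=L$ the edge goes straight to the output, $D_L^{(k)}\equiv 1$). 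The regularizer now contributes only $O(\delta^{2L})=o(\delta)$, so local minimality forces the $O(\delta)$ coefficient to vanish, i.e.\ $\sum_i y_i\ell_i'\,D_l^{(k)}(x_i)\exp(\bm{v}^\top\bm{\Phi}_{l-1}(x_i)+c)=0$ for all $(\bm{v},c)$ near that value, hence for all $(\bm{v},c)$ by analyticity; grouping the data by the value of $\bm{\Phi}_{l-1}$ and using linear independence of distinct exponentials gives $D_l^{(k)}(G)\sum_{i\in G}y_i\ell_i'=0$ for every level set $G$ of $\bm{\Phi}_{l-1}$. Taking $l=1$, where $\bm{\Phi}_0=\mathrm{id}$ and (by realizability) a level set is a group of identical, equally‑labeled samples, this yields $\ell_i'=0$ for every $i$, contradicting $\ell_{i_0}'>0$ — provided some $k$ has $D_1^{(k)}\neq 0$ on that level set. (Taking $l=L$ first reproduces verbatim the Theorem~\ref{thm::single-exp} argument with $x$ replaced by $\bm{\Phi}_{L-1}$, and gives $\sum_{i\in G}y_i\ell_i'=0$ on every level set of $\bm{\Phi}_{L-1}$ with no non‑degeneracy needed.)

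The step I expect to be the main obstacle is exactly that last proviso: a degenerate forward Jacobian, where on some level set $G$ every $D_l^{(k)}(G)=0$ so the injected edge has no first‑order effect on the output at those points. The fix is to re‑route the relevant exponential neuron's signal through a path that avoids the degenerate layer — using the exponential neuron at the deepest layer whose activations still separate the offending samples and whose forward path to the output is non‑degenerate — and to show that the lower‑order terms such a longer route injects are harmless: they are constant on the level sets that the intermediate layers fail to separate, and on those sets the first‑order effect of a constant perturbation is annihilated by the identities $\sum_{i\in G}y_i\ell_i'=0$ already obtained at deeper layers. Controlling the orders of magnitude in this bookkeeping — so that the genuine improvement, which sits at order $\delta^{\text{path length}}$, beats both the injected junk and the $O(\delta^{2L})$ regularizer — is the delicate part, and is where the specific exponent $2L$ and the presence of an exponential neuron at every layer are used. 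The remaining verifications (the half‑line form of $\{\ell=\min\ell\}$ and the value $n\min\ell$ of the global minimum) reuse the corresponding arguments from the proof of Theorem~\ref{thm::single-exp}.
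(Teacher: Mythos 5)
Your deactivation phase is sound: the rescaling trick (raise the bias of the layer-$l$ exponential neuron by $\eta$, multiply $\tilde{\bm{w}}_{l+1}$ by $e^{-\eta}$, so the data term is unchanged while the regularizer shrinks by $e^{-2L\eta}$) reaches the same conclusion as the paper's Lemma~\ref{lemma::multilayer-zero}, which instead derives $\tilde{\bm{w}}_{l}^{*}=\bm{0}$, $l=2,\dots,L+1$, by combining the first-order conditions in the outgoing weights of each exponential neuron with those in the corresponding biases (using $\exp'=\exp$). Either route is fine for this part.

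The gap is in your second phase, at exactly the step you flag, and the fix you sketch does not close it. Injecting a single edge from the layer-$l$ exponential neuron into an \emph{original} neuron of layer $l+1$ weights the resulting first-order identities by the forward derivatives $D_l^{(k)}(x_i)$, which may vanish and, worse, change sign across samples, so even when they are nonzero the identities lack the positivity needed to conclude $\ell_i':=\ell'(-y_if(x_i;\bm{\theta}^{*}))=0$ from the interpolation step; and your repair presupposes a layer whose activations still separate the offending samples \emph{and} whose forward path through original neurons is non-degenerate. No such layer need exist: if, for instance, all weights of the original network vanish at $\tilde{\bm{\theta}}^{*}$, then every $\bm{\Phi}_l$ with $l\ge1$ collapses the whole dataset, every $D_l^{(k)}$ is zero, your $l=L$ injection yields only $\sum_i y_i\ell_i'=0$, and no other injection produces anything at order $\delta$ — so your method simply stalls at such candidate points, which the theorem must nonetheless cover. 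The paper's Lemma~\ref{lemma::empirical-loss} resolves this by a different routing: it perturbs the chain of weights joining the exponential neuron of each layer to the exponential neuron of the next (the $\alpha_1,\dots,\alpha_L$), together with the first exponential neuron's input $(\bm{w},b)$, with magnitudes of the form $e^{-1/\varepsilon}$, so the injected signal runs from the raw input to the output \emph{entirely through exponential neurons}. The leading $(\bm{u},v)$-dependent term then carries the weight $\eta(x_i)=\beta\exp\bigl(\sum_{l}g_l(x_i)+{\bm{w}^{*}}^{\top}x_i+b^{*}\bigr)>0$, strictly positive by construction — no Jacobian of $f$ and no non-degeneracy condition appears — while the $(\bm{u},v)$-independent terms ($\rho_k$, the Taylor remainder, and the $O(\cdot^{2L})$ regularizer) sit at distinct powers of $e^{-1/\varepsilon}$ and are removed by scale separation and the induction in powers of $\varepsilon$ (Claims~\ref{claim::5}, \ref{claim::6} and the subsequent claim), not by level-set identities. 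Because the first exponential neuron sees $x$ directly, the resulting moments are of $(\bm{u}^{\top}x_i+v)^{p}$ in the original input space, and Lemma~\ref{lemma::tensor} plus Lagrange interpolation then force $\ell_i'=0$ for every $i$. Identifying this path, its automatic positivity, and the $\varepsilon$-bookkeeping that lets it beat both the junk terms and the regularizer is the substance of the paper's proof and is precisely what your proposal leaves open.
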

\textbf{Remarks}: (i) This theorem is not a direct corollary of the result in the previous section, but the proof ideas are similar. 
 (ii) Due to the assumption on the  differentiability of the activation function $\sigma$, Theorem~\ref{thm::multi-exp} does not apply to the neural networks consisting of non-smooth neurons such as ReLUs, Leaky ReLUs, etc. 
  (iii) Similar to Corollary~\ref{cor::single-exp}, we will present the following corollary to show that at every local minimum $\tilde{\bm{\theta}}^{*}=(\bm{\theta}^{*},\bm{\theta}^{*}_{\exp})$, the neural network $\tilde{f}$ with augmented exponential neurons is equivalent to the original neural network ${f}$. 
\begin{corollary}\label{cor::multi-exp}\vspace{-0.2cm}
Under the conditions in Theorem~\ref{thm::multi-exp}, if $\tilde{\bm{\theta}}^{*}=(\bm{\theta}^{*},\bm{\theta}_{\exp}^{*})$ is a local minimum of the empirical loss function $\tilde{L}_{n}(\tilde{\bm{\theta}})$, then two neural networks $f(\cdot;{\bm{\theta}}^{*})$ and $\tilde{f}(\cdot;\tilde{\bm{\theta}}^{*})$ are equivalent, i.e., $f(x;{\bm{\theta}}^{*})=\tilde{f}(x;\tilde{\bm{\theta}}^{*}),\forall x\in\mathbb{R}^{d}$.
\vspace{-0.2cm}
\end{corollary}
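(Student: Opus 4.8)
The corollary asserts precisely that, at a local minimum $\tilde{\bm{\theta}}^{*}$, the augmented exponential units contribute nothing to the output of $\tilde{f}$; equivalently, that for every $l\in\{1,\dots,L\}$ the vector of outgoing weights of the $l$-th layer's exponential neuron — the $(M_{l}+1)$-st row $\tilde{\bm{w}}_{l+1}$ of $\tilde{\bm{W}}_{l+1}$ — vanishes at $\tilde{\bm{\theta}}^{*}$. Granting that, the rest is immediate: if $\tilde{\bm{w}}_{l+1}^{*}=0$ for all $l$, then for any input $x$ the output of the $(M_{l}+1)$-st (exponential) unit of layer $l$ is always multiplied by $0$ when layer $l+1$ is formed, so it influences no downstream neuron and in particular not the output; hence $\tilde{f}(\cdot;\tilde{\bm{\theta}}^{*})$ coincides with the plain feedforward network obtained by deleting these units, which is exactly $f(\cdot;\bm{\theta}^{*})$. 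So the entire content is the vanishing claim (which is also the first step in the proof of Theorem~\ref{thm::multi-exp}).

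To prove $\tilde{\bm{w}}_{l+1}^{*}=0$, fix $l$ and let $\beta_{l}$ be the bias of the $l$-th layer's exponential neuron (which is not regularized). Since $\sigma$ is differentiable, $\tilde{L}_{n}$ is differentiable at $\tilde{\bm{\theta}}^{*}$, so its partial derivatives with respect to $\beta_{l}$ and to each component $(\tilde{\bm{w}}_{l+1})_{k}$ vanish there. Write $q_{k}:=\sum_{i}\hat{v}_{i}\,\delta^{(l+1)}_{k}(x_{i})\,E_{l}(x_{i})$ for the partial derivative of the data term of $\tilde{L}_{n}$ with respect to $(\tilde{\bm{w}}_{l+1})_{k}$, where $\hat{v}_{i}=-y_{i}\ell'(-y_{i}\tilde{f}(x_{i};\tilde{\bm{\theta}}^{*}))$, $E_{l}$ is the exponential unit's output, and $\delta^{(l+1)}_{k}$ the backpropagated delta of neuron $k$ of layer $l+1$. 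Because $\beta_{l}$ appears in exactly one pre-activation and enters it additively (so $\partial E_{l}/\partial\beta_{l}=E_{l}$), the chain rule gives $\partial\tilde{L}_{n}/\partial\beta_{l}=\sum_{k}(\tilde{\bm{w}}^{*}_{l+1})_{k}\,q_{k}$ and $\partial\tilde{L}_{n}/\partial(\tilde{\bm{w}}_{l+1})_{k}=q_{k}+\lambda L\,(\tilde{\bm{w}}^{*}_{l+1})_{k}^{2L-1}$ (the last term coming from $\tfrac{\lambda}{2}\|\tilde{\bm{w}}_{l+1}\|_{2L}^{2L}$). Setting both to $0$ and substituting $q_{k}=-\lambda L\,(\tilde{\bm{w}}^{*}_{l+1})_{k}^{2L-1}$ into the first yields $\sum_{k}(\tilde{\bm{w}}^{*}_{l+1})_{k}^{2L}=0$, hence $\tilde{\bm{w}}^{*}_{l+1}=0$. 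This is exactly the one-neuron argument behind Corollary~\ref{cor::single-exp}, with $\beta_{l}$ and $\tilde{\bm{w}}_{l+1}$ playing the roles of the bias $b$ and the outgoing weight $a$; the only change is that $\tilde{\bm{w}}_{l+1}$ is now a vector feeding a whole layer, which the elimination handles componentwise.

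The single delicate point is the differentiability invoked above — exactly where the hypothesis that $\sigma$ is differentiable is used, and the reason such a statement can fail for non-smooth activations such as ReLU. Beyond that I expect no real obstacle: one carefully propagates the chain rule through the augmented feedforward network to justify the two stationarity identities, deduces $\tilde{\bm{w}}^{*}_{l+1}=0$ for every $l$, and reads off $f(x;\bm{\theta}^{*})=\tilde{f}(x;\tilde{\bm{\theta}}^{*})$. The genuinely hard content of this setting — that $\bm{\theta}^{*}$ is a \emph{global} minimizer of both $L_{n}$ and $R_{n}$ — lives in Theorem~\ref{thm::multi-exp}, whose harder half routes the layer-$1$ exponential neuron's signal along the chain of exponential units to the output (the $2L$-th power in the regularizer being tuned so that this chain's penalty is of lower order) and then uses analyticity together with linear independence of the functions $x\mapsto e^{\bm{u}^{\top}x}$ to force $\ell'(-y_{i}f(x_{i};\bm{\theta}^{*}))=0$ for every $i$.
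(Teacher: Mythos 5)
Your proof is correct and is essentially the paper's own argument: the corollary follows from Lemma~\ref{lemma::multilayer-zero}, which the paper establishes exactly as you do, by pairing the stationarity condition in the (unregularized) bias of each exponential neuron with the stationarity conditions in its outgoing weights (regularized by $\tfrac{\lambda}{2}\|\tilde{\bm{w}}_{l+1}\|_{2L}^{2L}$) and eliminating to get $\sum_k(\tilde{\bm{w}}^{*}_{l+1})_k^{2L}=0$, hence all outgoing weights of the exponential neurons vanish and $\tilde{f}(\cdot;\tilde{\bm{\theta}}^{*})=f(\cdot;\bm{\theta}^{*})$. (Your closing remarks on how the harder part of Theorem~\ref{thm::multi-exp} proceeds differ from the paper's Taylor-expansion/induction route, but they are not needed for this corollary.)
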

Corollary~\ref{cor::multi-exp} further shows that even if we add an exponential neuron to each layer of the original network $f$, at every local minimum of the empirical loss,  all exponential neurons are inactive.

\subsection{Neurons}\label{sec::neuron}
In this subsection, we will show that even if the exponential neuron is replaced by a monomial neuron, the main result still holds under additional assumptions. Similar to the case where exponential neurons are used, 
given a neural network $f(x;\bm{\theta})$, we define a new neural network $\tilde{f}$ by adding the output of a monomial neuron of degree $p$ to the output of the original model $f$, i.e.,  
\begin{equation}\tilde{f}(x;\tilde{\bm{\theta}})=f(x;\bm{\theta})+a\left(\bm{w}^{\top}x+b\right)^{p}.\end{equation}
In addition, the empirical loss function $\tilde{L}_{n}$ is exactly the same as the loss function defined by Equation~\eqref{eq::loss-single}. 
Next, we will present the following theorem to show that if all samples in the dataset $\mathcal{D}$ can be correctly classified by a polynomial of degree $t$ and the degree of the augmented monomial is not smaller than $t$ (i.e., $p\ge t$), then every local minimum of the empirical loss function $\tilde{L}_{n}(\tilde{\bm{\theta}})$ is also a global minimum. We note that the degree of a monomial is the sum of powers of all variables in this monomial and the degree of a polynomial is the maximum degree of its monomial. 

\begin{proposition}\label{thm::single-monomial}\vspace{-0.2cm}
Suppose that Assumptions~\ref{assump::loss} and~\ref{assump::realizability} hold. 
Assume that all samples in the dataset $\mathcal{D}$ can be correctly classified by a polynomial of degree $t$ and $p\ge t$.
 Assume that $\tilde{\bm{\theta}}^{*}=(\bm{\theta}^{*},a^{*},\bm{w}^{*}, b^{*})$ is a local minimum of the empirical loss function $\tilde{L}_{n}(\tilde{\bm{\theta}})$, then $\tilde{\bm{\theta}}^{*}$ is a global minimum of $\tilde{L}_{n}(\tilde{\bm{\theta}})$.  Furthermore, $\bm{\theta}^*$ is a global minimizer of both problems $\min_{\bm{\theta}}L_{n}(\bm{\theta})$ and $\min R_{n}(\bm{\theta};f)$.
\vspace{-0.2cm}
\end{proposition}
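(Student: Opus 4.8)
The plan is to mirror the proof of Theorem~\ref{thm::single-exp}, with the linear independence of exponentials replaced by a polarization argument for $p$-th powers of affine forms; the degree hypothesis $p\ge t$ will enter at exactly one point. First I would show the monomial neuron is inactive at any local minimum $\tilde{\bm{\theta}}^{*}=(\bm{\theta}^{*},a^{*},\bm{w}^{*},b^{*})$. The reparametrization $(a,\bm{w},b)\mapsto((1+\e)^{-p}a,(1+\e)\bm{w},(1+\e)b)$ leaves $a(\bm{w}^{\top}x+b)^{p}$ unchanged, hence leaves $\tilde{f}(x_{i};\tilde{\bm{\theta}})$ and every term $\ell(-y_{i}\tilde{f}(x_{i}))$ unchanged, while it strictly decreases the regularizer $\tfrac{\lambda a^{2}}{2}$ for every $\e>0$ whenever $a\ne0$; for small $\e$ this is an arbitrarily small perturbation, so local minimality forces $a^{*}=0$. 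In particular $\tilde{f}(\cdot;\tilde{\bm{\theta}}^{*})\equiv f(\cdot;\bm{\theta}^{*})$ and $\tilde{L}_{n}(\tilde{\bm{\theta}}^{*})=L_{n}(\bm{\theta}^{*})$.

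Next I would extract a whole family of first-order conditions. Because $a^{*}=0$, the monomial term is zero at $a=0$ for every $(\bm{w},b)$, so $\tilde{L}_{n}(\bm{\theta}^{*},0,\bm{w},b)=L_{n}(\bm{\theta}^{*})=\tilde{L}_{n}(\tilde{\bm{\theta}}^{*})$ independently of $(\bm{w},b)$. Hence for each $(\bm{w},b)$ near $(\bm{w}^{*},b^{*})$ the scalar map $a\mapsto\tilde{L}_{n}(\bm{\theta}^{*},a,\bm{w},b)$ is locally minimized at $a=0$ (and is differentiable in $a$ regardless of whether $\sigma$ is smooth, since $f(x_{i};\bm{\theta}^{*})$ enters only as a constant), so its $a$-derivative vanishes there; with $g_{i}:=\ell'(-y_{i}f(x_{i};\bm{\theta}^{*}))\ge0$ (nonnegative since $\ell$ is non-decreasing) this reads
\begin{equation*}
\sum_{i=1}^{n} g_{i}\,y_{i}\,(\bm{w}^{\top}x_{i}+b)^{p}=0 .
\end{equation*}
The left side is a polynomial in $(\bm{w},b)$ vanishing on a nonempty open set, hence it vanishes on all of $\mathbb{R}^{d+1}$. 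Since every polynomial in $x$ of degree at most $p$ is a finite linear combination of powers $(\bm{w}^{\top}x+b)^{p}$ (the polarization / Waring identity), it follows that $\sum_{i}g_{i}y_{i}R(x_{i})=0$ for every polynomial $R$ with $\deg R\le p$.

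Finally I would invoke the hypotheses. Let $P$ be a polynomial of degree $t$ with $y_{i}P(x_{i})>0$ for all $i$; applying the identity with $R=P$ is legitimate because $t\le p$, and since every summand $g_{i}y_{i}P(x_{i})$ is then nonnegative, $\sum_{i}g_{i}y_{i}P(x_{i})=0$ forces $g_{i}=0$ for all $i$. Then each $-y_{i}f(x_{i};\bm{\theta}^{*})$ is a critical point of $\ell$, hence by Assumption~\ref{assump::loss} a global minimizer of $\ell$ that is moreover negative; thus $\ell(-y_{i}f(x_{i};\bm{\theta}^{*}))=\min_{z}\ell(z)$ and $\sgn(f(x_{i};\bm{\theta}^{*}))=y_{i}$ for every $i$. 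Summing gives $L_{n}(\bm{\theta}^{*})=n\min_{z}\ell(z)$, which is a global lower bound for $L_{n}$, so $\bm{\theta}^{*}\in\arg\min_{\bm{\theta}}L_{n}(\bm{\theta})$; likewise $\tilde{L}_{n}(\tilde{\bm{\theta}}^{*})=L_{n}(\bm{\theta}^{*})=n\min_{z}\ell(z)\le\tilde{L}_{n}(\tilde{\bm{\theta}})$ for all $\tilde{\bm{\theta}}$, so $\tilde{\bm{\theta}}^{*}$ is a global minimum of $\tilde{L}_{n}$; and $R_{n}(\bm{\theta}^{*};f)=0$, which is minimal.

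The main obstacle I anticipate is the polarization fact that $p$-th powers of affine forms span all polynomials of degree at most $p$: it is classical but needs a short argument, and it is precisely what the degree constraint $p\ge t$ makes usable. Conceptually, the heart of the proof is the step from a single stationarity equation at $(\bm{w}^{*},b^{*})$ to a neighborhood's worth of them in the second step; this is the mechanism --- inherited from the exponential-neuron argument --- by which adding one neuron removes all spurious local minima, and I would expect the write-up to spend most of its care making that step and the polarization lemma precise.
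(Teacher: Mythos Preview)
Your proposal is correct and follows the same overall three-step strategy as the paper (show $a^{*}=0$; derive the family of ``moment'' identities $\sum_i g_i y_i(\bm{w}^\top x_i+b)^{p}=0$; plug in the degree-$t$ classifying polynomial), but you implement the first two steps by genuinely different and somewhat cleaner mechanisms. For $a^{*}=0$, the paper compares the first-order conditions $\nabla_a\tilde L_n=0$ and $\nabla_{(\bm{w},b)}\tilde L_n=0$ and takes an inner product with $(\bm{w}^{*},b^{*})$ to cancel the sum, whereas you use the homogeneity $(a,\bm{w},b)\mapsto((1+\e)^{-p}a,(1+\e)\bm{w},(1+\e)b)$ to strictly decrease the regularizer at no cost. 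For the moment identities, the paper reuses the machinery from the exponential case (perturb with $|\delta_a|=e^{-1/\e}$, Taylor-expand, and run an induction over the powers of $\e$) to obtain the identity at every unit $(\bm{u},v)$, then invokes the symmetric-tensor lemma (Lemma~\ref{lemma::tensor}); your argument is more direct: since $a^{*}=0$ makes $\tilde L_n$ constant along the slice $\{a=0\}$, every nearby $(\bm{\theta}^{*},0,\bm{w},b)$ is itself a local minimizer, so the $a$-derivative vanishes there, giving the identity on an open set of $(\bm{w},b)$ immediately. Your polarization/Waring step is equivalent to the paper's tensor lemma applied to the augmented vectors $(x_i,1)$. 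What your route buys is a shorter proof that avoids the $e^{-1/\e}$ scaling and the induction, and that uses only $\ell\in C^{1}$ in the key step; what the paper's route buys is uniformity with the exponential-neuron proof, so that Proposition~\ref{thm::single-monomial} really is a drop-in variant of Theorem~\ref{thm::single-exp}. In the write-up, do spell out the polarization fact (that $p$-th powers of affine forms span all polynomials of degree $\le p$), e.g.\ by differentiating $\sum_i g_i y_i(\bm{w}^\top x_i+b)^{p}$ in $(\bm{w},b)$, since this is exactly where $p\ge t$ is used.
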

\textbf{Remarks:} (i) We note  that, similar to Theorem~\ref{thm::single-exp}, Proposition~\ref{thm::single-monomial} applies to all neural architectures and all neural activation functions defined on $\mathbb{R}$, as we do not require the explicit form of the neural network $f$. (ii) It follows from the Lagrangian interpolating polynomial and Assumption~\ref{assump::realizability} that for a dataset consisted of $n$ different samples, there always exists a polynomial $P$ of degree smaller $n$ such that the polynomial $P$ can correctly classify all points in the dataset. This indicates that Proposition~\ref{thm::single-monomial} always holds if $p\ge n$. (iii) Similar to Corollary~\ref{cor::single-exp} and \ref{cor::multi-exp}, we can show that at every local minimum $\tilde{\bm{\theta}}^{*}=(\bm{\theta}^{*}, a^{*},\bm{w}^{*},b^{*})$, the neural network $\tilde{f}$ with an augmented monomial neuron is equivalent to the original neural network $f$. 

\subsection{Allowing Random Labels}
In previous subsections, we assume the realizability of the dataset by the neural network which implies that the label of a given feature vector  is unique. It does not cover the case where the dataset contains two samples with the same feature vector but with different labels  (for example, the same image can be labeled differently by two different people). Clearly, in this case, no model can correctly classify all samples in this dataset. Another simple example of this case is the mixture of two Gaussians where the data samples are drawn from each of the two Gaussian distributions with certain probability.

In this subsection, we will show that under this broader setting that one feature vector may correspond to two different labels, with a slightly stronger assumption on the convexity of the loss $\ell$, the same result still holds. 
The formal statement is present by the following proposition. 
\begin{proposition}\label{thm::convex}
	\vspace{-0.2cm}
	Suppose that Assumption~\ref{assump::loss} holds and the loss function $\ell$ is convex. Assume that $\tilde{\bm{\theta}}^{*}=(\bm{\theta}^{*},a^{*},\bm{w}^{*}, b^{*})$ is a local minimum of the empirical loss function $\tilde{L}_{n}(\tilde{\bm{\theta}})$, then $\tilde{\bm{\theta}}^{*}$ is a global minimum of $\tilde{L}_{n}(\tilde{\bm{\theta}})$.  Furthermore,  $\bm{\theta}^{*}$ achieves the minimum loss value and the minimum misclassification rate on the dataset $\mathcal{D}$, i.e., $\bm{\theta}^{*}\in\arg\min_{\bm{\theta}}{L}_{n}(\bm{\theta})$ and $\bm{\theta}^{*}\in\arg\min_{\bm{\theta}}{R}_{n}(\bm{\theta};f)$.
	\vspace{-0.2cm}
\end{proposition}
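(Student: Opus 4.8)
The plan is to reuse the two-phase strategy behind Theorem~\ref{thm::single-exp} — first show the augmented neuron is inactive at the local minimum, then use its presence to force $\bm\theta^*$ to be globally optimal — but to replace the use of Assumption~\ref{assump::realizability} by convexity of $\ell$, which lets us argue group-by-group over the distinct feature vectors. \textbf{Step 1 (the exponential neuron is inactive).} This step uses neither realizability nor convexity. For small $t$ consider the perturbation $(\bm\theta^*, a^* e^{-t}, \bm w^*, b^*+t)$: it leaves each term $a\exp(\bm w^\top x_i + b)$ unchanged, hence leaves $\sum_i\ell(-y_i\tilde f(x_i;\tilde{\bm\theta}))$ unchanged, but changes the regularizer to $\tfrac{\lambda}{2}(a^*)^2 e^{-2t}$. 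If $a^*\neq 0$ this is strictly decreasing at $t=0$, contradicting local minimality; hence $a^*=0$, so $\tilde f(\cdot;\tilde{\bm\theta}^*)=f(\cdot;\bm\theta^*)$ and $\tilde L_n(\tilde{\bm\theta}^*)=L_n(\bm\theta^*)$.

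\textbf{Step 2 (neighborhood stationarity and per-group conditions).} Because $a^*=0$, the value $\tilde L_n(\bm\theta^*,0,\bm w,b)$ does not depend on $(\bm w,b)$; hence for every $(\bm w,b)$ near $(\bm w^*,b^*)$ the scalar $a=0$ is a local minimum of $a\mapsto\tilde L_n(\bm\theta^*,a,\bm w,b)$, and differentiating in $a$ (legitimate since $\ell\in C^2$ and $\exp$ is smooth) yields
\[
\sum_{i=1}^n y_i\,\ell'\!\big(-y_i f(x_i;\bm\theta^*)\big)\exp(\bm w^\top x_i + b) = 0
\quad\text{for all such }(\bm w,b).
\]
Let $\bar x_1,\dots,\bar x_m$ be the distinct feature vectors, $S_k=\{i: x_i=\bar x_k\}$, and $z_k:=f(\bar x_k;\bm\theta^*)$. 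Grouping the sum and using that the real-analytic functions $(\bm w,b)\mapsto e^{\bm w^\top \bar x_k+b}$ are linearly independent on any open set (identity theorem, then a distinct-exponents argument along a generic ray), we get $\sum_{i\in S_k} y_i\ell'(-y_i z_k)=0$ for each $k$. Setting $n_k^{+}=|\{i\in S_k: y_i=1\}|$, $n_k^{-}=|S_k|-n_k^{+}$, and $\phi_k(z):=\sum_{i\in S_k}\ell(-y_i z)=n_k^{+}\ell(-z)+n_k^{-}\ell(z)$, this is exactly $\phi_k'(z_k)=0$.

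\textbf{Step 3 (global optimality).} Convexity of $\ell$ makes each $\phi_k$ convex, so $\phi_k'(z_k)=0$ gives $\phi_k(z_k)=\min_{z\in\mathbb R}\phi_k(z)$. Since a network output depends on a sample only through its feature vector, $L_n(\bm\theta)=\sum_k\phi_k(f(\bar x_k;\bm\theta))\ge\sum_k\phi_k(z_k)=L_n(\bm\theta^*)$ for all $\bm\theta$, and the same computation with $\tilde f$ together with $\tfrac{\lambda a^2}{2}\ge0$ gives $\tilde L_n(\tilde{\bm\theta})\ge L_n(\bm\theta^*)=\tilde L_n(\tilde{\bm\theta}^*)$; hence $\tilde{\bm\theta}^*$ is a global minimum of $\tilde L_n$ and $\bm\theta^*\in\arg\min_{\bm\theta}L_n(\bm\theta)$. \textbf{Step 4 (misclassification rate).} Rewrite $\phi_k'(z_k)=0$ as $n_k^{+}\ell'(-z_k)=n_k^{-}\ell'(z_k)$. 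By Assumption~\ref{assump::loss}, $\ell'$ vanishes only at global minima of $\ell$ and those are negative, so $\ell'(z)>0$ for all $z\ge0$, while convexity makes $\ell'$ nondecreasing. If $n_k^{+}>n_k^{-}$ and $z_k\le0$, then $\ell'(-z_k)>0$ forces $n_k^{-}>0$ and $\ell'(z_k)>0$ via the equation, and $-z_k\ge z_k$ forces $\ell'(-z_k)\ge\ell'(z_k)$, whence $n_k^{+}=n_k^{-}\ell'(z_k)/\ell'(-z_k)\le n_k^{-}$, a contradiction; so $z_k>0$, and symmetrically $n_k^{-}>n_k^{+}$ forces $z_k<0$. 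Thus $\sgn(z_k)$ is the majority label of group $k$ (the tie $n_k^{+}=n_k^{-}$ costs $\min(n_k^{+},n_k^{-})$ errors under any sign convention), so $f(\cdot;\bm\theta^*)$ makes exactly $\min(n_k^{+},n_k^{-})$ errors on group $k$, which lower-bounds the errors of any $f(\cdot;\bm\theta)$ on that group; summing over $k$ gives $\bm\theta^*\in\arg\min_{\bm\theta}R_n(\bm\theta;f)$.

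\textbf{Main obstacle.} The delicate point is Step 2: one must notice that $a^*=0$ upgrades the single stationarity equation at $(\bm w^*,b^*)$ into one valid on a whole neighborhood, and then convert that family of identities into one stationarity condition per distinct feature vector via linear independence of exponentials. This grouping is exactly what lets the argument survive in the non-realizable (random-label) regime, where several samples may share a feature vector; Step 4's sign analysis is the only other place where the precise form of Assumption~\ref{assump::loss} is needed.
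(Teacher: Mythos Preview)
Your proof is correct and reaches the same per-group stationarity conditions $\sum_{i\in S_k} y_i\,\ell'(-y_i z_k)=0$ as the paper, after which Steps~3 and~4 (convexity of $\phi_k$ for global optimality, majority-sign argument for the misclassification rate) coincide with the paper's treatment essentially line for line. The route to those conditions, however, is genuinely different and more elementary. The paper proves $a^*=0$ via the first-order conditions $\partial_a\tilde L_n=\partial_b\tilde L_n=0$, then establishes Lemma~\ref{lemma::a=0}(ii) by an induction using the coupled perturbation $|\delta_a|=e^{-1/\varepsilon}$, $\bm\delta_{\bm w}=\varepsilon\bm u$ together with a Taylor remainder bound, invokes Lemma~\ref{lemma::tensor} (a symmetric-tensor norm result) to pass from $\sum_i c_i(\bm u^\top x_i)^p=0$ to vanishing tensors, and finally applies Lagrange interpolating polynomials $P_k$ to isolate each feature group; it also splits explicitly into three separability cases. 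Your Step~1 replaces the first-order computation by a one-line scaling curve $(a^*e^{-t},\,b^*+t)$, and your Step~2 replaces the entire Lemma~\ref{lemma::a=0}(ii)/Lemma~\ref{lemma::tensor}/interpolation chain by the single observation that flatness in $(\bm w,b)$ at $a=0$ upgrades the stationarity-in-$a$ identity to a full neighborhood of $(\bm w^*,b^*)$, after which analytic continuation and linear independence of the exponentials $e^{\bm w^\top\bar x_k}$ yield the per-group equations directly, with no case analysis. The paper's machinery is heavier but is exactly what is reused from Theorem~\ref{thm::single-exp} and also underlies the monomial-neuron extension (Proposition~\ref{thm::single-monomial}); your argument is self-contained and shorter, at the cost of being specific to the exponential neuron.
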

\textbf{Remark:} The differences of Proposition~\ref{thm::convex} and Theorem~\ref{thm::single-exp} can be understood in the following ways.
First, as stated previously, Proposition~\ref{thm::convex} allows a feature vector to have two different labels, but Theorem ~\ref{thm::single-exp} does not.
Second, the minimum misclassification rate under the conditions in Theorem ~\ref{thm::single-exp} must be zero, while in Proposition~\ref{thm::convex}, the minimum misclassification rate can be nonzero. 



\subsection{High-order Stationary Points}\label{sec::stationary-points}
In this subsection, we characterize the high-order stationary points of the empirical loss $\tilde{L}_{n}$ shown in Section~\ref{sec::single-exp}. We first introduce the definition of the high-order stationary point and next show that every stationary point of the loss $\tilde{L}_{n}$ with a sufficiently high order is also a global minimum.  
\begin{definition}[$k$-th order stationary point]\label{def::saddle-point}\vspace{-0.2cm}
A critical point $\bm{\theta}_{0}$ of a function $L(\bm{\theta})$ is a $k$-th order stationary point, if there exists positive constant $C,\e>0$ such that for every $\bm{\theta}$ with $\|\bm{\theta}-\bm{\theta}_{0}\|_{2}\le \e$, $L(\bm{\theta})\ge L(\bm{\theta}_{0})-C\|\bm{\theta}-\bm{\theta}_{0}\|_{2}^{k+1}$.
\vspace{-0.2cm}
\end{definition}
Next, we will show that if a polynomial of degree $p$ can correctly classify all points in the dataset, then every stationary point of the order at least $2p$ is a global minimum and the set of parameters corresponding to this stationary point achieves the minimum training error.  
\begin{proposition}\label{thm::stationary}\vspace{-0.2cm}
Suppose that Assumptions~\ref{assump::loss} and~\ref{assump::realizability} hold. Assume that all samples in the dataset can be correctly classified by a polynomial of degree $p$.  
Assume that $\tilde{\bm{\theta}}^{*}=(\bm{\theta}^{*},a^{*},\bm{w}^{*}, b^{*})$ is a $k$-th order stationary point of the empirical loss function $\tilde{L}_{n}(\tilde{\bm{\theta}})$ and $k\ge2p$, then $\tilde{\bm{\theta}}^{*}$ is a global minimum of $\tilde{L}_{n}(\tilde{\bm{\theta}})$.  Furthermore, the neural network ${f}(\cdot;{\bm{\theta}}^{*})$ achieves the minimum misclassification rate  on the dataset $\mathcal{D}$, i.e., $\bm{\theta}^{*}\in\arg\min_{\bm{\theta}}{R}_{n}(\bm{\theta};f)$.
\vspace{-0.2cm}
\end{proposition}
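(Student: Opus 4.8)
The plan is to run the two steps behind Theorem~\ref{thm::single-exp} — first show the exponential neuron is switched off, then rule out that the remaining sample losses lie above $\min_z\ell(z)$ — but to replace the one‑shot descent argument by a Taylor expansion quantitative enough to use $k\ge 2p$. Write $f_i^*:=f(x_i;\bm\theta^*)$, $c_i:=\ell'(-y_if_i^*)\ge 0$ (monotonicity of $\ell$), $u_i:=\exp(\bm w^{*\top}x_i+b^*)>0$, and fix a polynomial $P$ of degree $p$ with $y_iP(x_i)>0$ for all $i$. By Definition~\ref{def::saddle-point} a $k$-th order stationary point is in particular a critical point, so $\partial_a\tilde L_n=\partial_b\tilde L_n=0$ at $\tilde{\bm\theta}^*$; these identities read $\lambda a^*=\sum_ic_iy_iu_i$ and $a^*\sum_ic_iy_iu_i=0$, whence $\lambda(a^*)^2=0$ and $a^*=0$. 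Therefore $\tilde f(x_i;\tilde{\bm\theta}^*)=f_i^*$, $\tilde L_n(\tilde{\bm\theta}^*)=L_n(\bm\theta^*)$, and $\tilde L_n(\tilde{\bm\theta})\ge\sum_i\min_z\ell(z)$ for every $\tilde{\bm\theta}$. It thus suffices to prove $c_i=0$ for all $i$: then each $-y_if_i^*$ is a critical point of $\ell$, hence (Assumption~\ref{assump::loss}) a global minimizer of $\ell$ that is strictly negative, so $L_n(\bm\theta^*)=n\min_z\ell(z)$ is the global value of $\tilde L_n$ (and of $L_n$), and $\sgn(f_i^*)=y_i$ for all $i$, which also yields optimality for $R_n$.

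Suppose for contradiction that $c_{i_0}>0$. Fix an affine function $q(x)=\bm\beta^{\top}x+\eta$ and a scalar $\alpha$, and consider the curve $\Gamma(t)=(\bm\theta^*,\ \alpha t^{p},\ \bm w^*+t\bm\beta,\ b^*+t\eta)$, which starts at $\tilde{\bm\theta}^*$ and satisfies $\|\Gamma(t)-\tilde{\bm\theta}^*\|=O(t)$. Along it the exponential neuron adds $\pi_i(t):=\alpha t^{p}u_ie^{tq(x_i)}$ to $f_i^*$, so
\[
\tilde L_n(\Gamma(t))-L_n(\bm\theta^*)=\sum_i\left[\ell(-y_if_i^*-y_i\pi_i(t))-\ell(-y_if_i^*)\right]+\tfrac{\lambda\alpha^2t^{2p}}{2}.
\]
The $r$-th Taylor term of each bracket is proportional to $\pi_i(t)^r$ and hence begins at order $t^{rp}$; so among the orders $t^{p},\dots,t^{2p-1}$ only the linear term $-\sum_ic_iy_i\pi_i(t)$ contributes, its coefficient of $t^{p+m}$ being $-\tfrac{\alpha}{m!}\sum_ic_iy_iu_iq(x_i)^m$, whereas the coefficient of $t^{2p}$ also collects $\tfrac12\sum_i\ell''(-y_if_i^*)\pi_i(t)^2$ and the regularizer and is therefore a polynomial in $\alpha$ with zero constant term whose linear part is $-\tfrac{1}{p!}\sum_ic_iy_iu_iq(x_i)^{p}$. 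For each fixed $\alpha$, $\Gamma(t)$ stays in the neighbourhood of Definition~\ref{def::saddle-point} for small $t$, hence $\tilde L_n(\Gamma(t))-L_n(\bm\theta^*)\ge-C(\alpha)|t|^{k+1}$. Were some coefficient among the orders $t^{p},\dots,t^{2p-1}$ nonzero, the first such one would dominate for small $t$, and choosing the sign of $t$ or of $\alpha$ according to the parity of that order would produce a descent of order at most $|t|^{2p-1}<|t|^{k+1}$, a contradiction; hence $\sum_ic_iy_iu_iq(x_i)^m=0$ for $0\le m\le p-1$. With those coefficients gone, $t^{2p}=t^{k}$ is the leading order, so its coefficient must be $\ge 0$ for every $\alpha\in\mathbb R$; a polynomial nonnegative on $\mathbb R$ with zero constant term has zero linear term, so $\sum_ic_iy_iu_iq(x_i)^{p}=0$ as well. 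Since $q$ was arbitrary, $\sum_ic_iy_iu_i\,q(x_i)^m=0$ for every affine $q$ and all $0\le m\le p$.

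The $p$-th powers of affine functions span the space of polynomials in $x$ of degree $\le p$ (classical over $\mathbb R$), so $\sum_ic_iy_iu_i\,Q(x_i)=0$ for every polynomial $Q$ of degree $\le p$. Taking $Q=P$ gives $\sum_ic_iu_i\,(y_iP(x_i))=0$, a sum of nonnegative terms ($c_i\ge0$, $u_i>0$, $y_iP(x_i)>0$), so each term vanishes, forcing $c_i=0$ for all $i$ — contradicting $c_{i_0}>0$. Hence $c_i=0$ for all $i$, and the reduction of the first paragraph finishes the proof.

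The hard part is the middle step: decoupling, at each order in $t$, the linear‑in‑$\pi_i$ part of $\tilde L_n$ from the higher Taylor terms and the quadratic regularizer, so that the polynomial moments $\sum_ic_iy_iu_iq(x_i)^m$ can be read off cleanly for $m$ all the way up to $p$. The device that makes $k\ge 2p$ exactly sufficient is the scaling $a=\alpha t^{p}$ together with the retained free scalar $\alpha$: it places the linear‑in‑$\pi_i$ part on orders $t^{p},\dots,t^{2p}$ and pushes everything else to orders $\ge t^{2p}$, so $t^{p},\dots,t^{2p-1}$ are uncontaminated, while the odd‑in‑$\alpha$ term at the top order $t^{2p}$ still recovers the last moment $m=p$. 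The remaining ingredients (the spanning fact, and that a $k$-th order stationary point is a critical point) are routine.
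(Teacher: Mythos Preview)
Your proposal is correct and follows the same three–step template as the paper's proof (Lemma~6 there): (a) first–order conditions force $a^*=0$; (b) a carefully scaled perturbation of $(a,\bm w,b)$ together with a second–order Taylor expansion of $\ell$ forces the weighted moments $\sum_i c_i y_i u_i\,q(x_i)^m$ to vanish for all $m\le p$; (c) these moments, via a spanning/tensor argument, kill $\sum_i c_i u_i\,y_iP(x_i)$, which by sign considerations gives $c_i=0$ for every $i$.

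Where you differ from the paper is the \emph{scaling device} in step~(b). The paper perturbs with $|\delta_a|=\varepsilon^{(k+1)/2}$ and $\bm\delta_{\bm w}=\varepsilon\bm u$; after dividing by $\varepsilon^{(k+1)/2}$ both the second–order remainder and the $k$-th–order stationary bound land at order $\varepsilon^{(k+1)/2}$, so a single induction extracts the moments for all $j\le\lfloor k/2\rfloor$ without any extra parameter. You instead take $a=\alpha t^{p}$ with a \emph{free scalar} $\alpha$, which places the linear–in–$\pi_i$ contribution on orders $t^{p},\dots,t^{2p}$ and everything else on orders $\ge t^{2p}$; the novelty is at the top order $t^{2p}$, where you read off the last moment $m=p$ from the observation that a polynomial in $\alpha$ that is nonnegative on $\mathbb R$ and vanishes at $0$ must have zero linear part. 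Both routes hit exactly the threshold $k\ge 2p$. Your spanning claim (``$p$-th powers of affine functions span degree-$\le p$ polynomials'') is precisely the content of the paper's Lemma~\ref{lemma::tensor}, applied in $(x,1)\in\mathbb R^{d+1}$.

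Two small points of hygiene, neither fatal: (i) you speak of ``the $r$-th Taylor term of each bracket'', but Assumption~\ref{assump::loss} only gives $\ell\in C^2$, so you should phrase this as a second–order Taylor expansion with Lagrange remainder; this is all your argument actually uses, since the remainder is $O(\pi_i(t)^2)=O(t^{2p})$. (ii) At order $t^{2p}$ what you really use is $\lim_{t\to0}\phi(t)/t^{2p}\ge 0$ rather than a formal Taylor coefficient; this limit exists because $\ell''$ is continuous, and equals the quadratic in $\alpha$ you wrote down.
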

One implication of Proposition~\ref{thm::stationary} is that  if a dataset is linearly separable, then every second order stationary point of the empirical loss function is a global minimum and, at this stationary point, the neural network achieves zero training error. 
When the dataset is not linearly separable, our result only covers fourth or higher order stationary point of the empirical loss. 


\section{Proof Idea}\label{sec::proof-idea}
In this section, we provide overviews of the proof of Theorem~\ref{thm::single-exp}. 

\subsection{Important Lemmas}
In this subsection, we present two important lemmas where the proof of Theorem~\ref{thm::single-exp} is based. 
\begin{lemma}	\label{lemma::a=0}
Under Assumption~\ref{assump::loss} and $\lambda>0$, if $\tilde{\bm{\theta}}^{*} = (\bm{\theta}^{*},a^{*},\bm{w}^{*},b^{*})$ is a local minimum of $\tilde{L}_{n}$,  then (i) $a^{*}=0$, (ii) for any  integer $p\ge0$, the following equation holds for all unit vector $\bm{u}:\|\bm{u}\|_{2}=1$,
\begin{equation}\label{lemma::eq-1}
\sum_{i=1}^n \ell'\left( -y_if(x_i;\bm{\theta}^{*})\right) y_i e^{{\bm{w}^{*}}^\top x_i +b^{*}} (\bm{u}^\top x_{i})^{p} =0.
\end{equation}
\end{lemma}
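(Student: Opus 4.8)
The plan is to exploit that the three new parameters $a,\bm w,b$ enter $\tilde f$ (hence $\tilde L_n$) smoothly, \emph{irrespective} of any (non)smoothness of $f(\cdot;\bm\theta)$: throughout the argument I hold $\bm\theta=\bm\theta^{*}$ fixed and only perturb $(a,\bm w,b)$, so $f(x_i;\bm\theta^{*})$ is a constant and along every slice I use, $\tilde L_n$ is $C^{1}$ (even $C^{2}$, as $\ell\in C^{2}$ and $\exp$ is smooth). First I would establish (i). Since $\tilde{\bm\theta}^{*}$ is a local minimum, $\partial_a\tilde L_n=\partial_b\tilde L_n=0$ there. Writing $q_i:=\ell'(-y_i\tilde f(x_i;\tilde{\bm\theta}^{*}))$ and $c_i:=e^{{\bm w^{*}}^{\top}x_i+b^{*}}>0$, the chain rule gives
\[
-\sum_{i=1}^{n}q_i\,y_i c_i+\lambda a^{*}=0
\qquad\text{and}\qquad
-a^{*}\sum_{i=1}^{n}q_i\,y_i c_i=0 .
\]
The first equation reads $\sum_i q_i y_i c_i=\lambda a^{*}$; substituting into the second yields $\lambda(a^{*})^{2}=0$, so $a^{*}=0$ since $\lambda>0$ — this is precisely where the regularizer $\frac{\lambda}{2}a^{2}$ is needed. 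With $a^{*}=0$ we have $\tilde f(\cdot;\tilde{\bm\theta}^{*})\equiv f(\cdot;\bm\theta^{*})$, hence $q_i=\ell'(-y_i f(x_i;\bm\theta^{*}))$.

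\textbf{Getting the identity on a whole neighborhood.} For (ii), fix a unit vector $\bm u$ and consider the two-parameter slice through the local minimum
\[
\Phi(t,s):=\tilde L_n\big(\bm\theta^{*},t,\bm w^{*}+s\bm u,b^{*}\big)
=\sum_{i=1}^{n}\ell\!\left(-y_i f(x_i;\bm\theta^{*})-y_i\,t\,c_i\,e^{s\bm u^{\top}x_i}\right)+\frac{\lambda t^{2}}{2}.
\]
Then $(0,0)$ (which is $\tilde{\bm\theta}^{*}$) is a local minimum of $\Phi$, and the crucial observation is that $\Phi(0,s)\equiv\sum_i\ell(-y_i f(x_i;\bm\theta^{*}))$ is independent of $s$. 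Consequently, any $(0,s_0)$ with $|s_0|$ small enough attains the minimal value of $\Phi$ on a neighborhood of $(0,0)$, so a small ball around $(0,s_0)$ still lies in the region where $\Phi\ge\Phi(0,0)=\Phi(0,s_0)$; hence $(0,s_0)$ is itself a local minimum and $\partial_t\Phi(0,s_0)=0$. Carrying out this $t$-derivative gives
\[
G(s_0):=\sum_{i=1}^{n}\ell'\!\big(-y_i f(x_i;\bm\theta^{*})\big)\,y_i\,c_i\,e^{s_0\bm u^{\top}x_i}=0
\qquad\text{for all }|s_0|<\varepsilon,
\]
for some $\varepsilon>0$.

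\textbf{Finishing, and the main obstacle.} Then I would finish as follows: $G$ is a finite linear combination of the functions $s\mapsto e^{s\bm u^{\top}x_i}$ and hence smooth, so vanishing on the open interval $(-\varepsilon,\varepsilon)$ forces $G^{(p)}(0)=0$ for every integer $p\ge0$; differentiating the finite sum $p$ times and setting $s=0$,
\[
G^{(p)}(0)=\sum_{i=1}^{n}\ell'\!\big(-y_i f(x_i;\bm\theta^{*})\big)\,y_i\,e^{{\bm w^{*}}^{\top}x_i+b^{*}}\,(\bm u^{\top}x_i)^{p}=0,
\]
which is exactly \eqref{lemma::eq-1}; as $\bm u$ was an arbitrary unit vector, (ii) follows. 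The computations in the first and last steps are routine, so the one delicate point — and the conceptual heart of the lemma — is the middle step: recognizing that because $\Phi$ is constant along $\{t=0\}$, every nearby point $(0,s_0)$ is a local minimum, which upgrades the single stationarity equation at $\bm w=\bm w^{*}$ to one valid on a full neighborhood of $\bm w^{*}$; without this, only the $p=0$ instance $G(0)=0$ would be available.
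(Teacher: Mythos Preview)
Your proof is correct, and Part~(i) is identical to the paper's argument. For Part~(ii), however, you take a genuinely different and cleaner route.

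The paper chooses perturbations $|\delta_a|=e^{-1/\varepsilon}$, $\bm{\delta_w}=\varepsilon\bm u$, expands $\ell$ to second order with Lagrange remainder, and obtains a one-sided inequality of the form
\[
\operatorname{sgn}(\delta_a)\sum_i \ell'(-y_if(x_i;\bm\theta^{*}))(-y_i)\,e^{{\bm w^{*}}^{\top}x_i+b^{*}}e^{\varepsilon\bm u^{\top}x_i}
\;+\;C\,e^{-1/\varepsilon}\;\ge\;0 .
\]
It then runs an induction on $p$: at each step subtract off the lower-order Taylor terms of $e^{\varepsilon\bm u^{\top}x_i}$, divide by $\varepsilon^{k}$, and use that $\varepsilon^{-k}e^{-1/\varepsilon}\to 0$ to kill the remainder. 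The asymmetric scaling $e^{-1/\varepsilon}$ versus $\varepsilon$ is precisely what lets the paper peel off one power of $\bm u^{\top}x_i$ at a time.

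Your argument sidesteps all of this by noticing that once $a^{*}=0$, the loss is \emph{constant} along the entire slice $\{a=0\}$ in the $\bm w$-direction, so every nearby point $(0,s_0)$ is itself a local minimum of $\Phi$; the first-order condition in $t$ at each such point gives $G(s)\equiv 0$ on an interval, and differentiating this analytic identity yields all orders $p$ at once. This replaces the paper's inequality-plus-induction machinery by a single structural observation, and it makes transparent \emph{why} infinitely many constraints fall out: the flat valley of minima along $\bm w$ encodes a one-parameter family of stationarity equations rather than a single one. The paper's approach, on the other hand, is more robust to weaker hypotheses --- the same perturbation template is what the authors recycle for $k$-th order stationary points (Proposition~\ref{thm::stationary}), where there is no exact local-minimum inequality and your valley argument would not directly apply.
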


\begin{lemma}\label{lemma::tensor}
For any  integer $k\ge0$ and any sequence $\{c_{i}\}_{i=1}^{n}$, if  $\sum_{i=1}^{n}c_{i}(\bm{u}^{\top}x_{i})^{k}=0$ 
holds for all unit vector $\bm{u}:\|\bm{u}\|_{2}=1$, then the $k$-th order tensor $\bm{T}_{k}=\sum_{i=1}^{n}c_{i}x_{i}^{\otimes k}$ is a $k$-th order zero tensor.
\end{lemma}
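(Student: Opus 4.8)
The plan is to prove Lemma~\ref{lemma::tensor} by a polarization/symmetrization argument: the hypothesis says that the homogeneous degree-$k$ polynomial $Q(\bm{u}) := \sum_{i=1}^n c_i (\bm{u}^\top x_i)^k = \bm{T}_k \otimes \bm{u}^{\otimes k}$ vanishes on the unit sphere, and I want to conclude that the symmetric tensor $\bm{T}_k = \sum_i c_i x_i^{\otimes k}$ is identically zero. First I would observe that since $Q$ is homogeneous of degree $k$, vanishing on the unit sphere $\{\|\bm{u}\|_2 = 1\}$ immediately implies vanishing on all of $\mathbb{R}^d$: for arbitrary $\bm{v} \neq 0$, write $\bm{v} = \|\bm{v}\|_2 \, \bm{u}$ with $\|\bm{u}\|_2 = 1$, so $Q(\bm{v}) = \|\bm{v}\|_2^k Q(\bm{u}) = 0$, and $Q(0)=0$ trivially. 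So the real content is: a symmetric tensor whose associated polynomial form $\bm{u} \mapsto \bm{T}_k \otimes \bm{u}^{\otimes k}$ is the zero polynomial must itself be the zero tensor.

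The standard way to extract the tensor entries from the polynomial is polarization. The key identity is that for a symmetric $k$-tensor $\bm{T}_k$ and vectors $\bm{u}_1,\dots,\bm{u}_k$,
\begin{equation}
\bm{T}_k \otimes \bm{u}_1 \otimes \cdots \otimes \bm{u}_k = \frac{1}{k!} \sum_{S \subseteq [k]} (-1)^{k - |S|} \, \bm{T}_k \otimes \Bigl(\sum_{j \in S} \bm{u}_j\Bigr)^{\otimes k},
\end{equation}
where the sum is over all subsets $S$ of $[k]$ (with the convention that the term for $S = \emptyset$ contributes $\bm{T}_k \otimes 0^{\otimes k} = 0$ when $k \geq 1$). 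Since every term on the right-hand side is of the form $Q(\bm{v}) = \bm{T}_k \otimes \bm{v}^{\otimes k}$ for some vector $\bm{v}$, and we have just shown $Q \equiv 0$ on all of $\mathbb{R}^d$, the entire right-hand side is zero. Hence $\bm{T}_k \otimes \bm{u}_1 \otimes \cdots \otimes \bm{u}_k = 0$ for all choices of $\bm{u}_1,\dots,\bm{u}_k$. Taking each $\bm{u}_j$ to range over the standard basis vectors $\bm{e}_{i_j}$ recovers every component $\bm{T}_k(i_1,\dots,i_k)$, so all of them vanish and $\bm{T}_k$ is the zero tensor. The cases $k = 0$ (where $\bm{T}_0 = \sum_i c_i$ and the hypothesis directly gives $\sum_i c_i = 0$) and $k = 1$ should be checked separately or noted as degenerate instances of the same argument.

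The one subtlety — and the only place requiring care — is that $\bm{T}_k = \sum_i c_i x_i^{\otimes k}$ is manifestly symmetric (each rank-one term $x_i^{\otimes k}$ is symmetric), which is exactly what makes polarization applicable; without symmetry the polynomial form would not determine the tensor. So I would state the polarization identity as a small self-contained sublemma for symmetric tensors, verify it by expanding the right-hand side and using the multinomial theorem together with symmetry of $\bm{T}_k$ (the cross terms organize into the contraction $\bm{T}_k \otimes \bm{u}_1 \otimes \cdots \otimes \bm{u}_k$ with multiplicity, and the inclusion–exclusion over $S$ kills all the unwanted lower-multiplicity terms), then apply it. The main obstacle, such as it is, is purely bookkeeping: getting the inclusion–exclusion coefficients and the $1/k!$ normalization right in the polarization identity; conceptually there is nothing deep here, and an alternative route that avoids writing the identity explicitly is to note that $Q \equiv 0$ means all partial derivatives $\partial^k Q / \partial u_{i_1} \cdots \partial u_{i_k}$ vanish, and a direct computation shows this derivative equals $k! \cdot \bm{T}_k(i_1,\dots,i_k)$ (again using symmetry), giving the conclusion immediately.
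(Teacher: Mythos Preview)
Your proof is correct, but it takes a genuinely different route from the paper's. The paper observes that $\bm{T}_k$ is symmetric and then invokes Theorem~1 of Zhang~\cite{zhang2012best}, which states that for a symmetric tensor the spectral norm over all tuples of unit vectors equals the maximum of $|\bm{T}_k(\bm{u},\dots,\bm{u})|$ over unit $\bm{u}$; since the latter is zero by hypothesis, so is the former, hence $\bm{T}_k=0$. You instead give a self-contained argument: extend $Q$ from the sphere to all of $\mathbb{R}^d$ by homogeneity, then recover the full multilinear form from the polynomial form via polarization (or, equivalently and even more cleanly, by taking $k$-th order partial derivatives of $Q$). What your approach buys is that it is elementary and requires no external citation --- the polarization/derivative argument is classical and the bookkeeping you flag is indeed the only work. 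What the paper's approach buys is brevity, at the cost of importing a result whose content is essentially the same fact you are proving directly.
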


\subsection{Proof Sketch of Lemma~\ref{lemma::a=0}}
\textbf{Proof sketch of Lemma~\ref{lemma::a=0}}($i$): To prove $a^{*}=0$, we only need to check the first order conditions of local minima. By assumption that $\tilde{\bm{\theta}}^{*}=(\bm{\theta}^{*}, a^{*},\bm{w}^{*},b^{*})$ is a local minimum of $\tilde{L}_{n}$, then the derivative of $\tilde{L}_{n}$ with respect to $a$ and $b$ at the point $\tilde{\bm{\theta}}^{*}$ are all zeros, i.e., 
	\begin{align}
	\left.\nabla_ a \tilde{L}_{n}(\tilde{\bm{\theta}})\right|_{\tilde{\bm{\theta}}=\tilde{\bm{\theta}}^{*}}  &=-\sum_{i=1}^n \ell'\left( -y_if(x_i;\bm{\theta}^{*})- y_ia^{*} e^{{\bm{w}^{*}}^\top x_i +b^{*}}\right) y_i  \exp({\bm{w}^{*}}^\top x_i +b^{*}) +\lambda a^{*}=0, \notag\\
	\left.\nabla_b \tilde{L}_{n}(\tilde{\bm{ \theta}})\right|_{\tilde{\bm{\theta}}=\tilde{\bm{\theta}}^{*}} &=- a^{*} \sum_{i=1}^n \ell'\left( -y_if(x_i;\bm{\theta}^{*})- y_ia^{*} e^{{\bm{w}^{*}}^\top x_i +b^{*}}\right) y_i  \exp({\bm{w}^{*}}^\top x_i+b^{*})=0. \notag
	\end{align}
From the above equations, it is not difficult to see that $a^{*}$ satisfies $\lambda {a^{*}}^{2}=0$ or, equivalently, $a^{*}=0$. 

We note that the main observation we are using here is that the derivative of the exponential neuron is itself. Therefore, it is not difficult to see that the same proof holds for all neuron activation function $\sigma$ satisfying $\sigma'(z)=c\sigma(z), \forall z\in\mathbb{R}$ for some constant $c$. In fact, with a small modification of the proof, we can show that the same proof works for all neuron activation functions satisfying $\sigma(z)=(c_{1}z+c_{0})\sigma'(z), \forall z\in\mathbb{R}$ for some constants $c_{0}$ and $c_{1}$.
This further indicates that the same proof  holds for the monomial neurons and thus the proof of Proposition~\ref{thm::single-monomial} follows directly from the proof of Theorem~\ref{thm::single-exp}. 

\textbf{Proof sketch of Lemma~\ref{lemma::a=0}}($ii$): The main idea of the proof is to use the high order information of the local minimum to derive Equation~\eqref{lemma::eq-1}. Due to the assumption that $\tilde{\bm{\theta}}=(\bm{\theta}^{*}, a^{*}, \bm{w}^{*}, b^{*})$ is a local minimum of the empirical loss function $\tilde{L}_{n}$, there exists a bounded local region such that the parameters $\tilde{\bm{\theta}}^{*}$ achieve the minimum loss value in this region, i.e.,  $\exists\delta\in(0,1)$ such that $\tilde{L}_{n}(\tilde{\bm{\theta}}^{*}+\bm{\Delta})\ge\tilde{L}_{n}(\tilde{\bm{\theta}}^{*})$ for $\forall\bm{\Delta}: \|\bm{\Delta}\|_{2}\le\delta$. 

Now, we use $\delta_{a}$, $\bm{\delta_{w}}$ to denote the perturbations on the parameters $a$ and $\bm{w}$, respectively. Next, we consider the loss value at the point $\tilde{\bm{\theta}}^{*}+\bm{\Delta}=(\bm{\theta}^{*}, a^{*}+\delta_{a}, \bm{w}^{*}+\bm{\delta_{w}}, b^{*})$, where we set $|\delta_{a}|=e^{-1/\e}$ and $\bm{\delta_{w}}=\e\bm{u}$ for an arbitrary unit vector $\bm{u}:\|\bm{u}\|_{2}=1$. Therefore, as $\e$ goes to zero, the perturbation magnitude $\|\bm{\Delta}\|_{2}$ also goes to zero and this indicates that there exists an $\e_{0}\in(0,1)$ such that  $\tilde{L}_{n}(\tilde{\bm{\theta}}^{*}+\bm{\Delta})\ge\tilde{L}_{n}(\tilde{\bm{\theta}}^{*})$ for $\forall \e\in[0, \e_{0})$. By the result $a^{*}=0$, shown in Lemma~\ref{lemma::a=0}($i$), the output of the model $\tilde{f}$ under parameters $\tilde{\bm{\theta}}^{*}+\bm{\Delta}$ can be expressed by 
$$\tilde{f}(x;\tilde{\bm{\theta}}^{*}+\bm{\Delta})=f(x;\bm{\theta}^{*})+\delta_{a}\exp(\bm{\delta_{w}}^{\top}x)\exp({\bm{w}^{*}}^{\top}x+b^{*}).$$
For simplicity of notation, let $g(x;\tilde{\bm{\theta}}^{*}, \bm{\delta_{w}})=\exp(\bm{\delta_{w}}^{\top}x)\exp({\bm{w}^{*}}^{\top}x+b^{*})$.  From the second order Taylor expansion with Lagrangian remainder and the assumption that $\ell$ is twice differentiable,  it follows that there exists a constant $C(\tilde{\bm{\theta}}^{*},\mathcal{D})$ depending only on the local minimizer $\tilde{\bm{\theta}}$ and the dataset $\mathcal{D}$ such that the following inequality holds for every sample in the dataset and every $\e\in[0,\e_{0})$, 
\begin{align*}
\ell(-y_{i}\tilde{f}(x_{i};\tilde{\bm{\theta}}^{*}+\bm{\Delta}))&\le\ell(-y_{i}f(x_{i};\bm{\theta}^{*}))+\ell'(-y_{i}f(x_{i};\bm{\theta}^{*}))(-y_{i})\delta_{a}g(x_{i};\tilde{\bm{\theta}}^{*}, \bm{\delta_{w}})+C(\tilde{\bm{\theta}}^{*},\mathcal{D})\delta_{a}^{2}.
\end{align*}
Summing the above inequality over all samples in the dataset and recalling that $\tilde{L}_{n}(\tilde{\bm{\theta}}^{*}+\bm{\Delta})\ge \tilde{L}_{n}(\tilde{\bm{\theta}}^{*})$ holds for all $\e\in[0,\e_{0})$, we obtain 
\begin{equation*}
       -\sgn(\delta_{a})\sum_{i=1}^{n}\ell'(-y_{i}f(x_{i};\bm{\theta}^{*}))y_{i}\exp(\varepsilon\bm{u}^{\top}x_{i})\exp({\bm{w}^{*}}^{\top}x_{i}+b^{*})+[nC(\tilde{\bm{\theta}}^{*},\mathcal{D})+\lambda/2]\exp(-1/\varepsilon)\ge 0.
\end{equation*}
Finally, we complete the proof by induction. Specifically, for the base hypothesis where $p=0$, we can take the limit on the both sides of the above inequality as $\e\rightarrow0$, using the property that $\delta_{a}$ can be either positive or negative  and thus establish the base case where $p=0$. For the higher order case, we  can first assume that Equation~\eqref{lemma::eq-1} holds for $p=0,...,k$ and then subtract these equations from the above inequality. After taking the limit on the both sides of the inequality as $\e\rightarrow0$, we can prove that Equation~\eqref{lemma::eq-1} holds for $p=k+1$. Therefore, by induction, we can prove that  Equation~\eqref{lemma::eq-1} holds for any non-negative integer $p$.

\subsection{Proof Sketch of Lemma~\ref{lemma::tensor}}
The proof of Lemma~\ref{lemma::tensor} follows directly from the results in reference~\cite{zhang2012best}. 
It is easy to check that, for every sequence $\{c_{i}\}_{i=1}^{n}$ and every non-negative integer $k\ge0$, the $k$-th order tensor $\bm{T}_{k}=\sum_{i=1}^{n}c_{i}x_{i}^{\otimes k}$ is a symmetric tensor. From Theorem~1 in \cite{zhang2012best}, it directly follows that  
$$\max_{\bm{u_{1},...,u_{k}}:\|\bm{u_{1}}\|_{2}=...=\|\bm{u_{k}}\|_{2}=1}|\bm{T}_{k}(\bm{u}_{1},...,\bm{u}_{k})|=\max_{\bm{u}:\|\bm{u}\|_{2}=1}|\bm{T}_{k}(\bm{u},...,\bm{u})|.$$
Furthermore, by assumption that $\bm{T}_{k}(\bm{u},...,\bm{u})=\sum_{i=1}^{n}c_{i}(\bm{u}^{\top}x_{i})^{k}=0$
holds for all $\|\bm{u}\|_{2}=1$, then 
$$\max_{\bm{u_{1},...,u_{k}}:\|\bm{u_{1}}\|_{2}=...=\|\bm{u_{k}}\|_{2}=1}|\bm{T}_{k}(\bm{u}_{1},...,\bm{u}_{k})|=0,$$
and this is equivalent to $\bm{T}_{k}=\bm{0}_{d}^{\otimes k},$ where $\bm{0}_{d}$ is the zero vector in the $d$-dimensional space. 

\subsection{Proof Sketch of Theorem~\ref{thm::single-exp}}
For every dataset $\mathcal{D}$ satisfying Assumption~\ref{assump::realizability}, by the Lagrangian interpolating polynomial, there always exists a polynomial  $P(x)=\sum_{j}c_{j}\pi_{j}(x)$ defined on $\mathbb{R}^{d}$ such that it can correctly classify all samples in the dataset with margin at least one, i.e., $y_{i}P(x_{i})\ge 1,\forall i\in[n]$, where $\pi_{j}$ denotes the $j$-th monomial in the polynomial $P(x)$. 
Therefore, from Lemma~\ref{lemma::a=0} and \ref{lemma::tensor}, it follows that 
\begin{align*}
\sum_{i=1}^{n}\ell'(-y_{i}f(x_{i};\bm{\theta}^{*}))e^{{\bm{w}^{*}}^{\top}x_{i}+b^{*}}y_{i}P(x_{i})=\sum_{j}c_{j}\sum_{i=1}^{n}\ell'(-y_{i}f(x_{i};\bm{\theta}^{*}))y_{i}e^{{\bm{w}^{*}}^{\top}x_{i}+b^{*}}\pi_{j}(x_{i})=0.
\end{align*}
Since $y_{i}P(x_{i})\ge 1$ and $e^{{\bm{w}^{*}}^{\top}x_{i}+b^{*}}>0$ hold for  $\forall i\in[n]$ and the loss function $\ell$ is a non-decreasing function, i.e., $\ell'(z)\ge 0,\forall z\in\mathbb{R}$, then $\ell'(-y_{i}f(x_{i};\bm{\theta}^{*}))=0$ holds for all $i\in[n]$. In addition, from the assumption that every critical point of the loss function $\ell$ is a global minimum, it follows that $z_{i}=-y_{i}f(x_{i};\bm{\theta}^{*})$ achieves the global minimum of the loss function $\ell$ and this further indicates that $\bm{\theta}^{*}$ is a global minimum of the empirical loss $L_{n}(\bm{\theta})$. Furthermore, since at every local minimum, the exponential neuron is inactive, $a^{*}=0$, then the set of parameters $\tilde{\bm{\theta}}^{*}$ is a global minimum of the loss function $\tilde{L}_{n}(\tilde{\bm{\theta}})$. Finally, since every critical point of the loss function $\ell(z)$ satisfies $z<0$, then for every sample, $\ell'(-y_{i}f(x_{i};\bm{\theta}^{*}))=0$ indicates that $y_{i}f(x_{i};\bm{\theta}^{*})>0$, or, equivalently, $y_{i}=\sgn(f(x_{i};\bm{\theta}^{*}))$. Therefore, the set of parameters $\bm{\theta}^{*}$ also minimizes the training error. In summary, the set of parameters $\tilde{\bm{\theta}}^{*}=(\bm{\theta}^{*},a^{*}, \bm{w}^{*}, b^{*})$ minimizes the  loss function $\tilde{L}_{n}(\tilde{\bm{\theta}})$ and the set of parameters $\bm{\theta}^{*}$ simultaneously minimizes the empirical loss function $L_{n}(\bm{\theta})$ and the training error $R_{n}(\bm{\theta};f)$. 
 
\section{Conclusions}\label{sec::conclusions}
One of the difficulties in analyzing neural networks is the non-convexity of the loss functions which allows the existence of many spurious minima with large loss values. In this paper, we prove that for any neural network, by adding a special neuron and  an associated regularizer, the new loss function has no spurious local minimum. In addition, we prove that, at every local minimum of this new loss function, the exponential neuron is inactive and this means that the augmented neuron and regularizer improve the landscape of the loss surface without affecting the representing power of the original neural network. 
We also extend the main result in a few ways. First, while adding a special neuron makes the network different from a classical neural network architecture, the same result also holds for a standard fully connected network  with one special neuron added to each layer.
Second, the same result holds if we change the exponential neuron to a polynomial neuron with a degree dependent on the data. Third, the same result holds even if one feature vector corresponds to both labels.

%


\bibliography{nips_2018}
\bibliographystyle{unsrt}

\begin{appendix}

\newpage
\section{Proof of Lemma~\ref{lemma::a=0}}

\subsection{Proof of Lemma~\ref{lemma::a=0}($i$)}
\begin{proof}
To prove $a^{*}=0$, we only need to check the first order conditions of local minima. By assumption that $\tilde{\bm{\theta}}^{*}=(\bm{\theta}^{*}, a^{*},\bm{w}^{*},b^{*})$ is a local minimum of $\tilde{L}_{n}$, then the derivative of $\tilde{L}_{n}$ with respect to $a$ and $b$ at the point $\tilde{\bm{\theta}}^{*}$ are all zeros, i.e., 
	\begin{align}
	\left.\nabla_ a \tilde{L}_{n}(\tilde{\bm{\theta}})\right|_{\tilde{\bm{\theta}}=\tilde{\bm{\theta}}^{*}}  &=-\sum_{i=1}^n \ell'\left( -y_if(x_i;\bm{\theta}^{*})- y_ia e^{{\bm{w}^{*}}^\top x_i +b^{*}}\right) y_i  \exp({\bm{w}^{*}}^\top x_i +b^{*}) +\lambda a^{*}=0, \notag\\
	\left.\nabla_b \tilde{L}_{n}(\tilde{\bm{ \theta}})\right|_{\tilde{\bm{\theta}}=\tilde{\bm{\theta}}^{*}} &=- a^{*} \sum_{i=1}^n \ell'\left( -y_if(x_i;\bm{\theta}^{*})- y_ia e^{{\bm{w}^{*}}^\top x_i +b^{*}}\right) y_i  \exp({\bm{w}^{*}}^\top x_i+b^{*})=0. \notag
	\end{align}
From above two equations, it is not difficult to see that $a^{*}$ satisfies $\lambda {a^{*}}^{2}=0$ or, equivalently, $a^{*}=0$.  

\end{proof}

\subsection{Proof of Lemma~\ref{lemma::a=0}($ii$)}
\begin{proof}
The main idea of the proof is to use the high order information of the local minimum to prove the Lemma. Due to the assumption that $\tilde{\bm{\theta}}=(\bm{\theta}^{*}, a^{*}, \bm{w}^{*}, b^{*})$ is a local minimum of the empirical loss function $\tilde{L}_{n}$, there exists a bounded local region such that the parameters $\tilde{\bm{\theta}}^{*}$ achieve the minimum loss value in this region, i.e.,  $\exists\delta\in(0,1)$ such that $\tilde{L}_{n}(\tilde{\bm{\theta}}^{*}+\bm{\Delta})\ge\tilde{L}_{n}(\tilde{\bm{\theta}}^{*})$ for $\forall\bm{\Delta}: \|\bm{\Delta}\|_{2}\le\delta$. 

Now, we use $\delta_{a}$, $\bm{\delta_{w}}$ to denote the perturbations on the parameters $a$ and $\bm{w}$, respectively. Next, we consider the loss value at the point $\tilde{\bm{\theta}}^{*}+\bm{\Delta}=(\bm{\theta}^{*}, a^{*}+\delta_{a}, \bm{w}^{*}+\bm{\delta_{w}}, b^{*})$, where we set $|\delta_{a}|=e^{-1/\e}$ and $\bm{\delta_{w}}=\e\bm{u}$ for an arbitrary unit vector $\bm{u}:\|\bm{u}\|_{2}=1$. Therefore, as $\e$ goes to zero, the perturbation magnitude $\|\bm{\Delta}\|_{2}$ also goes to zero and this indicates that there exists an $\e_{0}\in(0,1)$ such that  $\tilde{L}_{n}(\tilde{\bm{\theta}}^{*}+\bm{\Delta})\ge\tilde{L}_{n}(\tilde{\bm{\theta}}^{*})$ for $\forall \e\in[0, \e_{0})$. By $a^{*}=0$, the output of the model $\tilde{f}$ under parameters $\tilde{\bm{\theta}}^{*}+\bm{\Delta}$ can be expressed by 
$$\tilde{f}(x;\tilde{\bm{\theta}}^{*}+\bm{\Delta})=f(x;\bm{\theta}^{*})+\delta_{a}\exp(\bm{\delta_{w}}^{\top}x)\exp({\bm{w}^{*}}^{\top}x+b^{*}).$$
        Let $g(x_{i};\bm{w}^{*}, \bm{\delta_{w}}^{}, b^{*})=\exp({\bm{\delta_{w}}^{}}^{\top}x_{i})\exp({\bm{w}^{*}}^{\top}x_{i}+b^{*})$.
        For each sample $(x_{i}, y_{i})$ in the dataset, by the second order Taylor expansion with Lagrangian remainder, there exists a scalar $\xi_{i}\in[-|\delta_{a}|, |\delta_{a}|]$ depending on $\delta_{a}$ and $g(x_{i};\bm{w}^{*}, \bm{\delta_{w}}, b^{*})$ such that the following equation holds,
        \begin{align*}
        \ell(-y_{i}f(x_{i};\bm{\theta}^{*})-y_{i}&\delta_{a}g(x_{i};\bm{w}^{*}, \bm{\delta_{w}}, b^{*}))\\
        &=\ell(-y_{i}f(x_{i};\bm{\theta}^{*}))+\ell'(-y_{i}f(x_{i};\bm{\theta}^{*}))(-y_{i})\delta_{a}g(x_{i};\bm{w}^{*}, \bm{\delta_{w}}, b^{*})\\
        &\quad+\frac{1}{2!}\ell''(-y_{i}f(x_{i};\bm{\theta}^{*})-y_{i}\xi_{i}g(x_{i};\bm{w}^{*},\bm{\delta_{w}}, b^{*}))\delta^{2}_{a}g^{2}(x_{i};\bm{w}^{*}, \bm{\delta_{w}}, b^{*}).
        \end{align*}
        Let vector $\bm{u}\in\mathbb{R}$ denote an arbitrary unit vector. Let $|\delta_{a}|=\exp(-1/\varepsilon)$ and $\bm{\delta_{w}}=\varepsilon \bm{u}$. Clearly, for all $\varepsilon<1$,  $|\delta_{a}|<e^{-1}$ and $\|\bm{\delta_{{w}}}\|_{2}<1$, we have 
        $$g(x_{i};\bm{w}^{*}, \bm{\delta_{w}}, b^{*})=\exp(\bm{\delta_{w}}^{\top}x_{i})\exp({\bm{w}^{*}}^{\top}x_{i}+b^{*})\le \exp(\|x_{i}\|_{2})\exp({\bm{w}^{*}}^{\top}x_{i}+b^{*}).$$
        Since $|\xi_{i}|<|\delta_{a}|<e^{-1}$, then for each $i\in[n]$, there exists a constant $C_{i}$ depend on $\bm{\theta}^{*}, \bm{w}^{*}, b^{*}$ such that 
        $$|\ell''(-y_{i}f(x_{i};\bm{\theta}^{*})-z)|< C_{i}$$
        holds for all $z\in[-\exp(-1+\|x_{i}\|_{2}+{\bm{w}^{*}}^{\top}x_{i}+b^{*}), \exp(-1+\|x_{i}\|_{2}+{\bm{w}^{*}}^{\top}x_{i}+b^{*})]$.
        
        Since $\tilde{\bm{\theta}}^{*}$ is a local minimum, then there exists $\varepsilon_{0}\in(0, 1)$ such that the inequality
        \begin{align*}
        L_{n}(\tilde{\bm{\theta}}^{*}+\bm{\Delta}) - L_{n}(\tilde{\bm{\theta}}^{*})&=
        \sum_{i=1}^{n}\ell(-y_{i}f(x_{i};\bm{\theta}^{*})-y_{i}\delta_{a}g(x_{i};\bm{w}^{*}, \bm{\delta_{w}}, b^{*}))+\frac{\lambda\delta_{a}^{2}}{2}-\sum_{i=1}^{n}\ell(-y_{i}f(x_{i};\bm{\theta}^{*}))\\
        &=\sum_{i=1}^{n}\ell'(-y_{i}f(x_{i};\bm{\theta}^{*}))(-y_{i})\delta_{a}g(x_{i};\bm{w}^{*}, \bm{\delta_{w}}, b^{*})+\frac{\lambda\delta_{a}^{2}}{2}\\
        &\quad+\sum_{i=1}^{n}\frac{1}{2!}\ell''(-y_{i}f(x_{i};\bm{\theta}^{*})-y_{i}\xi_{i}g(x_{i};\bm{w}^{*},\bm{\delta_{w}}, b^{*}))\delta^{2}_{a}g^{2}(x_{i};\bm{w}^{*}, \bm{\delta_{w}}, b^{*})\\
        &\ge 0
        \end{align*}
        holds for all $\varepsilon<\varepsilon_{0}$. In addition, we have 
        \begin{align*}
        \sum_{i=1}^{n}\ell''(-y_{i}f(x_{i};\bm{\theta}^{*})-&y_{i}\xi_{i}g(x_{i};\bm{w}^{*},\bm{\delta_{w}}, b^{*}))\delta^{2}_{a}g^{2}(x_{i};\bm{w}^{*}, \bm{\delta_{w}}, b^{*})\\
        &\le \sum_{i=1}^{n}C_{i}\delta^{2}_{a}g^{2}(x_{i};\bm{w}^{*}, \bm{\delta_{w}}, b^{*})\\
        &\le \exp(-2/\varepsilon)\sum_{i=1}^{n}C_{i}\exp(-\|x_{i}\|_{2}+{\bm{w}^{*}}^{\top}x_{i}+b^{*})
        \end{align*}
       Recall that scalar $C_{i}$ only depends on $\tilde{\bm{\theta}}^{*}=(\bm{\theta}^{*}, \bm{w}^{*}, b^{*})$ and $x_{i}$, thus the scalar $C(\tilde{\bm{\theta}}^{*},\mathcal{D})=\sum_{i=1}^{n}C_{i}\exp(-\|x_{i}\|_{2}+{\bm{w}^{*}}^{\top}x_{i}+b^{*})+\lambda/2$ can be viewed as a scalar depending only on parameters $\tilde{\bm{\theta}}^{*}$ and dataset $\mathcal{D}$. Thus, for any $\varepsilon:\varepsilon<\varepsilon_{0}$ and for any $\sgn(\delta_{a})\in\{-1, 1\}$, the inequality 
       \begin{equation}
       \sgn(\delta_{a})\exp(-1/\varepsilon)\sum_{i=1}^{n}\ell'(-y_{i}f(x_{i};\bm{\theta}^{*}))(-y_{i})g(x_{i};\bm{w}^{*}, \varepsilon\bm{u}, b^{*})+C(\tilde{\bm{\theta}}^{*},\mathcal{D})\exp(-2/\varepsilon)\ge 0
       \end{equation}
       always holds.  
       This indicates that for any $\varepsilon:\varepsilon<\varepsilon_{0}$ and for any $\sgn(\delta_{a})\in\{-1, 1\}$, the inequality 
        \begin{equation}\label{lemma::eq-2}
       \sgn(\delta_{a})\sum_{i=1}^{n}\ell'(-y_{i}f(x_{i};\bm{\theta}^{*}))(-y_{i})\exp(\varepsilon\bm{u}^{\top}x_{i})\exp({\bm{w}^{*}}^{\top}x_{i}+b^{*})+C(\tilde{\bm{\theta}}^{*},\mathcal{D})\exp(-1/\varepsilon)\ge 0
       \end{equation}
       always holds. 
       We now proceed by induction. For the base case where $p=0$, 
       for each $\sgn(\delta_{a})\in\{-1, 1\}$, we take the limit on the both sides of inequality~\eqref{lemma::eq-2} as $\varepsilon\rightarrow 0$ and thus obtain   
       \begin{equation}
       \sgn(\delta_{a})\sum_{i=1}^{n}\ell'(-y_{i}f(x_{i};\bm{\theta}^{*}))(-y_{i})\exp({\bm{w}^{*}}^{\top}x_{i}+b^{*})\ge 0,
       \end{equation}
       which further establishes the base case
       \begin{equation}
       \sum_{i=1}^{n}\ell'(-y_{i}f(x_{i};\bm{\theta}^{*}))(-y_{i})\exp({\bm{w}^{*}}^{\top}x_{i}+b^{*}) = 0.
 	\end{equation}
	The inductive hypothesis is that the equality 
	\begin{equation}\label{lemma::eq-3}
	\sum_{i=1}^{n}\ell'(-y_{i}f(x_{i};\bm{\theta}^{*}))(-y_{i})\exp({\bm{w}^{*}}^{\top}x_{i}+b^{*})(\bm{u}^{\top}x_{i})^{j} = 0
	\end{equation}
	 holds for all $j=0,...,k-1$. Now we need to prove that the equality~\eqref{lemma::eq-3} holds for $j=k$. 
	 Since the equality holds for all $j= 0,..., k-1$, then we have 
	 \begin{align}
	 \sgn(\delta_{a})\sum_{i=1}^{n}\ell'(-y_{i}f(x_{i};\bm{\theta}^{*}))(-y_{i})\exp({\bm{w}^{*}}^{\top}x_{i}+b^{*})&\left[\frac{\exp(\varepsilon\bm{u}^{\top}x_{i})-\sum_{j=0}^{k-1}\frac{(\varepsilon \bm{u}^{\top}x_{i})^{j}}{j!}}{\varepsilon^{k}}\right]\notag\\
	 &+C(\tilde{\bm{\theta}}^{*},\mathcal{D})1/\varepsilon^{k}\exp(-1/\varepsilon)\ge 0\label{lemma::eq-4}
	 \end{align}
	 Taking the limit on the both sides of Eq.~\eqref{lemma::eq-4}, we obtain that the inequality
	 \begin{equation}\label{lemma::eq-5}
	\sgn(\delta_{a})\sum_{i=1}^{n}\ell'(-y_{i}f(x_{i};\bm{\theta}^{*}))(-y_{i})\exp({\bm{w}^{*}}^{\top}x_{i}+b^{*})(\bm{u}^{\top}x_{i})^{k} \ge 0,
	\end{equation}
	holds for every $\sgn(\delta_{a})\in\{-1, 1\}$ and this further implies 
	\begin{equation}\label{lemma::eq-6}
	\sum_{i=1}^{n}\ell'(-y_{i}f(x_{i};\bm{\theta}^{*}))(-y_{i})\exp({\bm{w}^{*}}^{\top}x_{i}+b^{*})(\bm{u}^{\top}x_{i})^{k} = 0.
	\end{equation}
	Thus Eq.~\eqref{lemma::eq-6} finishes our induction. 
\end{proof}
	
\section{Proof of Lemma~\ref{lemma::tensor}}
\begin{proof}
It is easy to check that the tensor $T_{k}=\sum_{i=1}^{n}c_{i}x_{i}^{\otimes k}$ is a symmetric tensor. From Theorem~1 in reference~\cite{zhang2012best}, we directly have 
$$\max_{\bm{u_{1},...,u_{k}}:\|\bm{u_{1}}\|_{2}=...=\|\bm{u_{k}}\|_{2}=1}|T_{k}(\bm{u}_{1},...,\bm{u}_{k})|=\max_{\bm{u}:\|\bm{u}\|_{2}=1}|T_{k}(\bm{u},...,\bm{u})|.$$
Since 
$$T_{k}(\bm{u},...,\bm{u})=\sum_{i=1}^{n}c_{i}(\bm{u}^{\top}x_{i})^{k}=0$$ 
holds for all $\|\bm{u}\|_{2}=1$, then 
$$\max_{\bm{u_{1},...,u_{k}}:\|\bm{u_{1}}\|_{2}=...=\|\bm{u_{k}}\|_{2}=1}|T_{k}(\bm{u}_{1},...,\bm{u}_{k})|=0$$
which is equivalent to 
$$T_{k}=\bm{0}^{\otimes k}.$$
\end{proof}

\textbf{Remark: }One implication of Lemma~\ref{lemma::a=0} and \ref{lemma::tensor} is that for any non-negative integer $p\ge 0$, the following $p$-th order tensor is a zero tensor, 
$$\sum_{i=1}^{n}\ell'(-y_{i}f(x_{i};\bm{\theta}^{*}))(-y_{i})\exp({\bm{w}^{*}}^{\top}x_{i}+b^{*})x_{i}^{\otimes p}= \bm{0}_{d}^{\otimes}.$$
This further indicates for any monomial $\pi:\mathbb{R}^{d}\rightarrow \mathbb{R}$, 
$$\sum_{i=1}^{n}\ell'(-y_{i}f(x_{i};\bm{\theta}^{*}))(-y_{i})\exp({\bm{w}^{*}}^{\top}x_{i}+b^{*})\pi(x_{i})= 0.$$

\section{Proof  of Theorem~\ref{thm::single-exp}}
\begin{proof}
For every dataset $\mathcal{D}$ satisfying Assumption~\ref{assump::realizability}, by the Lagrangian interpolating polynomial, there always exists a polynomial  $P(x)=\sum_{j}c_{j}\pi_{j}(x)$ defined on $\mathbb{R}^{d}$ such that it can correctly classify all samples in the dataset with margin at least one, i.e., $y_{i}P(x_{i})\ge 1,\forall i\in[n]$, where $\pi_{j}$ denotes the $j$-th monomial in the polynomial $P(x)$. 
Therefore, from Lemma~\ref{lemma::a=0} and \ref{lemma::tensor}, it follows that 
\begin{align*}
\sum_{i=1}^{n}\ell'(-y_{i}f(x_{i};\bm{\theta}^{*}))e^{{\bm{w}^{*}}^{\top}x_{i}+b^{*}}y_{i}P(x_{i})=\sum_{j}c_{j}\sum_{i=1}^{n}\ell'(-y_{i}f(x_{i};\bm{\theta}^{*}))y_{i}e^{{\bm{w}^{*}}^{\top}x_{i}+b^{*}}\pi_{j}(x_{i})=0.
\end{align*}
Since $y_{i}P(x_{i})\ge 1$ and $e^{{\bm{w}^{*}}^{\top}x_{i}+b^{*}}>0$ hold for  $\forall i\in[n]$ and the loss function $\ell$ is a non-decreasing function, i.e., $\ell'(z)\ge 0,\forall z\in\mathbb{R}$, then $\ell'(-y_{i}f(x_{i};\bm{\theta}^{*}))=0$ holds for all $i\in[n]$. In addition, from the assumption that every critical point of the loss function $\ell$ is a global minimum, it follows that $z_{i}=-y_{i}f(x_{i};\bm{\theta}^{*})$ achieves the global minimum of the loss function $\ell$ and this further indicates that $\bm{\theta}^{*}$ is a global minimum of the empirical loss $L_{n}(\bm{\theta})$. Furthermore, since at every local minimum, the exponential neuron is inactive, $a^{*}=0$, then the set of parameters $\tilde{\bm{\theta}}^{*}$ is a global minimum of the loss function $\tilde{L}(\tilde{\bm{\theta}})$. Finally, since every critical point of the loss function $\ell(z)$ satisfies $z<0$, then for every sample, $\ell'(-y_{i}f(x_{i};\bm{\theta}^{*}))=0$ indicates that $y_{i}f(x_{i};\bm{\theta}^{*})>0$, or, equivalently, $y_{i}=\sgn(f(x_{i};\bm{\theta}^{*}))$. Therefore, the set of parameters $\bm{\theta}^{*}$ also minimizes the training error. In summary, the set of parameters $\tilde{\bm{\theta}}^{*}=(\bm{\theta}^{*},a^{*}, \bm{w}^{*}, b^{*})$ minimizes the  loss function $\tilde{L}_{n}(\tilde{\bm{\theta}})$ and the set of parameters $\bm{\theta}^{*}$ simultaneously minimizes the empirical loss function $L_{n}(\bm{\theta})$ and the training error $R_{n}(\bm{\theta};f)$. 

\end{proof}

\section{Proof of Corollary~\ref{cor::single-exp}}
\begin{proof}
The proof follows directly from the proof of Lemma~\ref{lemma::a=0}($i$). From Lemma~\ref{lemma::a=0}($i$), it follows that at every local minimum $\tilde{\bm{\theta}}^{*}=(\bm{\theta}^{*},\bm{w}^{*}, b^{*})$, the exponential neuron is inactive $a^{*}=0$. This indicates that at this local minimum, 
$$\tilde{f}(x;\tilde{\bm{\theta}}^{*})=f(x;\bm{\theta}^{*})+a^{*}\exp({\bm{w}^{*}}x+b^{*})=f(x;\bm{\theta}^{*}),\quad\forall x\in\mathbb{R}^{d}.$$
Therefore, two networks $\tilde{f}(\cdot;\tilde{\bm{\theta}}^{*})$ and $f(\cdot;\bm{\theta})$ are equivalent.  

\end{proof}

\newpage
\section{Proof of Theorem~\ref{thm::multi-exp}}

\subsection{Notations and Important Lemmas}

\textbf{Notations.} Let $M_{l}$ denote the number of neurons in the $l$-th layer of the original neural network and thus $M_{l}+1$ is the number of neurons in the $l$-th layer of the augmented neural network where we add an additional exponential neuron to each layer. Let $a=(a_{1},...,a_{M_{L}+1})$ denote the weight of the output layer where $a_{M_{L}+1}$ is the weight of the exponential neuron in the last layer. Let $w_{j,k}^{(l)}$ denote the weight connecting the $j$-th neuron in the $l$-th layer and the $k$-th neuron in the $(l-1)$-th layer.
\begin{figure}[t]
\centering
\includegraphics[width = 0.6\linewidth]{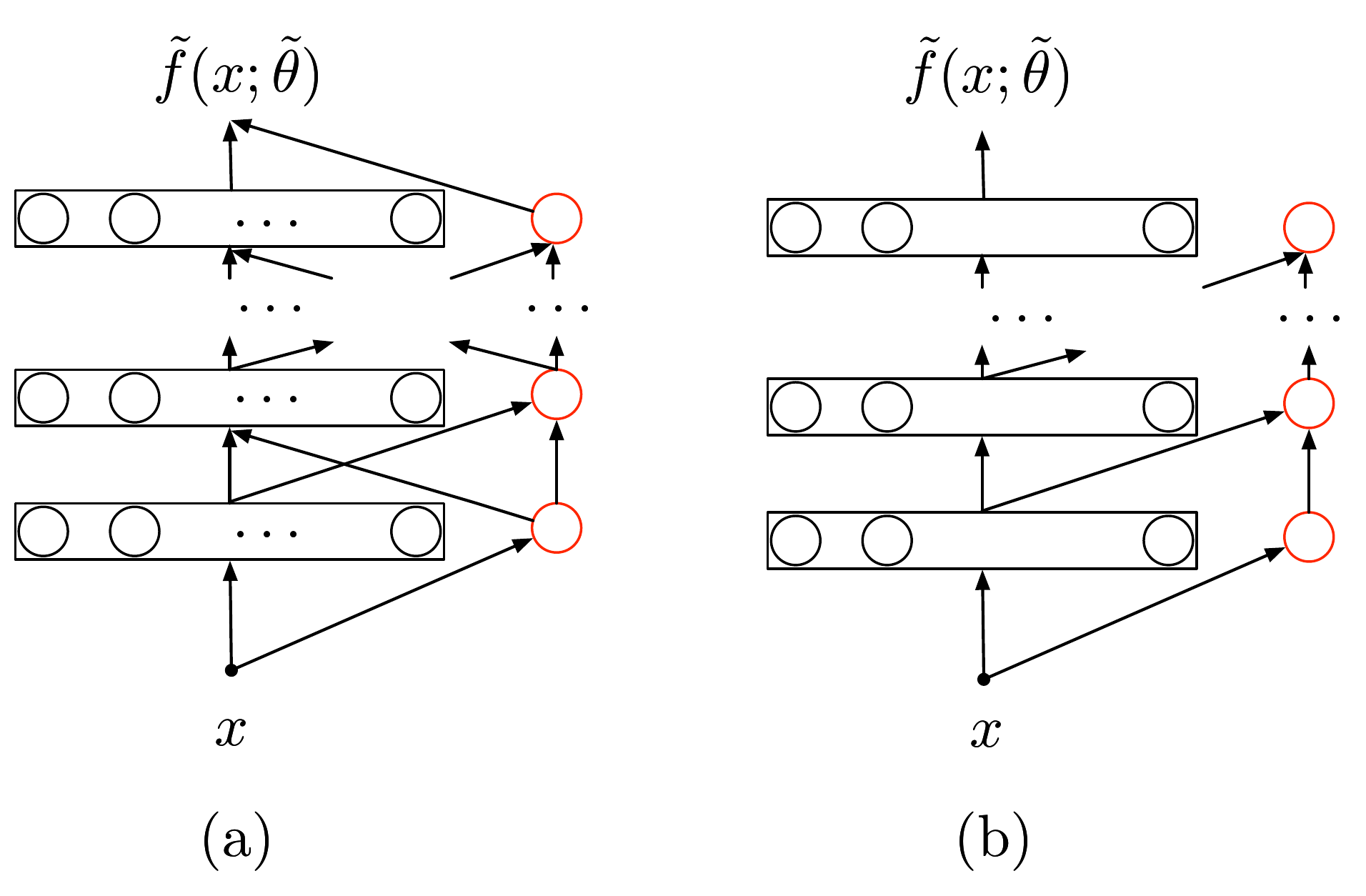}

\caption{(a) The network architecture. (b) The network architecture at any local minimum, where all exponential neurons do not contribute to the final output of the neural network. Black circles represent the neuron activation in the original network $f(x;\bm{\theta})$ and red circles represent the exponential neuron.}
\label{fig::structure2}
\end{figure}

\setcounter{lemma}{2}
\begin{lemma}\label{lemma::multilayer-zero}
If $\tilde{\bm{\theta}}^{*}$ is a local minimum of the empirical loss function $\tilde{L}_{n}(\tilde{\bm{\theta}})$, then 
\begin{itemize}
\item[(1)]  $a^{*}_{M_{L}+1}=0$
\item[(2)]  ${w^{(l)}_{j, M_{l-1}+1}}^{*}=0$ for $l=2,..., L$ and $j= 1,...,M_{l}+1$.
\end{itemize}
\end{lemma}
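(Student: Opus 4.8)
The statement to prove is Lemma~\ref{lemma::multilayer-zero}: at a local minimum $\tilde{\bm{\theta}}^{*}$ of the multilayer loss $\tilde{L}_{n}$, (1) the output-layer weight on the last-layer exponential neuron vanishes, and (2) all weights feeding from a layer's exponential neuron into the next layer vanish (for $l=2,\dots,L$). The plan is to mimic the first-order argument of Lemma~\ref{lemma::a=0}($i$): the key structural fact is that the exponential neuron reproduces itself under differentiation, and the regularizer $\tfrac{\lambda}{2}\sum_{l=2}^{L+1}\|\tilde{\bm{w}}_{l}\|_{2L}^{2L}$ penalizes exactly the weights $\tilde{\bm{w}}_{l}$ that carry the exponential neuron's output forward. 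So I would write out the stationarity conditions $\nabla \tilde{L}_{n} = 0$ with respect to the relevant parameters and extract, for each such weight, a relation of the form (something) $\times$ (that weight) $=$ $\lambda \cdot (\text{that weight})^{2L-1}$, forcing the weight to be zero.

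Concretely, I would proceed layer by layer, working from the output layer backward. \emph{Step 1 (part (1)):} Let $h^{*}_{\exp}$ denote the pre-activation into the last-layer exponential neuron at $\tilde{\bm{\theta}}^{*}$, and $z_i := -y_i \tilde f(x_i;\tilde{\bm\theta}^*)$. Differentiating $\tilde L_n$ with respect to the bias $\tilde b_{L+1}$ (unregularized) gives $\sum_i \ell'(z_i)(-y_i) = 0$; but more usefully, differentiating with respect to the bias of the \emph{exponential neuron} in layer $L$ gives $a^*_{M_L+1}\sum_i \ell'(z_i)(-y_i)\exp(h^*_{\exp}(x_i)) = 0$, and differentiating with respect to $a_{M_L+1}$ itself gives $\sum_i \ell'(z_i)(-y_i)\exp(h^*_{\exp}(x_i)) = \lambda\, (a^*_{M_L+1})^{2L-1}$ (this weight $a_{M_L+1}$ is the $(M_L+1)$-entry of $\tilde{\bm w}_{L+1}$, hence regularized). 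Multiplying the second relation by $a^*_{M_L+1}$ and combining with the first yields $\lambda (a^*_{M_L+1})^{2L} = 0$, so $a^*_{M_L+1}=0$. \emph{Step 2 (part (2), inductively from $l=L$ down to $l=2$):} Suppose the exponential neurons in layers $l+1,\dots,L$ and the output layer are already known inactive (so, in particular, from the layers above, the exponential neuron in layer $l$ influences the output only through the weights $\tilde{\bm w}_{l+1}$, whose exponential-feeding components we are about to kill, or have killed). For a fixed target neuron $j$ in layer $l+1$: differentiate $\tilde L_n$ with respect to $w^{(l+1)}_{j, M_l+1}$ (regularized, since it is a component of $\tilde{\bm w}_{l+1}$) and with respect to the bias $\tilde b^{(l+1)}_j$ of that same neuron (unregularized). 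Because the exponential neuron in layer $l$ has output $\exp(\cdot)$ and its contribution to the pre-activation of neuron $j$ in layer $l+1$ is $w^{(l+1)}_{j,M_l+1}\exp(\cdot)$, the two derivatives share a common factor $\sum_i (\text{backprop}_i^{(l+1,j)})\cdot \exp(\text{layer-}l\text{ exp pre-activation at }x_i)$; the bias equation shows this common sum is zero, and the weight equation then reads $\lambda (w^{(l+1)}_{j,M_l+1})^{2L-1} = w^{(l+1)}_{j,M_l+1}\cdot(\text{that same sum}) = 0$, giving $w^{(l+1)}_{j,M_l+1}=0$. Ranging over all $j$ and descending in $l$ completes the induction.

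One subtlety I would be careful about: for the bias-derivative/weight-derivative pairing to produce a clean common factor, I need that perturbing the bias $\tilde b^{(l+1)}_j$ and perturbing the weight $w^{(l+1)}_{j,M_l+1}$ affect the network output through the \emph{same} scalar channel (the pre-activation of neuron $j$ in layer $l+1$), differing only by the multiplicative input $1$ versus $\exp(\text{layer-}l\text{ exp pre-activation})$. This is true for feedforward nets, and the chain-rule "backprop" coefficient $\partial \tilde f(x_i)/\partial(\text{pre-activation of }j\text{ in layer }l+1)$ is identical in both derivatives — that is the crux, and it is where the argument genuinely uses the feedforward structure plus differentiability of $\sigma$ (Theorem~\ref{thm::multi-exp}'s hypothesis).

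\textbf{Expected main obstacle.} The real work is bookkeeping rather than a deep idea: setting up notation for the backpropagated gradients carefully enough that the "common factor" cancellation is manifest, and organizing the downward induction on $l$ so that at each stage the already-zeroed weights legitimately simplify the network (ensuring no exponential neuron in a higher layer feeds back any dependence that would spoil the factorization). I expect no analytic difficulty — just the need to handle the output layer (where the relevant weight is a component of $\tilde{\bm w}_{L+1}$, i.e., $a_{M_L+1}$) as a slightly special base case, and then a uniform inductive step for $l=L,\dots,2$. Once Lemma~\ref{lemma::multilayer-zero} is in hand, Theorem~\ref{thm::multi-exp} follows by the same high-order-perturbation/tensor argument as in Lemmas~\ref{lemma::a=0}($ii$) and~\ref{lemma::tensor}, applied to the first-layer exponential neuron (whose input weights are \emph{not} regularized and which can therefore play the role of the single augmented neuron of Theorem~\ref{thm::single-exp}).
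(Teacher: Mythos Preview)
Your Part~(1) is correct and identical to the paper's argument. The gap is in Part~(2): the bias you pair with is the wrong one, and the ``common factor'' you claim does not exist.

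Concretely, the stationarity condition for the bias $\tilde b^{(l+1)}_{j}$ of the \emph{target} neuron $j$ reads
\[
\sum_{i}\ell'(z_i)(-y_i)\,\frac{\partial \tilde f(x_i)}{\partial(\text{preact}_{j}^{(l+1)})}=0,
\]
whereas the stationarity condition for the weight $w^{(l+1)}_{j,M_l+1}$ reads
\[
\sum_{i}\ell'(z_i)(-y_i)\,\frac{\partial \tilde f(x_i)}{\partial(\text{preact}_{j}^{(l+1)})}\,z^{(l)}_{M_l+1}(x_i)\;+\;L\lambda\bigl(w^{(l+1)}_{j,M_l+1}\bigr)^{2L-1}=0.
\]
The factor $z^{(l)}_{M_l+1}(x_i)=\exp(\cdot)$ depends on $i$, so the two sums are \emph{not} proportional and the first being zero tells you nothing about the second. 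Your claimed cancellation simply does not happen with this pairing.

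The paper instead differentiates with respect to the bias $b^{(l)}_{M_l+1}$ of the \emph{source} exponential neuron in layer $l$ (this is the same trick you correctly used in Part~(1)). Because $\partial z^{(l)}_{M_l+1}/\partial b^{(l)}_{M_l+1}=z^{(l)}_{M_l+1}$, that bias derivative \emph{does} carry the $z^{(l)}_{M_l+1}(x_i)$ factor, but it also involves a sum over \emph{all} target neurons $j$, each term weighted by $w^{(l+1)}_{j,M_l+1}$. Hence one cannot kill the weights one at a time: one must multiply the $j$-th weight equation by $w^{(l+1)}_{j,M_l+1}$, sum over $j$, and subtract the bias equation to obtain $L\lambda\sum_{j}\bigl(w^{(l+1)}_{j,M_l+1}\bigr)^{2L}=0$, whence every such weight vanishes. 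This is precisely where the $2L$-norm (rather than $2$-norm) regularizer earns its keep. Note also that no downward induction on $l$ is needed: the argument works for each $l$ independently, since the chain-rule factors $\partial\tilde f/\partial z_j^{(l+1)}$ are the same in both equations regardless of what happens in higher layers.
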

\textbf{Remark:} Lemma~\ref{lemma::multilayer-zero} shows that at every local minimum, the feedforward neural network shown in Fig.~\ref{fig::structure2}(a) becomes the neural network shown in Fig.~\ref{fig::structure2}(b). This means that all exponential neurons are inactive and thus do no contribute to the final output of the neural network. 

\begin{lemma} \label{lemma::empirical-loss}
If $\tilde{\bm{\theta}}^{*}$ is a local minimum of the empirical loss function $\tilde{L}_{n}(\tilde{\bm{\theta}})$, then there exists a function $\eta:\mathbb{R}^{d}\times\mathbb{R}^{|\tilde{\bm{\theta}}|}\rightarrow\mathbb{R}$ satisfying $\eta(x,\tilde{\bm{\theta}})>0$ for all $(x,\tilde{\bm{\theta}})$ such that for any integer $p\ge0$, the equation
\begin{equation}
\sum_{i=1}^{n}\ell'\left(-y_{i}\tilde{f}(x_{i};\tilde{\bm{\theta}}^{*})\right)(-y_{i})\eta(x_{i},\tilde{\bm{\theta}}^{*})(\bm{u}^{\top}x_{i}+v)^{p}=0
\end{equation}
holds for any vector $\bm{u}$ and  scalar $v$ satisfying $\|\bm{u}\|_{2}^{2}+v^{2}=1$.
\end{lemma}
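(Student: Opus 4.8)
The plan is to reproduce, one level deeper, the argument behind Lemma~\ref{lemma::a=0}(ii): manufacture from the local-minimum inequality a family of inequalities of the form
$$s\sum_{i=1}^{n}\ell'\!\left(-y_{i}\tilde f(x_{i};\tilde{\bm\theta}^{*})\right)(-y_{i})\,\eta(x_{i},\tilde{\bm\theta}^{*})\,\exp\!\big(\varepsilon(\bm u^{\top}x_{i}+v)\big)+C(\tilde{\bm\theta}^{*},\mathcal D)\,e^{-1/\varepsilon}\ \ge\ 0,$$
valid for all small $\varepsilon>0$ and both signs $s\in\{-1,1\}$, with $\eta>0$; then run the induction on $p$ exactly as there (base case $p=0$ by $\varepsilon\to0$ for both signs; inductive step by subtracting the identities for $j=0,\dots,k-1$ times $\varepsilon^{j}/j!$, dividing by $\varepsilon^{k}$, and letting $\varepsilon\to0$, using $e^{-1/\varepsilon}/\varepsilon^{k}\to0$). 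The content is therefore producing that inequality with a strictly positive $\eta$.

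First I would invoke Lemma~\ref{lemma::multilayer-zero}: at $\tilde{\bm\theta}^{*}$ every exponential neuron is inactive, i.e. $a^{*}_{M_{L}+1}=0$ and ${w^{(l)}_{j,M_{l-1}+1}}^{*}=0$ for $l=2,\dots,L$ and all $j$. Two consequences are used throughout. (a)~Because all exp-to-exp weights and $a_{M_{L}+1}$ vanish, the only route by which exponential neurons can influence the output is the single chain
$$\text{exp-neuron }1\ \longrightarrow\ \text{exp-neuron }2\ \longrightarrow\ \cdots\ \longrightarrow\ \text{exp-neuron }L\ \longrightarrow\ \text{output};$$
switching this chain on with small weights leaves every ordinary neuron's output unchanged, since ordinary neurons receive exponential neurons only through the weights ${w^{(l)}_{j,M_{l-1}+1}}$ ($j\le M_{l}$), which equal $0$ at $\tilde{\bm\theta}^{*}$ and which I never perturb. (b)~Since $\tilde f$ is affine in $a_{M_{L}+1}$, along such perturbations $\tilde f(x;\cdot)=f(x;\bm\theta^{*})+a_{M_{L}+1}\,\Phi(x)$ with $\Phi(x)>0$ the output of exp-neuron $L$; when the chain weights are $0$, $\Phi(x)$ is just $\exp$ of a pre-activation, independent of exp-neuron $1$'s parameters. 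The quantity $\eta(x_{i},\tilde{\bm\theta}^{*})$ will be precisely the product of exponentials of pre-activations picked up along the full chain, hence strictly positive.

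Next I would set up the perturbation: keep all ordinary-network parameters fixed; move exp-neuron $1$'s weight and bias by $\varepsilon(\bm u,v)$ with $\|\bm u\|_{2}^{2}+v^{2}=1$, so that exp-neuron $1$'s output gains a factor $\exp(\varepsilon(\bm u^{\top}x+v))$; and switch on the chain by giving its $L$ weights small nonzero values under a fixed hierarchy of sizes — the exp-to-exp weights all of magnitude $e^{-1/\varepsilon}$ and $a_{M_{L}+1}$ a suitably high power of $e^{-1/\varepsilon}$, so that $\|\bm\Delta\|_{2}\to0$ as $\varepsilon\to0$. Expanding $\tilde f$ along the chain, $\Delta\tilde f(x_{i})$ is a sum of monomials in those $L$ weights: the unique monomial that uses all of them equals (that weight product)$\,\cdot\,\eta(x_{i},\tilde{\bm\theta}^{*})\exp(\varepsilon(\bm u^{\top}x_{i}+v))$ — the signal, the only term carrying $\varepsilon,\bm u,v$ — while every other monomial is a product of chain weights times a strictly positive $x_{i}$-dependent constant free of $\varepsilon,\bm u,v$. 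Plugging $\Delta\tilde f$ into the second-order Taylor expansion of $\ell$ (as in the proof of Lemma~\ref{lemma::a=0}(ii)) and summing over $i$, the first-order terms attached to the non-full monomials are exactly certain mixed partial derivatives of $\tilde L_{n}$ at $\tilde{\bm\theta}^{*}$ in the $a_{M_{L}+1}$ and chain-weight directions; these vanish because $a^{*}_{M_{L}+1}=0$ kills every derivative of $\tilde f$ in chain-weight directions except the pure $\partial_{a_{M_{L}+1}}$ (whose vanishing is the critical-point condition), after which the second-order — and where needed higher-order — optimality conditions at the local minimum force the mixed ones to vanish as well, using that the $2L$-th power regularizer has all derivatives of order below $2L$ equal to $0$ at the origin. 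The $\ell''$-remainder and the regularizer are of strictly higher order in the chain weights than the signal once $|a_{M_{L}+1}|$ is chosen far below the exp-to-exp weights. Dividing $\tilde L_{n}(\tilde{\bm\theta}^{*}+\bm\Delta)\ge\tilde L_{n}(\tilde{\bm\theta}^{*})$ by the signal's weight product then yields the displayed inequality, the analogue of~(\ref{lemma::eq-2}) with $\exp(\varepsilon\bm u^{\top}x_{i})\exp({\bm w^{*}}^{\top}x_{i}+b^{*})$ replaced by $\eta(x_{i},\tilde{\bm\theta}^{*})\exp(\varepsilon(\bm u^{\top}x_{i}+v))$, and the induction on $p$ finishes.

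The hard part will be the bookkeeping of the non-full monomials generated by composing $L$ exponential neurons: showing that every first-order term they contribute against $\ell'$ vanishes — equivalently, that the relevant mixed high-order derivatives of $\tilde L_{n}$ at $\tilde{\bm\theta}^{*}$ are zero — and arranging the hierarchy of perturbation magnitudes so that the $\ell''$-remainder and the regularizer remain strictly dominated by the signal. This is where the conclusions of Lemma~\ref{lemma::multilayer-zero}, the differentiability of $\sigma$, and the $2L$-th power of the regularizer all enter; with these in place the remaining steps are identical to those in Lemma~\ref{lemma::a=0}(ii) and Lemma~\ref{lemma::tensor}.
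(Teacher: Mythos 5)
Your overall architecture is the same as the paper's: use Lemma~\ref{lemma::multilayer-zero} to reduce to the single exponential chain, perturb the first exponential neuron's weight and bias by $\varepsilon(\bm{u},v)$ and the chain weights by a hierarchy of powers of $e^{-1/\varepsilon}$ with $a_{M_L+1}$ smallest, isolate the full-chain term $\eta(x_i,\tilde{\bm{\theta}}^{*})\exp(\varepsilon(\bm{u}^{\top}x_i+v))$ with $\eta>0$, and finish with the two-sign induction in $p$ from Lemma~\ref{lemma::a=0}($ii$). The genuine gap is in how you dispose of the partial-chain first-order terms. You assert that they are mixed partial derivatives of $\tilde{L}_n$ in the $a_{M_L+1}$/chain-weight directions and that ``second-order --- and where needed higher-order --- optimality conditions'' force them to vanish. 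No such principle exists: at a local minimum only the gradient vanishes and the Hessian is positive semidefinite; third- and higher-order mixed derivatives need not vanish, and neither Lemma~\ref{lemma::multilayer-zero} nor the flatness of the $2L$-th power regularizer gives you those equalities. (The second-order instance can be salvaged, since $a^{*}_{M_L+1}=0$ and the regularizer make the relevant diagonal Hessian entries zero, and a PSD matrix with a zero diagonal entry has a zero row; but this argument stops at order two.) The paper needs no such vanishing: it expands $z_L$ in an auxiliary parameter with an explicit remainder (Claims~\ref{claim::5}--\ref{claim::6}), notes that every term carrying $(\varepsilon,\bm{u},v)$-dependence is divisible by the full chain product, and then handles the $(\bm{u},v)$-independent coefficients $\rho_k$ by the same two-sign, scale-separated limiting argument as in Lemma~\ref{lemma::a=0}($ii$): those sitting at orders that dominate the signal are forced to zero as successive limits of the local-minimum inequality, while the one at the signal's own order simply drops out of the induction because it does not depend on $(\bm{u},v,\varepsilon)$.

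Two further inaccuracies in the same step. First, ``every other monomial is free of $\varepsilon,\bm{u},v$'' is not true, because the composition of exponentials is not multilinear in the chain weights: monomials divisible by higher powers of the full product carry factors $\exp(m\varepsilon(\bm{u}^{\top}x_i+v))$, $m\ge 2$, and must be pushed into the Taylor remainder --- exactly the bookkeeping of Claim~\ref{claim::6}. Second, the partial-chain coefficient that appears at the same order as the signal (for instance the $z_L z_{L-1}^{2}$-type term in $d^{L-1}z_L/d\lambda^{L-1}$) does not vanish in general, so your displayed inequality, which contains only the signal plus an $O(e^{-1/\varepsilon})$ slack, is not attainable as stated; the $p=0$ case comes out contaminated by that $(\bm{u},v)$-independent constant and has to be recovered afterwards, e.g.\ from the $p=1$ identity with $(\bm{u},v)=(\bm{0},1)$. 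These defects are repairable by replacing the appeal to higher-order optimality with the scale-separation argument, but as written the central claim --- why the offending first-order terms disappear --- rests on an optimality principle that is false in general.
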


\subsection{Proof of Lemma~\ref{lemma::multilayer-zero}}
\begin{proof}
\textbf{(1)} We first prove that the parameter $a_{M_{L}+1}^{*}$ in the last layer is zero, i.e., $a_{M_{L}+1}^{*}=0$. Since $\tilde{\bm{\theta}}^{*}$ is a local minimum, then the derivatives of the empirical loss with respect to the parameter $a_{M_{L}+1}$ and $b_{M_{L}+1}$ are both zeros, 
\begin{align}
\nabla_{a_{M_{L}+1}}L(\tilde{\bm{\theta}}^{*}) &= \sum_{i=1}^{n}\ell'(-y_{i}\tilde{f}(x_{i};\tilde{\bm{\theta}}^{*}))(-y_{i})\exp\left({\bm{w}_{L}^{*}}^{\top}\bm{z}^{(L-1)}(x_{i};\tilde{\bm{\theta}}^{*})+b^{*}_{M_{L}+1}\right)\\
&\quad+L\lambda \left(a^{*}_{M_{L}+1}\right)^{2L-1}=0\label{eq::lemma3-1},\\
\nabla_{b_{M_{L}+1}}L(\tilde{\bm{\theta}}^{*}) &= a_{M_{L}+1}\sum_{i=1}^{n}\ell'(-y_{i}\tilde{f}(x_{i};\tilde{\bm{\theta}}^{*}))(-y_{i})\exp\left({\bm{w}_{L}^{*}}^{\top}\bm{z}^{(L-1)}(x_{i};\tilde{\bm{\theta}}^{*})+b^{*}_{M_{L}+1}\right)=0\label{eq::lemma3-2},
\end{align}
where $\bm{w}_{L}, b_{M_{L+1}}$ denotes the weight vector and bias scalar of the exponential neuron in the $L$-th layer, respectively and each component of the vector $\bm{z}^{(L-1)}(x_{i};\bm{\theta}^{*})$ is an output from the $(L-1)$-th layer on the sample $x_{i}$. 
Combining Eq.~\eqref{eq::lemma3-1} and~\eqref{eq::lemma3-2}, we have 
$$a^{*}_{M_{L}+1}=0.$$

\textbf{(2)} We next prove that the parameters ${w^{(l)}_{j, M_{l-1}+1}}^{*}=0$ for all $l=2,..., L$ and $j= 1,...,M_{l}+1$, where  $w^{(l)}_{j, M_{l-1}+1}$ denotes the weight on the connection between the $j$-th neuron in the $l$-th layer and $(M_{l-1}+1)$-th neuron in the $(l-1)$-th layer. For  neurons in the $l$-th layer, we have 
\begin{align}
z_{j}^{(l)}(x)&=\sigma\left({\bm{w}_{j}^{(l)}}^{\top}\bm{z}^{(l-1)}(x)+b_{j}^{(l)}\right),\quad j = 1,..., M_{l}\\
z_{j}^{(l)}(x)&=\exp\left({\bm{w}_{j}^{(l)}}^{\top}\bm{z}^{(l-1)}(x)+b_{j}^{(l)}\right), \quad j = M_{l}+1,
\end{align}
where $\bm{w}_{j}^{(l)}$ and $b_{j}^{(l)}$ denotes the weight vector and the bias scalar of the $j$-th neuron in the $l$-th layer. 
Therefore, taking the derivative of the empirical loss with respect to each $w^{(l)}_{j, M_{l-1}+1}$, $j=1,...,M_{l}+1$ in the $l$-th layer, we have
\begin{align}
0&=\nabla_{w^{(l)}_{j, M_{l-1}+1}}L(\tilde{\bm{\theta}}^{*})\\
 &= \sum_{i=1}^{n}\ell'(-y_{i}\tilde{f}(x_{i};\tilde{\bm{\theta}}^{*}))(-y_{i})\frac{\partial \tilde{f}(x_{i};\tilde{\bm{\theta}}^{*})}{\partial w^{(l)}_{j, M_{l-1}+1}}+L\lambda\left( {w^{(l)}_{j, M_{l-1}+1}}^{*}\right)^{2L-1}\notag\\
&= \sum_{i=1}^{n}\ell'(-y_{i}\tilde{f}(x_{i};\tilde{\bm{\theta}}^{*}))(-y_{i})\frac{\partial \tilde{f}(x_{i};\tilde{\bm{\theta}}^{*})}{\partial z^{(l)}_{j}}\frac{\partial z^{(l)}_{j}(x_{i})}{\partial w^{(l)}_{j, M_{l-1}+1}}+L\lambda\left( {w^{(l)}_{j, M_{l-1}+1}}^{*}\right)^{2L-1}\notag\\
&=\sum_{i=1}^{n}\ell'(-y_{i}\tilde{f}(x_{i};\tilde{\bm{\theta}}^{*}))(-y_{i})\frac{\partial \tilde{f}(x_{i};\tilde{\bm{\theta}}^{*})}{\partial z^{(l)}_{j}}\frac{\partial z^{(l)}_{j}}{\partial b_{j}^{(l)}}z_{M_{l-1}+1}^{(l-1)}(x_{i})+L\lambda\left( {w^{(l)}_{j, M_{l-1}+1}}^{*}\right)^{2L-1},\label{eq::lemma3-3}
\end{align}
where in the last equality, we used the property that the equality
\begin{equation}
\frac{\partial z_{j}^{(l)}(x_{i})}{\partial w^{(l)}_{j, M_{l-1}+1}}=\sigma'\left({{\bm{w}_{j}^{(l)}}^{*}}^{\top}\bm{z}^{(l-1)}(x)+{b_{j}^{(l)}}^{*}\right)z_{M_{l-1}+1}^{(l-1)}(x_{i})=\frac{\partial z_{j}^{(l)}(x_{i})}{\partial b_{j}^{(l)}}z_{M_{l-1}+1}^{(l-1)}(x_{i})
\end{equation}
holds for $j=1,...,M_{l}$ and 
\begin{equation}
\frac{\partial z_{M_{l}+1}^{(l)}(x_{i})}{\partial w^{(l)}_{M_{l}+1, M_{l-1}+1}}=\exp\left({{\bm{w}_{M_{l}+1}^{(l)}}^{*}}^{\top}\bm{z}^{(l-1)}(x)+{b_{M_{l}+1}^{(l)}}^{*}\right)z_{M_{l-1}+1}^{(l-1)}(x_{i})=\frac{\partial z_{M_{l}+1}^{(l)}(x_{i})}{\partial b_{M_{l}+1}^{(l)}}z_{M_{l-1}+1}^{(l-1)}(x_{i})
\end{equation}

Furthermore, taking the derivative with respect to $b^{(l-1)}_{M_{l-1}+1}$, we have 

\begin{align}
0&=\nabla_{b^{(l-1)}_{M_{l-1}+1}}L(\tilde{\bm{\theta}}^{*})\notag\\
&= \sum_{i=1}^{n}\ell'(-y_{i}\tilde{f}(x_{i};\tilde{\bm{\theta}}^{*}))(-y_{i})\frac{\partial \tilde{f}(x_{i};\tilde{\bm{\theta}}^{*})}{\partial b^{(l-1)}_{M_{l-1}+1}}\notag\\
&= \sum_{i=1}^{n}\ell'(-y_{i}\tilde{f}(x_{i};\tilde{\bm{\theta}}^{*}))(-y_{i})\left[\sum_{j=1}^{M_{l}+1}\frac{\partial \tilde{f}(x_{i};\tilde{\bm{\theta}}^{*})}{\partial z^{(l)}_{j}}\frac{\partial z^{(l)}_{j}(x_{i})}{\partial b^{(l-1)}_{M_{l-1}+1}}\right]\notag\\
&= \sum_{i=1}^{n}\ell'(-y_{i}\tilde{f}(x_{i};\tilde{\bm{\theta}}^{*}))(-y_{i})\left[\sum_{j=1}^{M_{l}+1}\frac{\partial \tilde{f}(x_{i};\tilde{\bm{\theta}}^{*})}{\partial z^{(l)}_{j}}\cdot\frac{\partial z^{(l)}_{j}(x_{i})}{\partial z_{M_{l-1}+1}^{(l-1)}}\cdot\frac{\partial z_{M_{l-1}+1}^{(l-1)}(x_{i})}{\partial b^{(l-1)}_{M_{l-1}+1}}\right]\notag\\
&=\sum_{i=1}^{n}\ell'(-y_{i}\tilde{f}(x_{i};\tilde{\bm{\theta}}^{*}))(-y_{i})\left[\sum_{j=1}^{M_{l}+1}\frac{\partial \tilde{f}(x_{i};\tilde{\bm{\theta}}^{*})}{\partial z^{(l)}_{j}}\cdot\frac{\partial z_{j}^{(l)}(x_{i})}{\partial b_{j}^{(l)}}w_{j,M_{l-1}+1}^{(l)}\cdot z_{M_{l-1}+1}^{(l-1)}(x_{i})\right],\label{eq::lemma3-6}
\end{align}
where in the last equality, we used the following equality 
\begin{align}
\frac{\partial z^{(l)}_{j}(x_{i})}{\partial z_{M_{l-1}+1}^{(l-1)}}=\frac{\partial z_{j}^{(l)}(x_{i})}{\partial b_{j}^{(l)}}w_{j,M_{l-1}+1}^{(l)} 
\end{align}
and the property of the exponential neuron,
\begin{align}
\frac{\partial z_{M_{l-1}+1}^{(l-1)}(x_{i})}{\partial b^{(l-1)}_{M_{l-1}+1}}=\exp\left({{\bm{w}_{M_{l-1}+1}^{(l)}}^{*}}^{\top}\bm{z}^{(l-2)}(x_{i})+{b_{M_{l-1}+1}^{(l-1)}}^{*}\right)= z_{M_{l-1}+1}^{(l-1)}
\end{align}

Now we  multiply the weight ${w_{j, M_{l-1}+1}^{(l)}}^{*}$ on the both sides of Eq.~\eqref{eq::lemma3-3}, respectively and  sum them together over $j=1,...,M_{l}$. 
Thus, we obtain 
\begin{align}\label{eq::lemma3-5}
\sum_{i=1}^{n}\ell'(-y_{i}\tilde{f}(x_{i};\tilde{\bm{\theta}}^{*}))(-y_{i})&\left[\sum_{j=1}^{M_{l}+1}\frac{\partial \tilde{f}(x_{i};\tilde{\bm{\theta}}^{*})}{\partial z^{(l)}_{j}}\frac{\partial z_{j}^{(l)}(x_{i})}{\partial b_{j}^{(l)}}{w_{j,M_{l-1}+1}^{(l)}}^{*}z_{M_{l-1}+1}^{(l-1)}(x_{i})\right]\\
&+\sum_{j=1}^{M_{l}+1}\left({w_{j,M_{l-1}+1}^{(l)}}^{*}\right)^{2}=0
\end{align}
Comparing Eq.~\eqref{eq::lemma3-6} and~\eqref{eq::lemma3-5}, we thus obtain 
\begin{equation*}
\sum_{j=1}^{M_{l}+1}\left({w_{j,M_{l-1}+1}^{(l)}}^{*}\right)^{2}=0,
\end{equation*}
and this indicates that 
$${w_{j,M_{l-1}+1}^{(l)}}^{*}=0, \quad \text{for }j=1,...,M_{l}+1\text{ and } l = 2,...,L.$$
\end{proof}

\subsection{Proof of Lemma~\ref{lemma::empirical-loss}}
\begin{figure}[t]
\centering
\includegraphics[width = 0.8\linewidth]{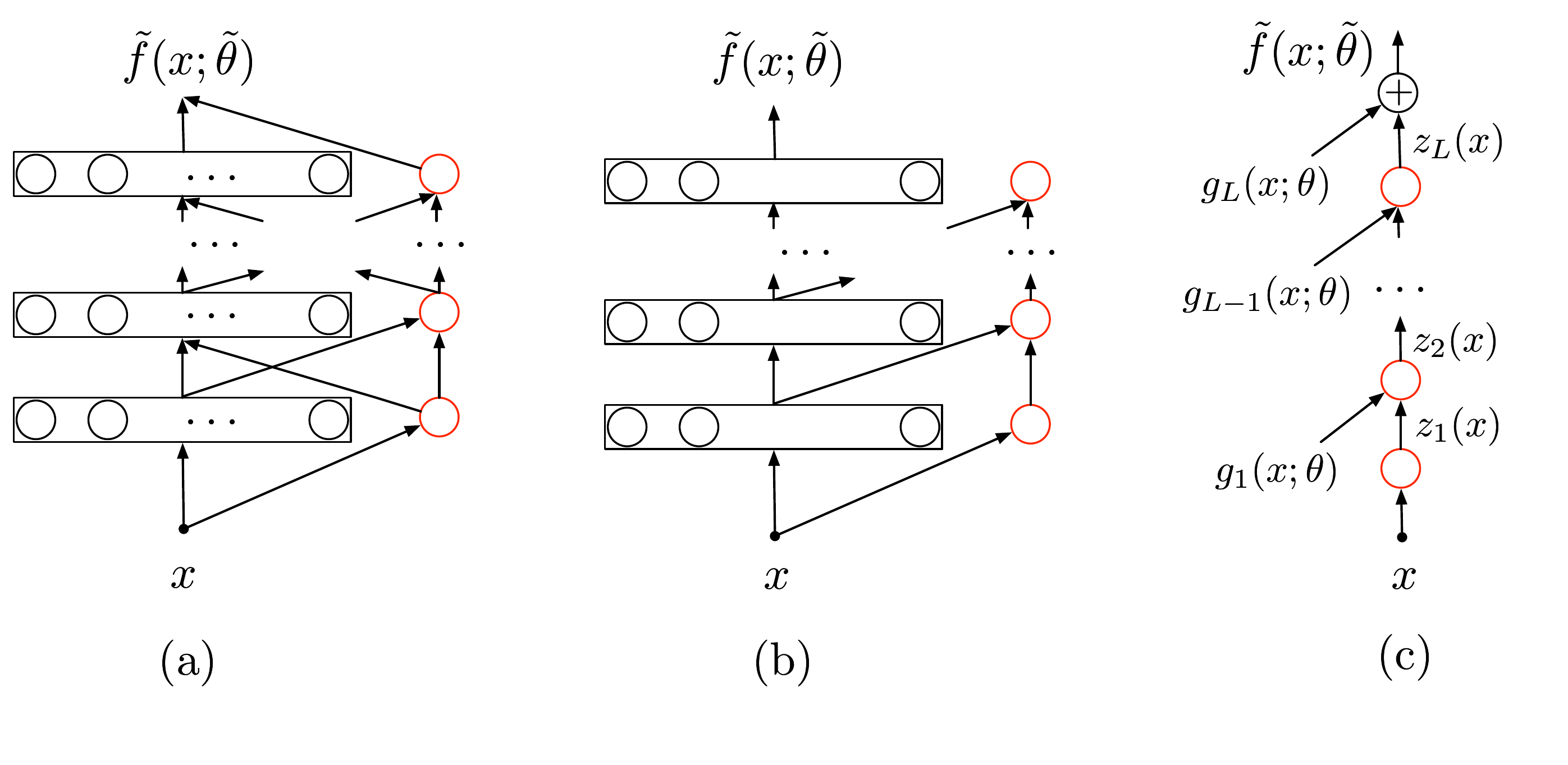}
\vspace{-0.4cm}
\caption{(a) The network architecture. (b) The network architecture at any local minimum, where all exponential neurons do not contribute to the final output of the neural network. (c) A simplification of the network architecture. Black circles represent the neuron activation in the original network $f(x;\bm{\theta})$ and red circles represent the exponential neuron.}
\label{fig::structure}
\end{figure}

\begin{proof}
 For simplification of the notation, we use the scalar $\alpha_{l-1}, l =2,...,L$ to denote the weight connecting the exponential neuron in the $l$-th layer and the exponential neuron in the $(l-1)$-th layer, i.e., $\alpha_{l-1}=w_{M_{l}+1, M_{l-1}+1}^{(l-1)}$. We use the scalar $\alpha_{L}$ to denote the weight connecting the exponential neuron in the $L$-th layer and the output layer, i.e., $\alpha_{L}=a_{M_{L}+1}$.  Since we have already proved  $a_{M_{L}+1}=0$ and $w_{M_{l}+1, M_{l-1}+1}^{(l-1)}=0$ for $l = 2,...,L,$ then $\alpha_{1}=...=\alpha_{L-1}=\alpha_{L}=0$. This indicates the network shown by Fig.~\ref{fig::structure} (a) is reduced to Fig.~\ref{fig::structure} (b) and further indicates that all exponential neurons do not affect the original neural network $f$. In addition, we use the function $g_{l-1}(x;\tilde{\bm{\theta}})$ to denote the weighted input from the other neurons except the exponential neuron in the $(l-1)$-th layer to the exponential neuron in the $l$-th layer. Besides, we use the function $z_{l}(x;\tilde{\bm{\theta}},\bm{\alpha})$ to denote the output of the exponential neuron in the $l$-th layer, where the vector $\bm{\alpha}$ is consisted of all $\alpha_{l}$s. 
Finally, we use the weight vector $\bm{w}$ and the scalar $b$ to denote the weight and bias of the exponential neuron in the first layer. 

Now we split the parameter vector $\tilde{\bm{\theta}}^{*}$ into two parts, i.e., $\tilde{\bm{\theta}}^{*}=(\bm{\theta}^{*}, \bm{\alpha}^{*},\bm{w}^{*}, b^{*})$. Since all exponential neurons do not affect the original neural network, then the output of the neural network $\tilde{f}(x;\tilde{\bm{\theta}^{*}})$ on the sample $x_{i}$ can be written as 
\begin{align}
\tilde{f}(x_{i};\tilde{\bm{\theta}}^{*}) = g_{L}(x_{i};{\bm{\theta}}^{*})+\alpha_{L}z_{L}(x_{i};\bm{\theta}^{*}, \bm{\alpha}^{*},\bm{w}^{*}, b^{*}),
\end{align}
where $g_{L}(x_{i};{\bm{\theta}})$ denotes the output coming from the last layer of the original neural network $f$ and $z_{L}(x_{i};\bm{\theta}, \bm{\alpha},\bm{w}, b)$ denotes the output coming from the exponential neuron in the $L$-th layer. From Lemma~\ref{lemma::multilayer-zero}, it follows that $\alpha_{L}^{*}=0$ and thus
\begin{align}
\tilde{f}(x_{i};\tilde{\bm{\theta}}^{*}) = g_{L}(x_{i};{\bm{\theta}}^{*}).
\end{align}
In addition, the output of the exponential neuron in the $l$-th layer is  
\begin{align*}
z_{l}(x_{i};\bm{\theta}^{*}, \bm{\alpha}^{*},\bm{w}^{*}, b^{*})=\exp\left(g_{l-1}(x_{i};{\bm{\theta}}^{*})+\alpha_{l-1}z_{l-1}(x_{i};\bm{\theta}^{*}, \bm{\alpha}^{*},\bm{w}^{*}, b^{*})\right), \quad l = 2,...,L
\end{align*}
and the output of the exponential neuron in the first layer is
\begin{equation*}
z_{l}(x_{i};\bm{w}^{*},b^{*})=\exp\left({\bm{w}^{*}}^{\top}x_{i} +b^{*}\right).
\end{equation*}
Since $\tilde{\bm{\theta}}^{*}=(\bm{\theta}^{*}, \bm{\alpha}^{*},\bm{w}^{*}, b^{*})$ is a local minimum of the empirical loss function, now we consider a small perturbation on parameters  $\bm{\alpha}$, $\bm{w}$ and $b$. Let $\tilde{\bm{\theta}}^{*}+\bm{\Delta}=(\bm{\theta}^{*}, \bm{\alpha}^{*}+\Delta\bm{\alpha}, \bm{w}^{*}+\Delta\bm{w}, b^{*}+\Delta b)$ denote the perturbed parameters with perturbations $\Delta\bm{\alpha}$, $\Delta\bm{w}$ and $\Delta b$ on parameters $\bm{\alpha}$, $\bm{w}$ and $b$, respectively. Thus, by the definition of the local minimum, there exists a positive number $\varepsilon_{0}<1$ such that 
$$\tilde{L}_{n}(\tilde{\bm{\theta}}^{*}+\bm{\Delta})\ge \tilde{L}_{n}(\tilde{\bm{\theta}}^{*})$$
holds for all $\bm{\Delta}:\|\bm{\Delta}\|_{2}\le \varepsilon$. We note that since we do not perturb parameters $\bm{\theta}^{*}$ in the original neural network, then  the value of $g_{l}(x;\bm{\theta}^{*})$ does not change under parameters $\tilde{\bm{\theta}}^{*}+\bm{\Delta}$.

Now we consider the value of the loss function under parameters $\tilde{\bm{\theta}}^{*}+\bm{\Delta}$,
\begin{align*}
\tilde{L}_{n}(\tilde{\bm{\theta}}^{*}+\bm{\Delta})&=\sum_{i=1}^{n}\ell\left(-y_{i}\tilde{f}(x_{i};\tilde{\bm{\theta}}^{*}+\bm{\Delta})\right)+\frac{\lambda\|\Delta\bm{\alpha}\|^{2L}_{2L}}{2}\\
&=\sum_{i=1}^{n}\ell\left(-y_{i}g_{L}(x_{i};{\bm{\theta}}^{*})-y_{i}\Delta\alpha_{L}z_{L}(x_{i};\tilde{\bm{\theta}}^{*}+\bm{\Delta})\right)+\frac{\lambda\|\Delta\bm{\alpha}\|^{2L}_{2L}}{2}.
\end{align*}
By Taylor expansion, there exists a constant $\xi\in(0,1)$ such that 
\begin{align*}
\tilde{L}_{n}(\tilde{\bm{\theta}}^{*}+\bm{\Delta})&=\sum_{i=1}^{n}\ell\left(-y_{i}g_{L}(x_{i};{\bm{\theta}}^{*})-y_{i}\Delta\alpha_{L}z_{L}(x_{i};\tilde{\bm{\theta}}^{*}+\bm{\Delta})\right)+\frac{\lambda\|\Delta\bm{\alpha}\|^{2L}_{2L}}{2}\\
&=\sum_{i=1}^{n}\ell\left(-y_{i}g_{L}(x_{i};{\bm{\theta}}^{*})\right)+\sum_{i=1}^{n}\ell'\left(-y_{i}g_{L}(x_{i};{\bm{\theta}}^{*})\right)(-y_{i})\Delta\alpha_{L}z_{L}(x_{i};\tilde{\bm{\theta}}^{*}+\bm{\Delta})\\
&\quad+\sum_{i=1}^{n}\ell''\left(-y_{i}g_{L}(x_{i};{\bm{\theta}}^{*})-\xi y_{i}\Delta\alpha_{L}z_{L}(x_{i};\tilde{\bm{\theta}}^{*}+\bm{\Delta})\right)\left(\Delta\alpha_{L}\right)^{2}z^{2}_{L}(x_{i};\tilde{\bm{\theta}}^{*}+\bm{\Delta})\\
&\quad+\frac{\lambda\|\Delta\bm{\alpha}\|^{2L}_{2L}}{2}
\end{align*}
Given the parameters at the local minimum $\tilde{\bm{\theta}}^{*}$ and the dataset $\mathcal{D}=\{(x_{i},y_{i})\}_{i=1}^{n}$, $\ell''$ is a continuous function on $(\bm{\Delta},\xi)$ and  $z_{L}$ is a continuous function on $\bm{\Delta}$. Thus, on the bounded region, 
$\{(\bm{\Delta},\xi): \xi\in(0,1),\|\bm{\Delta}\|_{2}\le 1\}$, there exists a constant $C_{1}(\tilde{\bm{\theta}},\mathcal{D})$ depending on $\tilde{\bm{\theta}},\mathcal{D}$ such that  
\begin{equation}
\sum_{i=1}^{n}\ell''\left(-y_{i}g_{L}(x_{i};{\bm{\theta}}^{*})-\xi y_{i}\Delta\alpha_{L}z_{L}(x_{i};\tilde{\bm{\theta}}^{*}+\bm{\Delta})\right)z^{2}_{L}(x_{i};\tilde{\bm{\theta}}^{*}+\bm{\Delta})\le C_{1}(\tilde{\bm{\theta}}^{*},\mathcal{D}).
\end{equation}
This indicates that 
\begin{align*}
\tilde{L}_{n}(\tilde{\bm{\theta}}^{*},\bm{\Delta})&\le\sum_{i=1}^{n}\ell\left(-y_{i}g_{L}(x_{i};\tilde{\bm{\theta}}^{*})\right)+\Delta\alpha_{L}\sum_{i=1}^{n}\ell'\left(-y_{i}g_{L}(x_{i};{\bm{\theta}}^{*})\right)(-y_{i})z_{L}(x_{i};\tilde{\bm{\theta}}^{*}+\bm{\Delta})\\
&\quad+C_{1}(\tilde{\bm{\theta}}^{*},\mathcal{D})\left(\Delta\alpha_{L}\right)^{2}+\frac{\lambda\|\Delta\bm{\alpha}\|^{2L}_{2L}}{2}.
\end{align*}

Now we consider $z_{L}(x_{i};\tilde{\bm{\theta}}^{*}+\bm{\Delta})$ and introduce a new parameter $\lambda$ by setting 
\begin{equation*}
z_{l}(x_{i};\tilde{\bm{\theta}}^{*}+\bm{\Delta},\lambda)=\exp\left(g_{l-1}(x_{i};{\bm{\theta}}^{*})+\lambda\Delta\alpha_{l-1}z_{l-1}(x_{i};\tilde{\bm{\theta}}^{*}+\bm{\Delta}, \lambda)\right), \quad l = 2,...,L
\end{equation*}
\begin{equation*}
z_{1}(x_{i};\tilde{\bm{\theta}}^{*}+\bm{\Delta},\lambda)=\exp\left({\bm{w}^{*}}^{\top}x_{i} +b^{*}+\Delta\bm{{w}}^{\top}x_{i}+\Delta b\right)
\end{equation*}
Now we set $\Delta\alpha_{1}=...=\Delta\alpha_{L-1}=\exp(-1/\varepsilon)$, $\alpha_{L}=\exp(-(L+1)/\varepsilon)$ and $(\Delta\bm{w}, \Delta b )=(\varepsilon\bm{u}, \varepsilon v)$ for any $\bm{u}, v$ satisfying $\|\bm{u}\|^{2}_{2}+v^{2}=1.$ Obviously, $\left\|\bm{\Delta}\right\|_{2}\rightarrow 0$ as $\varepsilon\rightarrow 0$. This indicates that there exists a $\varepsilon_{1}<1$ such that for all $\varepsilon<\varepsilon_{1}$, the inequality
$$\tilde{L}_{n}(\tilde{\bm{\theta}}^{*}+\bm{\Delta})\ge \tilde{L}_{n}(\tilde{\bm{\theta}}^{*})$$
always holds. 

Now, for a given $\varepsilon$, we can view $z_{L}(x_{i};\bm{\theta}^{*},\varepsilon,\lambda)$ as a function of $\lambda$ and  expand the function at the point $\lambda =0$. Therefore, for each sample $(x_{i}, y_{i})$, there exists a scalar $c_{i}\in(0, 1)$ such that 
\begin{align*}
z_{L}(x_{i};\tilde{\bm{\theta}}^{*}+\bm{\Delta})&= z_{L}(x_{i};\tilde{\bm{\theta}}^{*},\varepsilon, \lambda)\left.\right|_{\lambda=1}\\
&=\sum_{k=0}^{L-1}\frac{1}{k!}\left.\frac{d^{k}z_{L}(x_{i};\bm{\Delta},\lambda)}{d\lambda^{k}}\right|_{\lambda=0}+\left.\frac{1}{L!}\frac{d^{L}z_{L}(x_{i};\bm{\Delta},\lambda)}{d\lambda^{L}}\right|_{\lambda=c_{i}}\\
\end{align*}

We first prove the following claim
\begin{claim}\label{claim::5}
For $l=2,...,L$, 
\begin{equation}\label{eq::claim5-1}
\frac{dz_{l}(x_{i};\tilde{\bm{\theta}}^{*},\varepsilon,\lambda)}{d\lambda}=\sum_{k=1}^{l-1}\lambda^{l-1-k}e^{-\frac{l-k}{\varepsilon}}\prod_{j=k}^{l}z_{k}(x_{i};\tilde{\bm{\theta}}^{*},\varepsilon,\lambda).
\end{equation}
\end{claim}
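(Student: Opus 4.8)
The plan is to prove Claim~\ref{claim::5} by induction on $l$, using nothing more than the chain rule applied to the recursive definition of the $z_l$'s. Throughout, I read the product on the right-hand side of \eqref{eq::claim5-1} as $\prod_{j=k}^{l}z_j(x_i;\tilde{\bm{\theta}}^{*},\varepsilon,\lambda)$, and I recall the conventions fixed just above the claim: $z_1(x_i;\tilde{\bm{\theta}}^{*}+\bm{\Delta},\lambda)=\exp({\bm{w}^{*}}^{\top}x_i+b^{*}+\Delta\bm{w}^{\top}x_i+\Delta b)$ does not depend on $\lambda$; for $l=2,\dots,L$ we have $z_l=\exp\big(g_{l-1}(x_i;\bm{\theta}^{*})+\lambda\,\Delta\alpha_{l-1}\,z_{l-1}\big)$; and $\Delta\alpha_1=\cdots=\Delta\alpha_{L-1}=e^{-1/\varepsilon}$.

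First I would record the one-step recursion for the derivative. Since $g_{l-1}(x_i;\bm{\theta}^{*})$ is independent of $\lambda$, differentiating the defining formula for $z_l$ gives
$$\frac{dz_l}{d\lambda}=z_l\cdot\frac{d}{d\lambda}\Big(g_{l-1}+\lambda\,\Delta\alpha_{l-1}\,z_{l-1}\Big)=z_l\,e^{-1/\varepsilon}\Big(z_{l-1}+\lambda\,\frac{dz_{l-1}}{d\lambda}\Big),\qquad l=2,\dots,L.$$
For the base case $l=2$ the term $\lambda\,\frac{dz_1}{d\lambda}$ vanishes because $z_1$ is constant in $\lambda$, so $\frac{dz_2}{d\lambda}=e^{-1/\varepsilon}z_1z_2$, which is exactly the right-hand side of \eqref{eq::claim5-1} for $l=2$ (its single term, corresponding to $k=1$).

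For the inductive step, suppose \eqref{eq::claim5-1} holds for $l-1$. Substitute this hypothesis into the one-step recursion and distribute the factor $z_l\,e^{-1/\varepsilon}$. The piece coming from $z_{l-1}$ is $z_l\,e^{-1/\varepsilon}z_{l-1}=\lambda^{0}e^{-1/\varepsilon}\prod_{j=l-1}^{l}z_j$, which is precisely the $k=l-1$ term of the target sum. The piece coming from $\lambda\,\frac{dz_{l-1}}{d\lambda}$ multiplies $\sum_{k=1}^{l-2}\lambda^{l-2-k}e^{-(l-1-k)/\varepsilon}\prod_{j=k}^{l-1}z_j$ by $\lambda\,e^{-1/\varepsilon}z_l$, which raises the power of $\lambda$ from $l-2-k$ to $l-1-k$, the exponent of $e^{-\cdot/\varepsilon}$ from $l-1-k$ to $l-k$, and appends the factor $z_l$ to each product, yielding $\sum_{k=1}^{l-2}\lambda^{l-1-k}e^{-(l-k)/\varepsilon}\prod_{j=k}^{l}z_j$, i.e. the terms $k=1,\dots,l-2$ of the target sum. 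Adding the two pieces gives $\sum_{k=1}^{l-1}\lambda^{l-1-k}e^{-(l-k)/\varepsilon}\prod_{j=k}^{l}z_j$, which closes the induction.

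There is no serious obstacle here; the argument is entirely routine, and the only care needed is the bookkeeping of the two exponents when one multiplies the inductive sum through by $\lambda\,e^{-1/\varepsilon}z_l$, together with checking that the extra term $z_{l-1}z_l\,e^{-1/\varepsilon}$ produced by the product rule supplies exactly the missing $k=l-1$ summand. This formula is what drives the remainder of the proof of Lemma~\ref{lemma::empirical-loss}: evaluating $\frac{d^{k}z_L}{d\lambda^{k}}$ at $\lambda=0$ pulls out a factor $e^{-k/\varepsilon}$ (times bounded products of the quantities $z_j(x_i;\tilde{\bm{\theta}}^{*},\varepsilon,0)$), so that in the Taylor expansion of $\tilde{L}_n(\tilde{\bm{\theta}}^{*}+\bm{\Delta})$ about $\lambda=0$ every term beyond the one linear in $\Delta\alpha_L$ is negligible as $\varepsilon\to0$, leaving the claimed identity involving $(\bm{u}^{\top}x_i+v)^{p}$.
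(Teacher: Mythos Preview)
Your proof is correct and follows essentially the same approach as the paper: both arguments establish the one-step recursion $\frac{dz_l}{d\lambda}=z_l\,e^{-1/\varepsilon}\big(z_{l-1}+\lambda\,\frac{dz_{l-1}}{d\lambda}\big)$ and then run an induction on $l$, identifying the $z_{l-1}$ term as the new $k=l-1$ summand and shifting the exponents in the inductive sum by one via multiplication by $\lambda\,e^{-1/\varepsilon}z_l$. You also correctly read the typo $\prod_{j=k}^{l}z_k$ as $\prod_{j=k}^{l}z_j$.
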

\begin{proof}
Since
$$z_{1}(x_{i})=\exp({\bm{w}^{*}}^{\top}x_{i}+b^{*}),$$
then 
$$\frac{d z_{1}(x_{i})}{d\lambda}=0.$$
Furthermore,  for $l=2,...,L$
$$z_{l}(x_{i};\tilde{\bm{\theta}}^{*},\varepsilon,\lambda)=\exp\left(g_{l-1}(x_{i};\bm{\theta}^{*})+\lambda e^{-1/\varepsilon}z_{l-1}(x_{i};\tilde{\bm{\theta}}^{*},\varepsilon,\lambda)\right),$$
we should have 
\begin{align*}
\frac{dz_{l}}{d\lambda}=z_{l}\left(e^{-1/\varepsilon}z_{l-1}+\lambda e^{-1/\varepsilon}\frac{dz_{l-1}}{d\lambda}\right)=z_{l}e^{-1/\varepsilon}\left(z_{l-1}+\lambda\frac{dz_{l-1}}{d\lambda}\right).
\end{align*}
Therefore, the base hypothesis $l=2$ holds directly by
\begin{align*}
\frac{dz_{2}}{d\lambda}=e^{-1/\varepsilon}z_{2}z_{1}. 
\end{align*}
Now the inductive hypothesis is Eq.~\eqref{eq::claim5-1} holds when $l=s$. Then for $l=s+1$, we have 
\begin{align*}
\frac{dz_{s+1}}{d\lambda}&=e^{-1/\varepsilon}z_{s+1}z_{s}+\lambda e^{-1/\e}z_{s+1}\sum_{k=1}^{s-1}\lambda^{s-1-k}e^{-\frac{s-k}{\e}}\prod_{j=k}^{s}z_{j}\\
&=e^{-1/\varepsilon}z_{s+1}z_{s}+\sum_{k=1}^{s-1}\lambda^{s-k}e^{-\frac{s+1-k}{\e}}\prod_{j=k}^{s+1}z_{j}\\
&=\sum_{k=1}^{s}\lambda^{s-k}e^{-\frac{s+1-k}{\e}}\prod_{j=k}^{s+1}z_{j}
\end{align*} 
and this indicates Eq.~\eqref{eq::claim5-1} holds for $l=s+1$. Thus, we finished the induction. 
\end{proof}

\begin{claim}\label{claim::6}
For each integer $m\ge 1$, there exists functions $\beta(i_{1},...,i_{L}):\mathbb{N}^{L}\rightarrow \mathbb{N}$, $\gamma_{1}(i_{1},...,i_{L}):\mathbb{N}^{L}\rightarrow \mathbb{N}$ and $\gamma_{2}(i_{1},...,i_{L}):\mathbb{N}^{L}\rightarrow \mathbb{N}$ satisfying 
\begin{align*}
\gamma_{2}(i_{1},...,i_{L})= i_{1}+...+i_{L}-1=\gamma_{1}(i_{1},...,i_{L})+m
\end{align*}
for any $i_{1},...,i_{L}\in\mathbb{N}$ such that 
\begin{equation}\label{eq::claim6-gradient}
\frac{d^{m}z_{L}}{d\lambda^{m}}=\sum_{i_{1},...,i_{L}\ge 0}\beta(i_{1},...,i_{L})\lambda^{\gamma_{1}(i_{1},...,i_{L})}e^{-\frac{\gamma_{2}(i_{1},...,i_{L})}{\e}}\prod_{j=1}^{L}z_{j}^{i_{j}}.
\end{equation}
\end{claim}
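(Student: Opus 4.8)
The plan is to prove Claim~\ref{claim::6} by induction on $m$, with Claim~\ref{claim::5} supplying the formula needed at every differentiation. The bookkeeping device I would use is to attach to each monomial term $\lambda^{a}e^{-b/\varepsilon}\prod_{j=1}^{L}z_j^{i_j}$ that appears in $\frac{d^m z_L}{d\lambda^m}$ the two invariants (i) $b = D-1$, where $D:=i_1+\cdots+i_L$ is the total degree in the $z_j$'s, and (ii) $b = a+m$. These two together determine $a$ and $b$ from the monomial alone, which is exactly the statement that $\gamma_1,\gamma_2$ (and hence $\beta$, after collecting terms) are well-defined functions of $(i_1,\dots,i_L)$.

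For the base case $m=1$, I would instantiate Claim~\ref{claim::5} at $l=L$: $\frac{dz_L}{d\lambda}=\sum_{k=1}^{L-1}\lambda^{L-1-k}e^{-(L-k)/\varepsilon}\prod_{j=k}^{L}z_j$. The $k$-th term has monomial $\prod_{j=k}^{L}z_j$, so $D=L-k+1$, $b=L-k$, $a=L-1-k$, and one reads off $b=D-1$ and $b=a+1$. For any monomial of degree $<m+1$ --- none occur here, but this matters later --- I would set $\beta=\gamma_1=\gamma_2=0$, which is harmless since the corresponding term vanishes.

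For the inductive step, assume~\eqref{eq::claim6-gradient} holds for $m=s$ with the two invariants, and differentiate once more in $\lambda$ term by term. A term $\beta\lambda^{\gamma_1}e^{-\gamma_2/\varepsilon}\prod z_j^{i_j}$ splits into two contributions. Differentiating the power of $\lambda$ gives $\gamma_1\beta\lambda^{\gamma_1-1}e^{-\gamma_2/\varepsilon}\prod z_j^{i_j}$, which leaves $D$ and $b$ unchanged while $a$ decreases by one, so invariant (i) is untouched and invariant (ii) upgrades to $b=a+(s+1)$. Differentiating the monomial gives $\big(\prod z_j^{i_j}\big)\sum_{k}i_k\,z_k^{-1}\tfrac{dz_k}{d\lambda}$ times $\beta\lambda^{\gamma_1}e^{-\gamma_2/\varepsilon}$; here Claim~\ref{claim::5} gives $z_k^{-1}\tfrac{dz_k}{d\lambda}=\sum_{r=1}^{k-1}\lambda^{k-1-r}e^{-(k-r)/\varepsilon}\prod_{j=r}^{k-1}z_j$, where the $z_k$ cancels so every resulting term is still a genuine polynomial in the $z_j$'s. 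Each such term raises $D$ by $k-r$, raises the $\lambda$-power by $k-1-r$, and raises the $e^{-1/\varepsilon}$-power by $k-r$; a one-line check confirms both $b'=D'-1$ and $b'=a'+(s+1)$. Thus every term of $\frac{d^{s+1}z_L}{d\lambda^{s+1}}$ obeys the two invariants for $m=s+1$. Finally I would collect the terms by common monomial: invariants (i)--(ii) force $\gamma_2=D-1$ and $\gamma_1=D-1-(s+1)$ to depend only on $D$, so terms sharing a monomial also share their $\lambda$- and $e^{-1/\varepsilon}$-powers and may be merged, defining $\beta(i_1,\dots,i_L)$ as the resulting integer coefficient and $\beta=\gamma_1=\gamma_2=0$ on monomials of degree $<s+2$. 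This is exactly~\eqref{eq::claim6-gradient} at $m=s+1$, completing the induction.

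I expect the only real difficulty to be the bookkeeping in the inductive step: verifying that the pair of invariants $\gamma_2=D-1$ and $\gamma_2=\gamma_1+m$ is preserved simultaneously under both kinds of differentiation, and recognizing that this double constraint is precisely what makes the collection of terms into functions of $(i_1,\dots,i_L)$ legitimate. Beyond Claim~\ref{claim::5} there is no conceptual content; the care is entirely in matching exponents.
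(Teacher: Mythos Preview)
Your proposal is correct and follows essentially the same route as the paper: induction on $m$, with Claim~\ref{claim::5} supplying the derivative formula, and verification that the two bookkeeping invariants $\gamma_2 = (i_1+\cdots+i_L)-1$ and $\gamma_2=\gamma_1+m$ survive both the product-rule contribution from $\lambda^{\gamma_1}$ and the contribution from differentiating $\prod z_j^{i_j}$. Your explicit observation that these invariants force the $\lambda$- and $e^{-1/\varepsilon}$-exponents to depend only on the monomial (hence making $\beta,\gamma_1,\gamma_2$ well-defined after collecting terms) is a point the paper leaves implicit, so your write-up is if anything slightly tighter.
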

\begin{proof}
Now we first check the base hypothesis where $m=1$. By claim~\ref{claim::5}, we have proved that 
$$\frac{dz_{L}}{d\lambda}=\sum_{k=1}^{L-1}\lambda^{L-1-k}e^{-\frac{L-k}{\varepsilon}}\prod_{j=k}^{L}z_{j}.$$
Then we can observe that (1) the coefficient of each term in the summation, the power of $\lambda$ and the power of $e^{-1/\e}$ are all natural numbers, therefore, functions $\beta, \gamma_{1},\gamma_{2}:\mathbb{N}^{L}\rightarrow \mathbb{N}$ exist; (2) $\gamma_{2}=L-k= L-k-1+1=\gamma_{1}+1$ and $\gamma_{2}=L-k= i_{1}+...+i_{L}-1=L-k+1-1$. Thus, we have established the base hypothesis. 

Now the inductive hypothesis is the claim holds under $m$ and we next check the claim under $m+1$. 
Since the claim holds under $m$, then we have 
\begin{equation}
\frac{d^{m}z_{L}}{d\lambda^{m}}=\sum_{i_{1},...,i_{L}\ge 0, i_{1}+...+i_{L}\ge m+1}\beta(i_{1},...,i_{L})\lambda^{\gamma_{1}(i_{1},...,i_{L})}e^{-\frac{\gamma_{2}(i_{1},...,i_{L})}{\e}}\prod_{j=1}^{L}z_{j}^{i_{j}},
\end{equation}
and $\gamma_{1}$ and $\gamma_{2}$ satisfy $$\gamma_{2}(i_{1},...,i_{L})= \gamma_{1}(i_{1},...,i_{L})+m= i_{1}+...+i_{L}-1.$$
Thus 
\begin{align*}
\frac{d^{m+1}z_{L}}{d\lambda^{m+1}}&=\frac{d}{d\lambda}\left[\sum_{i_{1},...,i_{L}\ge 0, i_{1}+...+i_{L}\ge m+1}\beta(i_{1},...,i_{L})\lambda^{\gamma_{1}(i_{1},...,i_{L})}e^{-\frac{\gamma_{2}(i_{1},...,i_{L})}{\e}}\prod_{j=1}^{L}z_{j}^{i_{j}}\right]\\
&=\sum_{i_{1},...,i_{L}\ge 0, i_{1}+...+i_{L}\ge m+1}\beta(i_{1},...,i_{L})\frac{d}{d\lambda}\left[\lambda^{\gamma_{1}(i_{1},...,i_{L})}e^{-\frac{\gamma_{2}(i_{1},...,i_{L})}{\e}}\prod_{j=1}^{L}z_{j}^{i_{j}}\right]
\end{align*}
We first consider the case where $\gamma_{1}(i_{1},...,i_{L})=0$, 
\begin{align*}
\frac{d}{d\lambda}\left[e^{-\frac{\gamma_{2}(i_{1},...,i_{L})}{\e}}\prod_{j=1}^{L}z_{j}^{i_{j}}\right]=e^{-\frac{\gamma_{2}(i_{1},...,i_{L})}{\e}}\frac{d}{d\lambda}\left[\prod_{j=1}^{L}z_{j}^{i_{j}}\right],
\end{align*} 
where 
\begin{align}
\frac{d}{d\lambda}\prod_{j=1}^{L}z_{j}^{i_{j}}&=\left[\prod_{j=1}^{L}z_{j}^{i_{j}}\right]\left[\sum_{k=1}^{L}i_{k}z_{k}^{-1}\frac{dz_{k}}{d\lambda}\right]\notag\\
&=\left[\prod_{j=1}^{L}z_{j}^{i_{j}}\right]\left[\sum_{k=1}^{L}i_{k}\sum_{j=1}^{k-1}\lambda^{k-1-j}e^{-\frac{k-j}{\e}}\prod_{r=j}^{k-1}z_{r}\right].\label{eq::claim6-2}
\end{align}
First, it is easy to see that the coefficient of each term in~\eqref{eq::claim6-2} is a natural number and the power of $\lambda$ and $e^{-1/\e}$ are all natural number as well.  Second, since $i_{1}+...+i_{L}= \gamma_{2}+1$ and for the term $\lambda^{k-1-j}e^{-\frac{k-j}{\e}}\prod_{r=j}^{k-1}z_{r}$, the total degree of $\prod_{r=j}^{k-1}z_{r}$ is  $k-j$ and always equals to the power of $e^{-1/\e}$, then for each term in $e^{-\frac{\gamma_{2}(i_{1},...,i_{L})}{\e}}\left[\prod_{j=1}^{L}z_{j}^{i_{j}}\right]\left[\sum_{k=1}^{L}i_{k}\sum_{j=1}^{k-1}\lambda^{k-1-j}e^{-\frac{k-j}{\e}}\prod_{r=j}^{k-1}z_{r}\right]$ the degree of $z_{l}$ product always exceeds the power of $e^{-1/\e}$ by one and the power of $e^{-1/\e}$ always exceeds the power of $\lambda$ by $m+1$. Therefore, the hypothesis holds under $m+1$ for the term $\frac{d}{d\lambda}\left[e^{-\frac{\gamma_{2}(i_{1},...,i_{L})}{\e}}\prod_{j=1}^{L}z_{j}^{i_{j}}\right]$.

Next we consider the case where $\gamma_{1}(i_{1},...,i_{L})\ge 1$, 
\begin{align*}
\frac{d}{d\lambda}\left[\lambda^{\gamma_{1}(i_{1},...,i_{L})}e^{-\frac{\gamma_{2}(i_{1},...,i_{L})}{\e}}\prod_{j=1}^{L}z_{j}^{i_{j}}\right]&=\gamma_{1}(i_{1},...,i_{L})\lambda^{\gamma_{1}(i_{1},...,i_{L})-1}e^{-\frac{\gamma_{2}(i_{1},...,i_{L})}{\e}}\prod_{j=1}^{L}z_{j}^{i_{j}}\\
&\quad+\lambda^{\gamma_{1}(i_{1},...,i_{L})}e^{-\frac{\gamma_{2}(i_{1},...,i_{L})}{\e}}\frac{d}{d\lambda}\left[\prod_{j=1}^{L}z_{j}^{i_{j}}\right],
\end{align*}

Now we check whether the hypothesis holds for the term  
$$\gamma_{1}(i_{1},...,i_{L})\lambda^{\gamma_{1}(i_{1},...,i_{L})-1}e^{-\frac{\gamma_{2}(i_{1},...,i_{L})}{\e}}\prod_{j=1}^{L}z_{j}^{i_{j}}.$$
Since $\gamma_{1}$ is a natural number and $\gamma_{1}\ge 1$, and then $\gamma_{1}-1$ is still a natural number. Next, the power of $e^{-1/\e}$ exceeds the power of $\lambda$ by $m+1$. Furthermore, $i_{1}+...+i_{L}= \gamma_{2}(i_{1},...,i_{L})+1$ still holds by the hypothesis that the claim holds under $m$. In addition, since $\gamma_{1}\ge 1$, then by the hypothesis, we have $i_{1}+...+i_{L}-1= \gamma_{1}+m$ and thus have $i_{1}+...+i_{L}-1= \gamma_{1}-1+m+1$. Thus, the hypothesis under $m+1$ holds for the term $\gamma_{1}(i_{1},...,i_{L})\lambda^{\gamma_{1}(i_{1},...,i_{L})-1}e^{-\frac{\gamma_{2}(i_{1},...,i_{L})}{\e}}\prod_{j=1}^{L}z_{j}^{i_{j}}.$

Finally, we check whether the hypothesis holds in the term 
\begin{align*}
\lambda^{\gamma_{1}(i_{1},...,i_{L})}&e^{-\frac{\gamma_{2}(i_{1},...,i_{L})}{\e}}\frac{d}{d\lambda}\left[\prod_{j=1}^{L}z_{j}^{i_{j}}\right]\\
&=\lambda^{\gamma_{1}(i_{1},...,i_{L})}e^{-\frac{\gamma_{2}(i_{1},...,i_{L})}{\e}}\left[\prod_{j=1}^{L}z_{j}^{i_{j}}\right]\left[\sum_{k=1}^{L}i_{k}\sum_{j=1}^{k-1}\lambda^{k-1-j}e^{-\frac{k-j}{\e}}\prod_{r=j}^{k-1}z_{r}\right]
\end{align*}
Since $i_{k}\in\mathbb{N}$ holds for all $k$, then the coefficient of each term in  $\frac{d}{d\lambda}\left[\lambda^{\gamma_{1}(i_{1},...,i_{L})}e^{-\frac{\gamma_{2}(i_{1},...,i_{L})}{\e}}\prod_{j=1}^{L}z_{j}^{i_{j}}\right]$ is a natural number. Further, the power of $\lambda$ and $e^{-1/\e}$ are all natural numbers. Second, since $\gamma_{2}(i_{1},...,i_{L})= i_{1}+...+i_{L}-1$ and for the term $\lambda^{k-1-j}e^{-\frac{k-j}{\e}}\prod_{r=j}^{k-1}z_{r}$, the power of $e^{-1/\e}$ is $k-j$ and thus equals to the degree of the product term $\prod_{r=j}^{k-1}z_{r}$. Third, since for the term $\lambda^{k-1-j}e^{-\frac{k-j}{\e}}\prod_{r=j}^{k-1}z_{r}$, the degree of $\prod_{r=j}^{k-1}z_{r}$ is $k-j$ always exceed the power of $\lambda$ by one and $i_{1}+...+i_{L}= \gamma_{1}+m+1$, then each term in the expression $\lambda^{\gamma_{1}(i_{1},...,i_{L})}e^{-\frac{\gamma_{2}(i_{1},...,i_{L})}{\e}}\frac{d}{d\lambda}\left[\prod_{j=1}^{L}z_{j}^{i_{j}}\right]$, the degree of $z$s product always exceed the power of $\lambda$ by $m+2$. Thus, the hypothesis holds for the term $\lambda^{\gamma_{1}(i_{1},...,i_{L})}e^{-\frac{\gamma_{2}(i_{1},...,i_{L})}{\e}}\frac{d}{d\lambda}\left[\prod_{j=1}^{L}z_{j}^{i_{j}}\right]$.

Therefore, we complete the induction. 
\end{proof}

\begin{claim} There exists a series of function $\rho_{k}$, $k=0,...,L-1$ and function $\eta$  such that
\begin{equation*}
\left.\sum_{l=1}^{L-1}\frac{d^{l}z_{L}}{d{\lambda}^{l}}\right|_{\lambda=0}= \sum_{k=0}^{L-1}\rho_{k}(\tilde{\bm{\theta}}^{*}, x_{i})e^{-k/\e}+\eta(\tilde{\bm{\theta}}^{*}, x_{i})e^{-(L-1)/\e}\exp(\e \bm{u}^{\top}x_{i}+\e v),
\end{equation*}
where  the function $\eta$ satisfies $\eta(\tilde{\bm{\theta}},x)>0$ holds for all $x\in\mathbb{R}^{d}$ and any parameter $\tilde{\bm{\theta}}$.
\end{claim}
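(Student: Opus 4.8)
The plan is to read $\left.\sum_{l=1}^{L-1}\frac{d^l z_L}{d\lambda^l}\right|_{\lambda=0}$ off Claim~\ref{claim::6} and sort the resulting monomials by the exponent of $z_1$. Evaluating Claim~\ref{claim::6} at $\lambda=0$ kills every monomial with $\gamma_1(i_1,\dots,i_L)\ge1$; the survivors have $\gamma_1=0$, hence $\gamma_2=i_1+\dots+i_L-1=l$, so
\begin{equation*}
\left.\frac{d^l z_L}{d\lambda^l}\right|_{\lambda=0}=e^{-l/\e}\sum_{i_1+\dots+i_L=l+1}\beta(i_1,\dots,i_L)\prod_{j=1}^{L}\bigl(z_j|_{\lambda=0}\bigr)^{i_j}.
\end{equation*}
Here, for $j\ge2$, $z_j|_{\lambda=0}=\exp\!\bigl(g_{j-1}(x_i;\bm\theta^*)\bigr)>0$ depends only on $(\tilde{\bm\theta}^*,x_i)$, while $z_1$ is constant in $\lambda$ and equals $\exp(\bm w^{*\top}x_i+b^*)\exp(\e\bm u^\top x_i+\e v)$; thus the perturbation factor $\exp(\e\bm u^\top x_i+\e v)$ appears raised exactly to the power $i_1$, and everything reduces to controlling $i_1$.

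The main obstacle is bounding $i_1$, since Claim~\ref{claim::6} controls only the total degree $i_1+\dots+i_L$. I would prove, by induction on $l\in\{2,\dots,L\}$ along the exponential chain $z_2=\exp(g_1+\lambda e^{-1/\e}z_1)$, $z_l=\exp(g_{l-1}+\lambda e^{-1/\e}z_{l-1})$ (with $z_1$ constant), that when $z_l(x_i;\tilde{\bm\theta}^*,\e,\lambda)$ is expanded as a power series in $\lambda$, every monomial proportional to $z_1^{\,p}$ with $p\ge1$ carries a factor $\lambda^{q}$ with $q\ge p+(l-2)$. The base case $l=2$ is immediate from $z_2=\sum_{p\ge0}\frac{(e^{-1/\e}z_1)^{p}}{p!}\lambda^{p}$, and the inductive step follows by substituting the series for $z_{l-1}$ into $\exp(\lambda e^{-1/\e}z_{l-1})$ and collecting powers. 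Granting this, a $z_1^{\,p}$ monomial survives in $\left.\frac{d^l z_L}{d\lambda^l}\right|_{\lambda=0}$ only when $l\ge p+L-2$; since $l\le L-1$ this forces $p\le1$, and $p=1$ is possible only at $l=L-1$.

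Assembling the pieces: for $l=1,\dots,L-2$ all surviving monomials have $i_1=0$, so $\left.\frac{d^l z_L}{d\lambda^l}\right|_{\lambda=0}=\rho_l(\tilde{\bm\theta}^*,x_i)\,e^{-l/\e}$, where $\rho_l$ is a polynomial in the positive quantities $\exp(g_1(x_i;\bm\theta^*)),\dots,\exp(g_{L-1}(x_i;\bm\theta^*))$ with coefficients $\beta(0,i_2,\dots,i_L)$; for $l=L-1$, the $i_1=0$ monomials give an analogous $\rho_{L-1}(\tilde{\bm\theta}^*,x_i)\,e^{-(L-1)/\e}$, and the $i_1=1$ monomials give $e^{-(L-1)/\e}\exp(\e\bm u^\top x_i+\e v)\,\eta(\tilde{\bm\theta}^*,x_i)$ with
\begin{equation*}
\eta(\tilde{\bm\theta},x)=\exp(\bm w^{\top}x+b)\!\!\!\sum_{i_2+\dots+i_L=L-1}\!\!\!\beta(1,i_2,\dots,i_L)\prod_{j=2}^{L}\exp\!\bigl(i_j\,g_{j-1}(x;\bm\theta)\bigr).
\end{equation*}
Setting $\rho_0\equiv0$ and summing over $l$ gives the claimed identity. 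Finally, $\eta>0$ because all the exponentials are positive, all $\beta$ are nonnegative, and at least one relevant coefficient is strictly positive: taking the $k=1$ summand $\lambda^{L-2}e^{-(L-1)/\e}z_1z_2\cdots z_L$ of $\frac{dz_L}{d\lambda}$ from Claim~\ref{claim::5} and putting all $L-2$ remaining derivatives on $\lambda^{L-2}$ produces the monomial $z_1z_2\cdots z_L$ in $\left.\frac{d^{L-1}z_L}{d\lambda^{L-1}}\right|_{\lambda=0}$ with coefficient at least $(L-2)!$, with no cancellation since every $\beta\ge0$; hence $\eta(\tilde{\bm\theta},x)\ge(L-2)!\,\exp(\bm w^\top x+b)\prod_{j=2}^{L}\exp(g_{j-1}(x;\bm\theta))>0$ for every $x$ and $\tilde{\bm\theta}$.
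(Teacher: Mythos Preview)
Your proposal is correct and follows the same broad plan as the paper: evaluate Claim~\ref{claim::6} at $\lambda=0$, show that the surviving terms have $i_1\in\{0,1\}$ (with $i_1=1$ possible only at $l=L-1$), and collect the $i_1=0$ pieces into the $\rho_k$ and the $i_1=1$ piece into $\eta$. The difference is in how the bound on $i_1$ is obtained. The paper uses a structural fact about the expansion in Claim~\ref{claim::6}: whenever $i_1\ge1$ one has $i_2\ge1,\dots,i_L\ge1$ (the support $\{j:i_j\ge1\}$ is always an interval $\{k,\dots,L\}$, as one checks inductively from Claim~\ref{claim::5}); combined with $i_1+\dots+i_L=l+1$ this forces $l\ge L-1$ and, at $l=L-1$, uniquely $(i_1,\dots,i_L)=(1,\dots,1)$. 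You instead run an independent power-series induction along the chain to show that any $z_1^p$ contribution to $z_L$ carries $\lambda^q$ with $q\ge p+L-2$, which gives the same $i_1\le1$ conclusion. Both routes are valid: the paper's is shorter but states the key structural fact tersely (and with an apparent index typo), while yours is self-contained. Your $\eta$ is a priori a sum over all $(1,i_2,\dots,i_L)$ with $i_2+\dots+i_L=L-1$, whereas the paper's reduces to the single term $\beta(1,\dots,1)\exp\!\bigl(\sum_{l}g_l+\bm w^\top x+b\bigr)$; these agree because the extra terms in your sum have $\beta=0$ by the paper's structural observation. Your positivity argument (exhibiting the explicit contribution $(L-2)!$ to $\beta(1,\dots,1)$ by differentiating only the $\lambda^{L-2}$ factor, with no cancellation since all $\beta\ge0$) is more detailed than the paper, which simply asserts $\eta>0$.
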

\begin{proof}
By Claim~\ref{claim::6}, $(L-1)$-th order derivative can be rewritten as 
 \begin{equation}
\frac{d^{L-1}z_{L}}{d\lambda^{L-1}}=\sum_{i_{1},...,i_{L}\ge 0}\beta(i_{1},...,i_{L})\lambda^{\gamma_{1}(i_{1},...,i_{L})}e^{-\frac{\gamma_{2}(i_{1},...,i_{L})}{\e}}\prod_{j=1}^{L}z_{j}^{i_{j}},
\end{equation}
for some function $\beta, \gamma_{1}$ and $\gamma_{2}$ satisfying 
\begin{align*}
\gamma_{2}(i_{1},...,i_{L})= i_{1}+...+i_{L}-1=\gamma_{1}(i_{1},...,i_{L})+L-1.
\end{align*}
Now we calculate the value of $\frac{d^{L-1}z_{L}}{d\lambda^{L-1}}$ at the point $\lambda=0$. Then
\begin{align*}
\left.\frac{d^{L-1}z_{L}}{d\lambda^{L-1}}\right|_{\lambda=0}&=\sum_{i_{1},...,i_{L}\ge 0, i_{1}+...+i_{L}=L}\beta(i_{1},...,i_{L})e^{-\frac{L}{\e}}\left.\prod_{j=1}^{L}z_{j}^{i_{j}}\right|_{\lambda=0}
\end{align*}
By the property of the chain's rule and the property of exponential neuron, we can easily see that if $i_{L}\ge 1$, then $i_{1}\ge 1,...,i_{L-1}\ge 1$. This indicates that if $i_{L}\ge 1$ and $i_{1},...,i_{L}$ satisfy $i_{1}+...+i_{L}= L$, then $i_{1}=...=i_{L}=1$. Therefore, we have 
\begin{align*}
&\left.\frac{d^{L-1}z_{L}}{d\lambda^{L-1}}\right|_{\lambda=0}=\beta(1,...,1)e^{-(L-1)/\varepsilon}\exp\left(\sum_{l=1}^{L-1}g_{l}(x_{i};\bm{\theta}^{*})+{\bm{w}^{*}}^{\top}x_{i}+b^{*}+\e\bm{u}^{\top}x_{i}+\e v\right)\\
&+\sum_{i_{1},...,i_{L}\ge 0, i_{1}+...+i_{L}=L}\beta(i_{1},...,i_{L})e^{-\frac{L-1}{\e}}\exp\left(\sum_{l=1}^{L-1}i_{l+1}g_{l}(x_{i};\bm{\theta}^{*})\right)
\end{align*}
Now we define 
\begin{align*}
\eta(x_{i};\bm{\theta}^{*})&\triangleq\beta(1,...,1)\exp\left(\sum_{l=1}^{L-1}g_{l}(x_{i};\bm{\theta}^{*})+{\bm{w}^{*}}^{\top}x_{i}+b^{*}\right),\\
\rho_{L-1}(x_{i};\bm{\theta}^{*})&\triangleq\sum_{i_{1},...,i_{L}\ge 0, i_{1}+...+i_{L}=L}\beta(i_{1},...,i_{L})\exp\left(\sum_{l=1}^{L-1}i_{l+1}g_{l}(x_{i};\bm{\theta}^{*})\right),
\end{align*}
and thus 
\begin{align*}
\left.\frac{d^{L-1}z_{L}}{d\lambda^{L-1}}\right|_{\lambda=0}&=e^{-(L-1)/\e}\eta(x_{i};\bm{\theta}^{*})\exp(\e\bm{u}^{\top}x_{i}+\e v)+\rho_{L-1}(x_{i};\bm{\theta}^{*})e^{-(L-1)/\e}.
\end{align*}

Furthermore, we can rewrite $d^{l}z_{L}/d\lambda^{l}$ as 
 \begin{equation}
\frac{d^{l}z_{L}}{d\lambda^{l}}=\sum_{i_{1},...,i_{L}\ge 0}\beta_{l}(i_{1},...,i_{L})\lambda^{\gamma_{1,l}(i_{1},...,i_{L})}e^{-\frac{\gamma_{2,l}(i_{1},...,i_{L})}{\e}}\prod_{j=1}^{L}z_{j}^{i_{j}},
\end{equation}
for some function $\beta_{l}, \gamma_{1,l}$ and $\gamma_{2,l}$ satisfying 
\begin{align*}
\gamma_{2,l}(i_{1},...,i_{L})= i_{1}+...+i_{L}-1=\gamma_{1,l}(i_{1},...,i_{L})+l.
\end{align*}
Therefore, the value of $d^{l}z_{L}/d\lambda^{l}$ at $\lambda =0$ is
\begin{align*}
\left.\frac{d^{l}z_{L}}{d\lambda^{l}}\right|_{\lambda=0}&=\sum_{i_{1},...,i_{L}\ge 0,i_{1}+...+i_{L}=l+1}\beta_{l}(i_{1},...,i_{L})e^{-\frac{l}{\e}}\prod_{j=1}^{L}z_{j}^{i_{j}}\\
&=\sum_{i_{1},...,i_{L}\ge 0,i_{1}+...+i_{L}=l+1}\beta_{l}(i_{1},...,i_{L})e^{-\frac{l}{\e}}\exp\left(\sum_{j=1}^{L-1}i_{j+1}g_{j}(x_{i};\bm{\theta}^{*})+i_{1}\e\bm{u}^{\top}x_{i}+i_{1}\e v\right).
\end{align*}
Furthermore, since $i_{1}\ge 1$ indicates that $i_{1}+...+i_{L}\ge L$ and $l< L-1$, then this we have for all $l<L-1$, $i_{1}=0$ and 
\begin{align*}
\left.\frac{d^{l}z_{L}}{d\lambda^{l}}\right|_{\lambda=0}&=\sum_{i_{1},...,i_{L}\ge 0,i_{1}+...+i_{L}=l+1}\beta_{l}(i_{1},...,i_{L})e^{-\frac{l}{\e}}\exp\left(\sum_{j=1}^{L-1}i_{j+1}g_{j}(x_{i};\bm{\theta}^{*})\right)\\
&\triangleq\rho_{l}(x_{i};\bm{\theta}^{*})e^{-l/\e}.
\end{align*}
Therefore, we have 
\begin{equation*}
\left.\sum_{l=1}^{L-1}\frac{d^{l}z_{L}}{d{\lambda}^{l}}\right|_{\lambda=0}= \sum_{k=0}^{L-1}\rho_{k}(\tilde{\bm{\theta}}^{*}, x_{i})e^{-k/\e}+\eta(\tilde{\bm{\theta}}^{*}, x_{i})e^{-(L-1)/\e}\exp(\e \bm{u}^{\top}x_{i}+\e v),
\end{equation*}
where  the function $\eta$ satisfies $\eta(\tilde{\bm{\theta}},x)>0$ holds for all $x\in\mathbb{R}^{d}$ and any parameter $\tilde{\bm{\theta}}$.
\end{proof}

Therefore, we have 
\begin{align*}
z_{L}(x_{i};\tilde{\bm{\theta}}^{*}+\bm{\Delta})&=\sum_{k=0}^{L-1}\frac{1}{k!}\left.\frac{d^{k}z_{L}(x_{i};\bm{\Delta},\lambda)}{d\lambda^{k}}\right|_{\lambda=0}+\left.\frac{1}{L!}\frac{d^{L}z_{L}(x_{i};\bm{\Delta},\lambda)}{d\lambda^{L}}\right|_{\lambda=c_{i}}\\
&=\sum_{k=0}^{L-1}\rho_{k}(\tilde{\bm{\theta}}^{*}, x_{i})e^{-k/\e}+\eta(\tilde{\bm{\theta}}^{*}, x_{i})e^{-(L-1)/\e}\exp(\e \bm{u}^{\top}x_{i}+\e v)+\left.\frac{1}{L!}\frac{d^{L}z_{L}(x_{i};\tilde{\bm{\theta}}^{*},\e,\lambda)}{d\lambda^{L}}\right|_{\lambda=c_{i}}.
\end{align*}
Since $\frac{d^{L}z_{L}(x_{i};\bm{\Delta},\lambda)}{d\lambda^{L}}$ can be rewritten as 
\begin{equation}
\frac{d^{L}z_{L}}{d\lambda^{L}}=\sum_{i_{1},...,i_{L}\ge 0}\beta_{L}(i_{1},...,i_{L})\lambda^{\gamma_{1,L}(i_{1},...,i_{L})}e^{-\frac{\gamma_{2,L}(i_{1},...,i_{L})}{\e}}\prod_{j=1}^{L}z_{j}^{i_{j}}
\end{equation}
for some function $\beta_{L}, \gamma_{1,L},\gamma_{2,L}$ satisfying 
$$\gamma_{1,L}(i_{1},...,i_{L})+L=\gamma_{2,L}(i_{1},...,i_{L}).$$
Since the value of  function $\gamma_{1,L}(i_{1},...,i_{L})$ is always a natural number, then $\gamma_{2,L}(i_{1},...,i_{L})\ge L$. Since number of terms in $\frac{d^{L}z_{L}}{d\lambda^{L}}$ is bounded, $c_{i}\in(0,1),\e\in(0,1)$ and function $z_{j}(x_{i};\tilde{\bm{\theta}}^{*},\e,\lambda)$ is a continuous function in $\e$ and $\lambda$, then $\frac{d^{L}z_{L}(x_{i};\bm{\Delta},\lambda)}{d\lambda^{L}}$ can rewritten as 
$$\left.\frac{d^{L}z_{L}(x_{i};\bm{\Delta},\lambda)}{d\lambda^{L}}\right|_{\lambda=c_{i}}=e^{-L/\e}R(\e,c_{i};x_{i},\bm{\theta}^{*})$$
where the function $R(\e,c_{i};x_{i},\bm{\theta}^{*})$ is a continuous function on $(\e,c_{i})$.
Therefore, we have 
\begin{align*}
L(\tilde{\bm{\theta}}^{*},\bm{\Delta})&\le L(\tilde{\bm{\theta}}^{*})+\sgn(\Delta\alpha_{L})e^{-L/\e}\sum_{i=1}^{n}\ell'\left(-y_{i}g_{L}(x_{i};{\bm{\theta}}^{*})\right)(-y_{i})z_{L}(x_{i};\tilde{\bm{\theta}}^{*}+\bm{\Delta})+C_{1}(\tilde{\bm{\theta}}^{*},\mathcal{D})e^{-2L/\e}\\
&=L(\tilde{\bm{\theta}}^{*})+\sgn(\Delta\alpha_{L})e^{-L/\e}\sum_{k=0}^{L-1}e^{-k/\e}\sum_{i=1}^{n}\ell'\left(-y_{i}g_{L}(x_{i};{\bm{\theta}}^{*})\right)(-y_{i})\rho_{k}(x_{i},\tilde{\bm{\theta}}^{*})\\
&\quad+\sgn(\Delta\alpha_{L})e^{-(2L-1)/\e}\sum_{i=1}^{n}\ell'\left(-y_{i}g_{L}(x_{i};{\bm{\theta}}^{*})\right)(-y_{i})\eta(\tilde{\bm{\theta}}^{*}, x_{i})\exp(\e \bm{u}^{\top}x_{i}+\e v)\\
&\quad+e^{-2L/\e}\sum_{i=1}^{n}\ell'\left(-y_{i}g_{L}(x_{i};{\bm{\theta}}^{*})\right)(-y_{i})R(\e,c_{i};x_{i},\bm{\theta}^{*})+C_{1}(\tilde{\bm{\theta}}^{*},\mathcal{D})e^{-2L/\e}
\end{align*}
Using the same analysis method we have used in the proof of Theorem~\ref{thm::multi-exp}, we have 
$$\sum_{i=1}^{n}\ell'\left(-y_{i}g_{L}(x_{i};{\bm{\theta}}^{*})\right)(-y_{i})\eta(\tilde{\bm{\theta}}^{*}, x_{i})(\bm{u}^{\top}x_{i}+ v)^{p}=0$$
holds for all $(\bm{u},v):\|\bm{u}\|_{2}^{2}+v^{2}=1$ and all integer $p\ge 0$.
Furthermore, since 
$$\eta(\tilde{\bm{\theta}}, x_{i})=\beta(1,...,1)\exp\left(\sum_{l=1}^{L-1}g_{l}(x_{i};\bm{\theta})+\bm{w}^{\top}x_{i}+b\right)>0$$
holds for all $\bm{\theta}, x_{i}$ and 
$$\tilde{f}(x;\tilde{\bm{\theta}}^{*})=g_{L}(x;\bm{\theta}^{*}),$$
then there exists a function $\eta:\mathbb{R}^{d}\times\mathbb{R}^{|\tilde{\bm{\theta}}|}\rightarrow\mathbb{R}^{+}$ such that for every integer $p\ge 0$, the equation 
\begin{equation}
\sum_{i=1}^{n}\ell'\left(-y_{i}g_{L}(x_{i};\tilde{\bm{\theta}}^{*})\right)(-y_{i})\eta(x_{i},\tilde{\bm{\theta}}^{*})(\bm{u}^{\top}x_{i}+v)^{p}=0
\end{equation}
holds for any vector $\bm{u}$ and  scalar $v$ satisfying $\|\bm{u}\|_{2}^{2}+v^{2}=1$.
\end{proof}

\subsection{Proof of Theorem~\ref{thm::multi-exp}}
	
\begin{proof}
For every dataset $\mathcal{D}$ satisfying Assumption~\ref{assump::realizability}, by the Lagrangian interpolating polynomial, there always exists a polynomial  $P(x)=\sum_{j}c_{j}\pi_{j}(x)$ defined on $\mathbb{R}^{d}$ such that it can correctly classify all samples in the dataset with margin at least one, i.e., $y_{i}P(x_{i})\ge 1,\forall i\in[n]$, where $\pi_{j}$ denotes the $j$-th monomial in the polynomial $P(x)$. 
Therefore, from Lemma~\ref{lemma::tensor}, \ref{lemma::multilayer-zero} and \ref{lemma::empirical-loss}, it follows that 
\begin{align*}
\sum_{i=1}^{n}\ell'(-y_{i}\tilde{f}(x_{i};\tilde{\bm{\theta}}^{*}))\eta(x_{i};\tilde{\bm{\theta}}^{*})y_{i}P(x_{i})=\sum_{j}c_{j}\sum_{i=1}^{n}\ell'(-y_{i}\tilde{f}(x_{i};\tilde{\bm{\theta}}^{*}))y_{i}\eta(x_{i};\tilde{\bm{\theta}}^{*})\pi_{j}(x_{i})=0.
\end{align*}
Since $y_{i}P(x_{i})\ge 1$ and $e^{{\bm{w}^{*}}^{\top}x_{i}+b^{*}}>0$ hold for  $\forall i\in[n]$ and the loss function $\ell$ is a non-decreasing function, i.e., $\ell'(z)\ge 0,\forall z\in\mathbb{R}$, then $\ell'(-y_{i}\tilde{f}(x_{i};\tilde{\bm{\theta}}^{*}))=0$ holds for all $i\in[n]$. In addition, from the assumption that every critical point of the loss function $\ell$ is a global minimum, it follows that $z_{i}=-y_{i}\tilde{f}(x_{i};\tilde{\bm{\theta}}^{*})$ achieves the global minimum of the loss function $\ell$ and this further indicates that $\bm{\theta}^{*}$ is a global minimum of the empirical loss $\tilde{L}_{n}(\tilde{\bm{\theta}})$. Furthermore, since at every local minimum, all neurons are inactive, then two networks $\tilde{f}(\cdot;\tilde{\bm{\theta}}^{*})$ and $f(\cdot;\bm{\theta}^{*})$ are equivalent, i.e., $\tilde{f}(\cdot;\tilde{\bm{\theta}}^{*})=f(\cdot;\bm{\theta}^{*})$ holds for all $x\in\mathbb{R}^{d}$. Therefore, $z_{i}=-y_{i}{f}(x_{i};{\bm{\theta}}^{*})$ also achieves the global minimum of the loss function $\ell$ and this further indicates that $\bm{\theta}^{*}$ is a global minimum of the empirical loss $L_{n}(\bm{\theta})$. 
Finally, since every critical point of the loss function $\ell(z)$ satisfies $z<0$, then for every sample, $\ell'(-y_{i}f(x_{i};\bm{\theta}^{*}))=0$ indicates that $y_{i}f(x_{i};\bm{\theta}^{*})>0$, or, equivalently, $y_{i}=\sgn(f(x_{i};\bm{\theta}^{*}))$. Therefore, the set of parameters $\bm{\theta}^{*}$ also minimizes the training error. In summary, the set of parameters $\tilde{\bm{\theta}}^{*}=(\bm{\theta}^{*},a^{*}, \bm{w}^{*}, b^{*})$ minimizes the  loss function $\tilde{L}_{n}(\tilde{\bm{\theta}})$ and the set of parameters $\bm{\theta}^{*}$ simultaneously minimizes the empirical loss function ${L}_{n}(\bm{\theta})$ and the training error ${R}_{n}(\bm{\theta};f)$.
\end{proof}

\newpage
\section{Proof of Proposition~\ref{thm::single-monomial}}

\subsection{Important Lemmas}
Similar to the proof of Proposition~\ref{thm::single-monomial}, we need to first prove the following lemma. 
\begin{lemma}	\label{lemma3::a=0}
Under Assumption~\ref{assump::loss} and $\lambda>0$, if $\tilde{\bm{\theta}}^{*} = (\bm{\theta}^{*},a^{*},\bm{w}^{*},b^{*})$ is a local minimum of $\tilde{L}_{n}$,  then (i) $a^{*}=0$, (ii)  the following equation holds for all unit vector $(\bm{u}, v):\|\bm{u}\|^{2}_{2}+v^{2}=1$,
\begin{equation}\label{lemma3::eq-1}
\sum_{i=1}^n \ell'\left( -y_if(x_i;\bm{\theta}^{*})\right) y_i (\bm{u}^\top x_{i}+v)^{p} =0.
\end{equation}
\end{lemma}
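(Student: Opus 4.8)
The plan is to mirror the proof of Lemma~\ref{lemma::a=0}, exploiting the single structural fact about the monomial neuron $\sigma(z)=z^p$ that the argument needs, namely the Euler homogeneity identity $z\,\sigma'(z)=p\,\sigma(z)$. As there, the proof splits into part (i), which uses only the vanishing of the first-order derivatives at the local minimum, and part (ii), which extracts higher-order information from local optimality.

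For part (i), abbreviate $z_i^{\star}=\bm w^{*\top}x_i+b^{*}$ and let $\ell'_i$ denote $\ell'\!\left(-y_i\tilde f(x_i;\tilde{\bm\theta}^{*})\right)$, which is the common value appearing in every first-order condition. Since $\tilde{\bm\theta}^{*}$ is a local minimum, $\nabla_a\tilde L_n=\nabla_b\tilde L_n=0$ and $\nabla_{\bm w}\tilde L_n=\bm 0$ at $\tilde{\bm\theta}^{*}$, which read
\[
-\sum_{i=1}^n \ell'_i\,y_i\,\sigma(z_i^{\star})+\lambda a^{*}=0,\qquad
-a^{*}\sum_{i=1}^n \ell'_i\,y_i\,\sigma'(z_i^{\star})=0,\qquad
-a^{*}\sum_{i=1}^n \ell'_i\,y_i\,\sigma'(z_i^{\star})\,x_i=\bm 0.
\]
Forming the combination $\bm w^{*\top}\nabla_{\bm w}\tilde L_n+b^{*}\nabla_b\tilde L_n=0$ and using $z\,\sigma'(z)=p\,\sigma(z)$ gives $a^{*}\sum_i \ell'_i\,y_i\,\sigma(z_i^{\star})=0$; substituting the first equation yields $\lambda (a^{*})^2=0$, hence $a^{*}=0$.

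For part (ii), since $a^{*}=0$ the model output at the perturbed parameters $(\bm\theta^{*},\,\delta_a,\,\bm w^{*}+\bm{\delta_w},\,b^{*}+\delta_b)$ is $f(x;\bm\theta^{*})+\delta_a\,g_i$ with $g_i=\big((\bm w^{*}+\bm{\delta_w})^{\top}x_i+b^{*}+\delta_b\big)^p$. I would take $|\delta_a|=e^{-1/\e}$ and $(\bm{\delta_w},\delta_b)=(\e\bm u,\e v)$ for an arbitrary unit vector $(\bm u,v)$ with $\|\bm u\|_2^2+v^2=1$, so that $\|\bm\Delta\|_2\to 0$ as $\e\to 0$ and local minimality applies for $\e$ small. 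A second-order Taylor expansion of $\ell$ with Lagrange remainder, together with boundedness of $g_i$ and of $\ell''$ on the relevant compact neighbourhood (using $\e<1$ and $\ell\in C^2$), produces a constant $C$ depending only on $\tilde{\bm\theta}^{*}$ and $\mathcal D$ such that, for $\e$ small and either sign of $\delta_a$,
\[
-\sgn(\delta_a)\sum_{i=1}^n \ell'\!\left(-y_if(x_i;\bm\theta^{*})\right)y_i\,g_i+C\,e^{-1/\e}\ \ge\ 0.
\]
Expanding $g_i$ by the binomial theorem as $\sum_{k=0}^p\binom{p}{k}(z_i^{\star})^{p-k}\e^k(\bm u^{\top}x_i+v)^k$ and setting $A_k=\sum_{i}\ell'(-y_if(x_i;\bm\theta^{*}))\,y_i\,(z_i^{\star})^{p-k}(\bm u^{\top}x_i+v)^k$, one shows $A_k=0$ by induction on $k=0,\dots,p$: at stage $k$, divide the displayed inequality by $\e^k$, drop the terms $A_0,\dots,A_{k-1}$ already known to vanish, send $\e\to 0$ (so that both $e^{-1/\e}/\e^k$ and the remaining higher-order terms go to zero), and use that $\sgn(\delta_a)$ is free. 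The case $k=p$ has $(z_i^{\star})^{0}=1$ and is exactly Equation~\eqref{lemma3::eq-1}.

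The main obstacle, compared with the exponential case, is part (i): a monomial does not satisfy $\sigma'=c\,\sigma$, so the conditions $\nabla_a\tilde L_n=\nabla_b\tilde L_n=0$ alone do not force $a^{*}=0$, and one must additionally invoke $\nabla_{\bm w}\tilde L_n=\bm 0$ and the homogeneity identity. In part (ii) the perturbed neuron is a genuine degree-$p$ polynomial in $\e$ rather than a product of exponentials, so the induction must peel off the lower-order coefficients $A_0,\dots,A_{p-1}$ before $A_p$ can be isolated; choosing $|\delta_a|=e^{-1/\e}$, which decays faster than every power of $\e$, is precisely what keeps every step of that induction valid. Everything else is routine bookkeeping of Taylor remainders.
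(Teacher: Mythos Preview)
Your proposal is correct and follows essentially the same approach as the paper: part (i) uses the first-order conditions together with the homogeneity identity $z\sigma'(z)=p\sigma(z)$ (the paper combines $\nabla_{(\bm w,b)}\tilde L_n$ with $(\bm w^*,b^*)$ in one step, but this is your $\bm w^{*\top}\nabla_{\bm w}+b^*\nabla_b$), and part (ii) uses the same perturbation $|\delta_a|=e^{-1/\e}$, $(\bm{\delta_w},\delta_b)=(\e\bm u,\e v)$, the same Taylor bound, the same binomial expansion of $g_i$, and the same induction on $k=0,\dots,p$ to peel off the coefficients $A_k$. Your discussion of the two obstacles relative to the exponential case is exactly right.
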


\begin{proof}
\textbf{Proof of Lemma~\ref{lemma3::a=0} ($i$)}. To prove $a^{*}=0$, we only need to check the first order conditions of local minima. By assumption that $\tilde{\bm{\theta}}^{*}=(\bm{\theta}^{*}, a^{*},\bm{w}^{*},b^{*})$ is a local minimum of $\tilde{L}_{n}$, then the derivative of $\tilde{L}_{n}$ with respect to $a$ and $b$ at the point $\tilde{\bm{\theta}}^{*}$ are all zeros, i.e., 
	\begin{align}
	\left.\nabla_ a \tilde{L}_{n}(\tilde{\bm{\theta}})\right|_{\tilde{\bm{\theta}}=\tilde{\bm{\theta}}^{*}}  &=-\sum_{i=1}^n \ell'\left( -y_if(x_i;\bm{\theta}^{*})- y_i a^{*} ({\bm{w}^{*}}^\top x_i +b^{*})^{p}\right) y_i  ({\bm{w}^{*}}^\top x_i +b^{*})^{p} +\lambda a^{*}=0, \notag\\
	\left.\nabla_{(\bm{w}, b)} \tilde{L}_{n}(\tilde{\bm{ \theta}})\right|_{\tilde{\bm{\theta}}=\tilde{\bm{\theta}}^{*}} &=- pa^{*} \sum_{i=1}^n \ell'\left( -y_if(x_i;\bm{\theta}^{*})- y_ia^{*} ({\bm{w}^{*}}^\top x_i +b^{*})^{p}\right) y_i  ({\bm{w}^{*}}^\top x_i+b^{*})^{p-1}{{x_{i}}\choose{1}}=\bm{0}_{d+1}. \notag
	\end{align}
Taking the inner product of the both sides of the second equation with the vector $\bm{w}^{*}\choose {b^{*}}$, we have 
$$- pa^{*} \sum_{i=1}^n \ell'\left( -y_if(x_i;\bm{\theta}^{*})- y_ia^{*} ({\bm{w}^{*}}^\top x_i +b^{*})^{p}\right) y_i  ({\bm{w}^{*}}^\top x_i+b^{*})^{p}=0.$$
From above three equations, it is not difficult to see that $a^{*}$ satisfies $p\lambda {a^{*}}^{2}=0$ or, equivalently, $a^{*}=0$.  

\textbf{Proof of Lemma~\ref{lemma3::a=0} ($ii$)}. The main idea of the proof is to use the high order information of the local minimum to prove the Lemma. Due to the assumption that $\tilde{\bm{\theta}}=(\bm{\theta}^{*}, a^{*}, \bm{w}^{*}, b^{*})$ is a local minimum of the empirical loss function $\tilde{L}_{n}$, there exists a bounded local region such that the parameters $\tilde{\bm{\theta}}^{*}$ achieve the minimum loss value in this region, i.e.,  $\exists\delta\in(0,1)$ such that $\tilde{L}_{n}(\tilde{\bm{\theta}}^{*}+\bm{\Delta})\ge\tilde{L}_{n}(\tilde{\bm{\theta}}^{*})$ for $\forall\bm{\Delta}: \|\bm{\Delta}\|_{2}\le\delta$. 

Now, we use $\delta_{a}$, $\bm{\delta_{w}}$  and $\delta_{b}$ to denote the perturbations on the parameters $a$, $\bm{w}$ and $b$ respectively. Next, we consider the loss value at the point $\tilde{\bm{\theta}}^{*}+\bm{\Delta}=(\bm{\theta}^{*}, a^{*}+\delta_{a}, \bm{w}^{*}+\bm{\delta_{w}}, b^{*}+\delta_{b})$, where we set $|\delta_{a}|=e^{-1/\e}$, $\bm{\delta_{w}}=\e\bm{u}$ and $\delta_{b}=\e v$ for an arbitrary unit vector $(\bm{u}, v):\|\bm{u}\|^{2}_{2}+v^{2}=1$. Therefore, as $\e$ goes to zero, the perturbation magnitude $\|\bm{\Delta}\|_{2}$ also goes to zero and this indicates that there exists an $\e_{0}\in(0,1)$ such that  $\tilde{L}_{n}(\tilde{\bm{\theta}}^{*}+\bm{\Delta})\ge\tilde{L}_{n}(\tilde{\bm{\theta}}^{*})$ for $\forall \e\in[0, \e_{0})$. By $a^{*}=0$, the output of the model $\tilde{f}$ under parameters $\tilde{\bm{\theta}}^{*}+\bm{\Delta}$ can be expressed by 
$$\tilde{f}(x;\tilde{\bm{\theta}}^{*}+\bm{\Delta})=f(x;\bm{\theta}^{*})+\delta_{a}({\bm{w}^{*}}^{\top}x+b^{*}+\bm{\delta_{w}}^{\top}x+\delta_{b})^{p}.$$
        Let $g(x_{i};\bm{w}^{*}, \bm{\delta_{w}}^{}, b^{*},\delta_{b})=({\bm{w}^{*}}^{\top}x+b^{*}+\bm{\delta_{w}}^{\top}x+\delta_{b})^{p}$.
        For each sample $(x_{i}, y_{i})$ in the dataset, by the second order Taylor expansion with Lagrangian remainder, there exists a scalar $\xi_{i}\in[-|\delta_{a}|, |\delta_{a}|]$ depending on $\delta_{a}$ and $g(x_{i};\bm{w}^{*}, \bm{\delta_{w}}, b^{*})$ such that the following equation holds,
        \begin{align*}
        \ell(-y_{i}f(x_{i};\bm{\theta}^{*})-y_{i}&\delta_{a}g(x_{i};\bm{w}^{*}, \bm{\delta_{w}}^{}, b^{*},\delta_{b}))\\
        &=\ell(-y_{i}f(x_{i};\bm{\theta}^{*}))+\ell'(-y_{i}f(x_{i};\bm{\theta}^{*}))(-y_{i})\delta_{a}g(x_{i};\bm{w}^{*}, \bm{\delta_{w}}^{}, b^{*},\delta_{b})\\
        &\quad+\frac{1}{2!}\ell''(-y_{i}f(x_{i};\bm{\theta}^{*})-y_{i}\xi_{i}g(x_{i};\bm{w}^{*}, \bm{\delta_{w}}^{}, b^{*},\delta_{b}))\delta^{2}_{a}g^{2}(x_{i};\bm{w}^{*}, \bm{\delta_{w}}^{}, b^{*},\delta_{b}).
        \end{align*}
         Clearly, for all $\varepsilon<1$,  $|\delta_{a}|<e^{-1}$ and $\|\bm{\delta_{{w}}}\|^{2}_{2}+|\delta_{b}|^{2}<1$, we have 
        $$g(x_{i};\bm{w}^{*}, \bm{\delta_{w}}^{}, b^{*},\delta_{b})=({\bm{w}^{*}}^{\top}x_{i}+b^{*}+\bm{\delta_{w}}^{\top}x_{i}+\delta_{b})^{p}\le (|{\bm{w}^{*}}^{\top}x_{i}+b^{*}|+\|x_{i}\|_{2}^{2}+1)^{p}.$$
        Since $|\xi_{i}|<|\delta_{a}|<e^{-1}$, then for each $i\in[n]$, there exists a constant $C_{i}$ depend on $\bm{\theta}^{*}, \bm{w}^{*}, b^{*}$ such that 
        $$|\ell''(-y_{i}f(x_{i};\bm{\theta}^{*})-z)|< C_{i}$$
        holds for all $z\in[-(|{\bm{w}^{*}}^{\top}x_{i}+b^{*}|+\|x_{i}\|_{2}^{2}+1)^{p}, (|{\bm{w}^{*}}^{\top}x_{i}+b^{*}|+\|x_{i}\|_{2}^{2}+1)^{p}]$.
        
        Since $\tilde{\bm{\theta}}^{*}$ is a local minimum, then there exists $\varepsilon_{0}\in(0, 1)$ such that the inequality
        \begin{align*}
        L_{n}(\tilde{\bm{\theta}}^{*}+\bm{\Delta}) - L_{n}(\tilde{\bm{\theta}}^{*})&=
        \sum_{i=1}^{n}\ell(-y_{i}f(x_{i};\bm{\theta}^{*})-y_{i}\delta_{a}g(x_{i};\bm{w}^{*}, \bm{\delta_{w}}^{}, b^{*},\delta_{b}))+\frac{\lambda\delta_{a}^{2}}{2}-\sum_{i=1}^{n}\ell(-y_{i}f(x_{i};\bm{\theta}^{*}))\\
        &=\sum_{i=1}^{n}\ell'(-y_{i}f(x_{i};\bm{\theta}^{*}))(-y_{i})\delta_{a}g(x_{i};\bm{w}^{*}, \bm{\delta_{w}}^{}, b^{*},\delta_{b})+\frac{\lambda\delta_{a}^{2}}{2}\\
        &\quad+\sum_{i=1}^{n}\frac{1}{2!}\ell''(-y_{i}f(x_{i};\bm{\theta}^{*})-y_{i}\xi_{i}g(x_{i};\bm{w}^{*}, \bm{\delta_{w}}^{}, b^{*},\delta_{b}))\delta^{2}_{a}g^{2}(x_{i};\bm{w}^{*}, \bm{\delta_{w}}^{}, b^{*},\delta_{b})\\
        &\ge 0
        \end{align*}
        holds for all $\varepsilon<\varepsilon_{0}$. In addition, we have 
        \begin{align*}
        \sum_{i=1}^{n}&\frac{1}{2!}\ell''\left(-y_{i}f(x_{i};\bm{\theta}^{*})-y_{i}\xi_{i}g(x_{i};\bm{w}^{*}, \bm{\delta_{w}}^{}, b^{*},\delta_{b})\right)\delta^{2}_{a}g^{2}(x_{i};\bm{w}^{*}, \bm{\delta_{w}}^{}, b^{*},\delta_{b})\\
        &\le \sum_{i=1}^{n}C_{i}\delta^{2}_{a}g^{2}(x_{i};\bm{w}^{*}, \bm{\delta_{w}}^{}, b^{*},\delta_{b})\\
        &\le \exp(-2/\varepsilon)\sum_{i=1}^{n}C_{i}(|{\bm{w}^{*}}^{\top}x_{i}+b^{*}|+\|x_{i}\|_{2}^{2}+1)^{p}
        \end{align*}
       Recall that scalar $C_{i}$ only depends on $\tilde{\bm{\theta}}^{*}=(\bm{\theta}^{*}, \bm{w}^{*}, b^{*})$ and $x_{i}$, thus the scalar $\sum_{i=1}^{n}C_{i}(|{\bm{w}^{*}}^{\top}x_{i}+b^{*}|+\|x_{i}\|_{2}^{2}+1)^{p}+\lambda/2$ can be viewed as a scalar depending only on parameters $\tilde{\bm{\theta}}^{*}$ and dataset $\mathcal{D}$. Thus, for any $\varepsilon:\varepsilon<\varepsilon_{0}$ and for any $\sgn(\delta_{a})\in\{-1, 1\}$, the inequality 
       \begin{equation}
       \sgn(\delta_{a})\exp(-1/\varepsilon)\sum_{i=1}^{n}\ell'(-y_{i}f(x_{i};\bm{\theta}^{*}))(-y_{i})g(x_{i};\bm{w}^{*}, \varepsilon\bm{u}, b^{*},\delta_{b})+C(\tilde{\bm{\theta}}^{*},\mathcal{D})\exp(-2/\varepsilon)\ge 0
       \end{equation}
       always holds.  
       This indicates that for any $\varepsilon:\varepsilon<\varepsilon_{0}$ and for any $\sgn(\delta_{a})\in\{-1, 1\}$, the inequality 
        \begin{equation}\label{lemma3::eq-2}
       \sgn(\delta_{a})\sum_{i=1}^{n}\ell'(-y_{i}f(x_{i};\bm{\theta}^{*}))(-y_{i})({\bm{w}^{*}}^{\top}x+b^{*}+\e\bm{u}^{\top}x+\e v)^{p}+C(\tilde{\bm{\theta}}^{*},\mathcal{D})\exp(-1/\varepsilon)\ge 0
       \end{equation}
       always holds. 
       Since 
       $$({\bm{w}^{*}}^{\top}x+b^{*}+\e\bm{u}^{\top}x+\e v)^{p}=\sum_{q=0}^{p}{p\choose q}\e^{q}({\bm{w}^{*}}^{\top}x+b^{*})^{p-q}(\bm{u}^{\top}x+v)^{q}$$
       
       We now proceed by induction. For the base case where $p=0$, 
       for each $\sgn(\delta_{a})\in\{-1, 1\}$, we take the limit on the both sides of inequality~\eqref{lemma3::eq-2} as $\varepsilon\rightarrow 0$ and thus obtain   
       \begin{equation}
       \sgn(\delta_{a})\sum_{i=1}^{n}\ell'(-y_{i}f(x_{i};\bm{\theta}^{*}))(-y_{i})({\bm{w}^{*}}^{\top}x_{i}+b^{*})^{p}\ge 0,
       \end{equation}
       which further establishes the base case
       \begin{equation}
      \sgn(\delta_{a})\sum_{i=1}^{n}\ell'(-y_{i}f(x_{i};\bm{\theta}^{*}))(-y_{i})({\bm{w}^{*}}^{\top}x_{i}+b^{*})^{p} = 0.
 	\end{equation}
	The inductive hypothesis is that the equality 
	\begin{equation}\label{lemma3::eq-3}
	\sum_{i=1}^{n}\ell'(-y_{i}f(x_{i};\bm{\theta}^{*}))(-y_{i})({\bm{w}^{*}}^{\top}x_{i}+b^{*})^{p-q}(\bm{u}^{\top}x_{i}+v)^{q} = 0
	\end{equation}
	 holds for all $q=0,...,k-1$. Now we need to prove that the equality~\eqref{lemma3::eq-3} holds for $q=k$. 
	 Since the equality holds for all $q= 0,..., k-1$, then we have 
	 \begin{align}
	 \sgn(\delta_{a})&\sum_{i=1}^{n}\ell'(-y_{i}f(x_{i};\bm{\theta}^{*}))(-y_{i})\cdot\\
	 &\cdot\left[\frac{\sum_{q=0}^{p}{p\choose q}\e^{q}({\bm{w}^{*}}^{\top}x+b^{*})^{p-q}(\bm{u}^{\top}x+v)^{q}-\sum_{q=0}^{k-1}{p\choose q}\e^{q}({\bm{w}^{*}}^{\top}x+b^{*})^{p-q}(\bm{u}^{\top}x+v)^{q}}{\varepsilon^{k}}\right]\notag\\
	 &+C(\tilde{\bm{\theta}}^{*},\mathcal{D})1/\varepsilon^{k}\exp(-1/\varepsilon)\ge 0\label{lemma3::eq-4}
	 \end{align}
	 Taking the limit on the both sides of Eq.~\ref{lemma::eq-4}, we obtain that the inequality
	 \begin{equation}\label{lemma3::eq-5}
	\sgn(\delta_{a})\sum_{i=1}^{n}\ell'(-y_{i}f(x_{i};\bm{\theta}^{*}))(-y_{i})({\bm{w}^{*}}^{\top}x_{i}+b^{*})^{p-k}(\bm{u}^{\top}x_{i}+v)^{k} \ge 0,
	\end{equation}
	holds for every $\sgn(\delta_{a})\in\{-1, 1\}$ and this further implies 
	\begin{equation}\label{lemma3::eq-6}
	\sum_{i=1}^{n}\ell'(-y_{i}f(x_{i};\bm{\theta}^{*}))(-y_{i})({\bm{w}^{*}}^{\top}x_{i}+b^{*})^{p-k}(\bm{u}^{\top}x_{i}+v)^{k}= 0.
	\end{equation}
	Thus Eq.~\eqref{lemma3::eq-6} finishes our induction. 
\end{proof}

\subsection{Proof  of Proposition~\ref{thm::single-monomial}}
\begin{proof}
For every dataset $\mathcal{D}$ satisfying Assumption~\ref{assump::realizability}, by the Lagrangian interpolating polynomial, there always exists a polynomial  $P(x)=\sum_{j}c_{j}\pi_{j}(x)$ defined on $\mathbb{R}^{d}$ such that it can correctly classify all samples in the dataset with margin at least one, i.e., $y_{i}P(x_{i})\ge 1,\forall i\in[n]$, where $\pi_{j}$ denotes the $j$-th monomial in the polynomial $P(x)$. 
Therefore, from Lemma~\ref{lemma3::a=0} and \ref{lemma::tensor}, it follows that 
\begin{align*}
\sum_{i=1}^{n}\ell'(-y_{i}f(x_{i};\bm{\theta}^{*}))y_{i}P(x_{i})=\sum_{j}c_{j}\sum_{i=1}^{n}\ell'(-y_{i}f(x_{i};\bm{\theta}^{*}))y_{i}\pi_{j}(x_{i})=0.
\end{align*}
Since $y_{i}P(x_{i})\ge 1$ and $e^{{\bm{w}^{*}}^{\top}x_{i}+b^{*}}>0$ hold for  $\forall i\in[n]$ and the loss function $\ell$ is a non-decreasing function, i.e., $\ell'(z)\ge 0,\forall z\in\mathbb{R}$, then $\ell'(-y_{i}f(x_{i};\bm{\theta}^{*}))=0$ holds for all $i\in[n]$. In addition, from the assumption that every critical point of the loss function $\ell$ is a global minimum, it follows that $z_{i}=-y_{i}f(x_{i};\bm{\theta}^{*})$ achieves the global minimum of the loss function $\ell$ and this further indicates that $\bm{\theta}^{*}$ is a global minimum of the empirical loss $L_{n}(\bm{\theta})$. Furthermore, since at every local minimum, the exponential neuron is inactive, $a^{*}=0$, then the set of parameters $\tilde{\bm{\theta}}^{*}$ is a global minimum of the loss function $\tilde{L}(\tilde{\bm{\theta}})$. Finally, since every critical point of the loss function $\ell(z)$ satisfies $z<0$, then for every sample, $\ell'(-y_{i}f(x_{i};\bm{\theta}^{*}))=0$ indicates that $y_{i}f(x_{i};\bm{\theta}^{*})>0$, or, equivalently, $y_{i}=\sgn(f(x_{i};\bm{\theta}^{*}))$. Therefore, the set of parameters $\bm{\theta}^{*}$ also minimizes the training error. In summary, the set of parameters $\tilde{\bm{\theta}}^{*}=(\bm{\theta}^{*},a^{*}, \bm{w}^{*}, b^{*})$ minimizes the  loss function $\tilde{L}_{n}(\tilde{\bm{\theta}})$ and the set of parameters $\bm{\theta}^{*}$ simultaneously minimizes the empirical loss function $L_{n}(\bm{\theta})$ and the training error $R_{n}(\bm{\theta};f)$. 
\end{proof}

\newpage

\section{Proof of Proposition~\ref{thm::convex}}
\begin{proof}
The proof Proposition~\ref{thm::convex} is based on Lemma~\ref{lemma::a=0} and \ref{lemma::tensor}.

We consider the following three cases: (1) the case where there exists a set of parameters $\bm{\theta}$ such that the neural network $f(x_{i};\bm{\theta})$ can correctly classify all point in the dataset; (2) the case where all points in the dataset can be correctly classified by a certain polynomial but cannot be correctly classified by the network $f(\cdot;\bm{\theta})$ for every set of parameters $\bm{\theta}$; (3) the case where all points in the dataset cannot be separable by a certain polynomial. 

\textbf{Case (1):} The proof for the case (1) is exactly the same as the proof of Theorem~\ref{thm::single-exp}. 

\textbf{Case (2):} Similar to the proof of case (1), we can prove that $\ell'_{i}(-y_{i}f(x_{i};\bm{\theta}^{*}))=0$ holds for all $i\in[n]$. However, since there does not exist a set of parameters $\bm{\theta}$ such that $y_{i}f(x_{i};\bm{\theta}^{})>0$, then this leads to the contradiction with our assumption that $\tilde{\bm{\theta}}^{*}$ is a local minimum of the empirical loss $\tilde{L}_{n}(\tilde{\bm{\theta}})$. This means that in this case, the empirical loss function $\tilde{L}_{n}$ does not have any local minimum. In addition, since the empirical loss $\tilde{L}_{n}(\tilde{\bm{\theta}})$ is strongly convex with respect to $a$, then every critical point of $\tilde{L}_{n}$ cannot be a local maximum. Therefore, every critical point of the empirical loss function in this case is a saddle point. 

\textbf{Case (3):} This case is the most complicated case, since there does not exist a polynomial such that this polynomial can correctly classify all points in the dataset. This indicates that there exists two samples in the dataset with the same feature vectors but with different labels, i.e., $\exists (x_{i},y_{i}), (x_{j},y_{j})\in\mathcal{D}: x_{i}=x_{j}, y_{i}\neq y_{j}$. Now we split the whole dataset into $K\le n$ mutually exclusive dataset $\mathcal{D}_{1},...,\mathcal{D}_{K}$ such that these datasets satisfy: (1) the union of these datasets is the dataset $\mathcal{D}$, i.e., $\cup_{i=1}^{K}\mathcal{D}_{i}=\mathcal{D}$; (2) for every dataset $\mathcal{D}_{i}$,  every  pair of samples $(x_{j}, y_{j}), (x_{k},y_{k})$ in this dataset has the same feature vector; (3) any pair of  samples $(x_{i}, y_{i}), (x_{j},y_{j})$ from two different datasets has different feature vector. 

By Lagrangian interpolation polynomial, for each dataset $\mathcal{D}_{k}$, there always exists a polynomial $P_{k}$ such that $P_{k}(x)=1$ for all $(x, y)\in\mathcal{D}_{k}$ and $P_{k}(x)=0$ for all $(x,y)\notin\mathcal{D}_{k}, (x,y)\in\mathcal{D}$. Therefore, by Lemma~\ref{lemma::a=0} and \ref{lemma::tensor}, we have 
\begin{align*}
0=\sum_{i=1}^{n}\ell'(-y_{i}f(x_{i};\bm{\theta}^{*}))e^{{\bm{w}^{*}}^{\top}x_{i}+b^{*}}y_{i}P_{k}(x_{i})&=\sum_{i:(x_{i},y_{i})\in\mathcal{D}_{k}}\ell'(-y_{i}f(x_{i};\bm{\theta}^{*}))e^{{\bm{w}^{*}}^{\top}x_{i}+b^{*}}y_{i}\\
&=e^{{\bm{w}^{*}}^{\top}z_{k}+b^{*}}\sum_{i:(x_{i},y_{i})\in\mathcal{D}_{k}}\ell'(-y_{i}f(x_{i};\bm{\theta}^{*}))y_{i},
\end{align*}
where we use $z_{k}$ to denote the feature vector of all samples in the dataset $\mathcal{D}_{k}$ and the last equality follows from the property that all samples in the same dataset have the same feature vector. Therefore, we obtain that for each dataset $\mathcal{D}_{k}$, the following equality holds 
\begin{equation}\label{eq::thm2-1}
\sum_{i:(x_{i},y_{i})\in\mathcal{D}_{k}}\ell'(-y_{i}f(x_{i};\bm{\theta}^{*}))y_{i}=0.
\end{equation}
Thus,
$$\ell'(-f(z_{k};\bm{\theta}^{*}))\sum_{i:(x_{i},y_{i})\in\mathcal{D}_{k}}\mathbb{I}\{y_{i}=1\}=\ell'(f(z_{k};\bm{\theta}^{*}))\sum_{i:(x_{i},y_{i})\in\mathcal{D}_{k}}\mathbb{I}\{y_{i}=1\},$$
where we use $z_{k}$ to denote the feature vector of all samples in the dataset $\mathcal{D}_{k}$.
Therefore, it is easy to see that if $\sum_{i:(x_{i},y_{i})\in\mathcal{D}_{k}}\mathbb{I}\{y_{i}=1\}>\sum_{i:(x_{i},y_{i})\in\mathcal{D}_{k}}\mathbb{I}\{y_{i}=1\}$, then $$\ell'(-f(z_{k};\bm{\theta}^{*}))<\ell'(f(z_{k};\bm{\theta}^{*})),$$
since $\ell'(-f(z_{k};\bm{\theta}^{*}))$ and $\ell'(f(z_{k};\bm{\theta}^{*}))$ cannot be zero at the same time. In addition,  due to the assumption that $\ell$ is convex, then $\ell'(z)$ is an increasing function and this indicates that 
$$f(z_{k};\bm{\theta}^{*})>0,$$
or equivalently, the prediction from the neural network $f(\cdot;\bm{\theta}^{*})$ on the samples in the dataset $\mathcal{D}_{k}$ is the label of the majority samples in the dataset $\mathcal{D}_{k}$. This indicates that the network $f(\cdot;\bm{\theta})$ achieves the minimum misclassification rate on the dataset $\mathcal{D}_{k}$. 
Similarly, using the same analysis, we can prove that the same result holds for the case $\sum_{i:(x_{i},y_{i})\in\mathcal{D}_{k}}\mathbb{I}\{y_{i}=1\}<\sum_{i:(x_{i},y_{i})\in\mathcal{D}_{k}}\mathbb{I}\{y_{i}=1\}$ and the case $\sum_{i:(x_{i},y_{i})\in\mathcal{D}_{k}}\mathbb{I}\{y_{i}=1\}=\sum_{i:(x_{i},y_{i})\in\mathcal{D}_{k}}\mathbb{I}\{y_{i}=1\}$. Therefore, the network $f(\cdot;\bm{\theta}^{*})$ achieves the minimum misclassification rate on the dataset $\mathcal{D}$. 

Next, we will prove that $\tilde{\bm{\theta}}^{*}$ is also the global minimum of the loss function $\tilde{L}_{n}(\tilde{\bm{\theta}})$. For any set of parameters $\hat{\bm{\theta}}=(\hat{\bm{\theta}},\hat{a},\hat{\bm{w}},\hat{b})$, we only need to show that 
$$\tilde{L}_{n}(\hat{\bm{\theta}})\ge \tilde{L}_{n}(\tilde{\bm{\theta}}^{*}).$$
Since 
\begin{align*}
\tilde{L}_{n}(\hat{\bm{\theta}})=\sum_{i=1}^{n}\ell(-y_{i}\tilde{f}(x_{i};\hat{\bm{\theta}}))+\frac{\lambda \hat{a}^{2}}{2}=\sum_{k=1}^{K}\sum_{i:(x_{i},y_{i})\in\mathcal{D}_{k}}\ell(-y_{i}\tilde{f}(x_{i};\hat{\bm{\theta}}))+\frac{\lambda \hat{a}^{2}}{2},
\end{align*}
and 
\begin{align*}
\sum_{i:(x_{i},y_{i})\in\mathcal{D}_{k}}\ell\left(-y_{i}\tilde{f}(x_{i};\hat{\bm{\theta}})\right)&\ge\sum_{i:(x_{i},y_{i})\in\mathcal{D}_{k}}\ell\left(-y_{i}\tilde{f}(x_{i};\tilde{\bm{\theta}}^{*})\right)\\
&\quad+\sum_{i:(x_{i},y_{i})\in\mathcal{D}_{k}}\ell'\left(-y_{i}\tilde{f}(x_{i};\tilde{\bm{\theta}}^{*})\right)(-y_{i})(\tilde{f}(x_{i};\hat{\bm{\theta}})-\tilde{f}(x_{i};\tilde{\bm{\theta}}^{*}))
\end{align*}
and the fact that all samples in the dataset $D_{k}$ has the same feature vector $z_{k}$, then 
\begin{align*}
\sum_{i:(x_{i},y_{i})\in\mathcal{D}_{k}}\ell\left(-y_{i}\tilde{f}(x_{i};\hat{\bm{\theta}})\right)&\ge\sum_{i:(x_{i},y_{i})\in\mathcal{D}_{k}}\ell\left(-y_{i}\tilde{f}(x_{i};\tilde{\bm{\theta}}^{*})\right)\\
&\quad+(\tilde{f}(z_{k};\hat{\bm{\theta}})-\tilde{f}(z_{k};\tilde{\bm{\theta}}^{*}))\sum_{i:(x_{i},y_{i})\in\mathcal{D}_{k}}\ell'\left(-y_{i}\tilde{f}(x_{i};\tilde{\bm{\theta}}^{*})\right)(-y_{i})\\
&=\sum_{i:(x_{i},y_{i})\in\mathcal{D}_{k}}\ell\left(-y_{i}\tilde{f}(x_{i};\tilde{\bm{\theta}}^{*})\right)
\end{align*}
where the equality holds from Equation~\eqref{eq::thm2-1} and the fact that $\tilde{f}$ and $f$ are equivalent at every local minimum $\tilde{\bm{\theta}}^{*}$. In addition, since $\hat{a}^{2}\ge0$, then 
\begin{align*}
\tilde{L}_{n}(\hat{\bm{\theta}})&=\sum_{k=1}^{K}\sum_{i:(x_{i},y_{i})\in\mathcal{D}_{k}}\ell(-y_{i}\tilde{f}(x_{i};\hat{\bm{\theta}}))+\frac{\lambda \hat{a}^{2}}{2}\\
&\ge \sum_{k=1}^{K}\sum_{i:(x_{i},y_{i})\in\mathcal{D}_{k}}\ell\left(-y_{i}\tilde{f}(x_{i};\tilde{\bm{\theta}}^{*})\right)=\tilde{L}_{n}(\tilde{\bm{\theta}}^{*}).
\end{align*}
Therefore, $\tilde{\bm{\theta}}^{*}$ is also a global minimum of the loss function $\tilde{L}_{n}$. Using the same analysis, we can prove that $\bm{\theta}^{*}$ is also a global minimum of the loss function $L_{n}$.
\end{proof}

\newpage

\section{Proof of Proposition~\ref{thm::stationary}}

\subsection{Important Lemma}
\begin{lemma}	\label{lemma4::a=0}
Under Assumption~\ref{assump::loss} and $\lambda>0$, if $\tilde{\bm{\theta}}^{*} = (\bm{\theta}^{*},a^{*},\bm{w}^{*},b^{*})$ is a $k$-th order stationary point of $\tilde{L}_{n}$,  then (i) $a^{*}=0$, (ii) for integer $q\in[0, \lfloor k/2\rfloor]$, the following equation holds for all unit vector $\bm{u}:\|\bm{u}\|_{2}=1$,
\begin{equation}\label{lemma4::eq-1}
\sum_{i=1}^n \ell'\left( -y_if(x_i;\bm{\theta}^{*})\right) y_i e^{{\bm{w}^{*}}^\top x_i +b^{*}} (\bm{u}^\top x_{i})^{q} =0.
\end{equation}
\end{lemma}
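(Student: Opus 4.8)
The argument parallels the proof of Lemma~\ref{lemma::a=0}, replacing the local-minimum inequality by the $k$-th order stationarity estimate of Definition~\ref{def::saddle-point} and re-tuning the perturbation magnitudes so that the new error term is dominated. For part (i): a $k$-th order stationary point is by definition a critical point, so $\nabla_a\tilde{L}_{n}(\tilde{\bm{\theta}}^{*})=0$ and $\nabla_b\tilde{L}_{n}(\tilde{\bm{\theta}}^{*})=0$; these are precisely the two first-order equations used in the proof of Lemma~\ref{lemma::a=0}($i$), and eliminating their common sum gives $\lambda(a^{*})^{2}=0$, hence $a^{*}=0$.

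For part (ii), fix a unit vector $\bm{u}$ and, for small $\e>0$ and a sign $s\in\{-1,+1\}$, perturb only $a$ and $\bm{w}$, taking $\delta_{a}=s\,\e^{(k+1)/2}$ and $\bm{\delta_{w}}=\e\bm{u}$. Since $a^{*}=0$ by part (i), the perturbed output is $f(x;\bm{\theta}^{*})+\delta_{a}\,e^{\e\bm{u}^{\top}x}\,e^{{\bm{w}^{*}}^{\top}x+b^{*}}$, and exactly as in the proof of Lemma~\ref{lemma::a=0}($ii$) a second-order Taylor expansion with Lagrange remainder (using $\ell\in C^{2}$) bounds $\tilde{L}_{n}(\tilde{\bm{\theta}}^{*}+\bm{\Delta})-\tilde{L}_{n}(\tilde{\bm{\theta}}^{*})$ up to an error $O(\delta_{a}^{2})$ that also absorbs the regularizer $\tfrac{\lambda}{2}\delta_{a}^{2}$. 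Because $\|\bm{\Delta}\|_{2}=\Theta(\e)$ while $\delta_{a}^{2}=\e^{k+1}$, the estimate $\tilde{L}_{n}(\tilde{\bm{\theta}}^{*}+\bm{\Delta})\ge\tilde{L}_{n}(\tilde{\bm{\theta}}^{*})-C\|\bm{\Delta}\|_{2}^{k+1}$ becomes, after dividing through by $|\delta_{a}|=\e^{(k+1)/2}$,
\[
 s\sum_{i=1}^{n}\ell'\big(-y_{i}f(x_{i};\bm{\theta}^{*})\big)(-y_{i})\,e^{\e\bm{u}^{\top}x_{i}}\,e^{{\bm{w}^{*}}^{\top}x_{i}+b^{*}}\;\ge\;-C'\,\e^{(k+1)/2}
\]
for all sufficiently small $\e$ and both signs. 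From here one runs the induction of Lemma~\ref{lemma::a=0}($ii$): expand $e^{\e\bm{u}^{\top}x_{i}}$, subtract the identities~\eqref{lemma4::eq-1} already known for exponents $0,\dots,q-1$, divide by $\e^{q}$, and let $\e\to0^{+}$; the left side tends to $\frac{s}{q!}\sum_{i}\ell'(-y_{i}f(x_{i};\bm{\theta}^{*}))(-y_{i})e^{{\bm{w}^{*}}^{\top}x_{i}+b^{*}}(\bm{u}^{\top}x_{i})^{q}$ while the right side is $-C'\e^{(k+1)/2-q}$, which vanishes exactly when $q<(k+1)/2$, i.e.\ $q\le\lfloor k/2\rfloor$. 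Using both signs $s=\pm1$ upgrades the inequality to the claimed equality, completing the induction up to $q=\lfloor k/2\rfloor$.

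The main obstacle — and the reason the conclusion is capped at $q\le\lfloor k/2\rfloor$ rather than holding for all $q$ as in Lemma~\ref{lemma::a=0} — is precisely this bookkeeping of exponents. In Lemma~\ref{lemma::a=0} the choice $|\delta_{a}|=e^{-1/\e}$ makes the $O(\delta_{a}^{2})$ remainder super-polynomially small, so it survives an arbitrary number of divisions by $\e$; here a genuinely new error of size $\|\bm{\Delta}\|_{2}^{k+1}=\Theta(\e^{k+1})$ appears that does not shrink as $\delta_{a}\to0$, so $\delta_{a}$ must instead be taken as large as $\e^{(k+1)/2}$ to balance it against the linear term (and simultaneously against the $\delta_{a}^{2}$ terms). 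One must check that this choice still yields $\|\bm{\Delta}\|_{2}=\Theta(\e)$, so that the stationarity estimate is not wasted, and track that after the $q$-fold division the residual error is $\e^{(k+1)/2-q}$, which is exactly what produces the threshold $q\le\lfloor k/2\rfloor$.
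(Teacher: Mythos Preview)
Your proposal is correct and follows essentially the same route as the paper: part (i) uses the two first-order conditions $\nabla_a\tilde L_n=\nabla_b\tilde L_n=0$ to force $a^{*}=0$, and part (ii) perturbs with $|\delta_a|=\e^{(k+1)/2}$, $\bm{\delta_w}=\e\bm u$, Taylor-expands to second order, invokes the $k$-th order stationarity bound (so the right-hand side is $-C'\e^{(k+1)/2}$ after dividing by $|\delta_a|$), and then runs the same induction as Lemma~\ref{lemma::a=0}(ii), with the residual $\e^{(k+1)/2-q}$ producing the cutoff $q\le\lfloor k/2\rfloor$. Your exponent bookkeeping and the explanation of why $\|\bm\Delta\|_2=\Theta(\e)$ (so that $\|\bm\Delta\|_2^{k+1}=O(\e^{k+1})=O(\delta_a^2)$) match the paper's computation exactly.
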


\begin{proof}
\textbf{Proof of Lemma~\ref{lemma4::a=0} ($i$)}. To prove $a^{*}=0$, we only need to check the first order conditions of the $k$-th order stationary point. By assumption that $\tilde{\bm{\theta}}^{*}=(\bm{\theta}^{*}, a^{*},\bm{w}^{*},b^{*})$ is a $k$-th order stationary point of $\tilde{L}_{n}$, then the derivative of $\tilde{L}_{n}$ with respect to $a$ and $b$ at the point $\tilde{\bm{\theta}}^{*}$ are all zeros, i.e., 
	\begin{align}
	\left.\nabla_ a \tilde{L}_{n}(\tilde{\bm{\theta}})\right|_{\tilde{\bm{\theta}}=\tilde{\bm{\theta}}^{*}}  &=-\sum_{i=1}^n \ell'\left( -y_if(x_i;\bm{\theta}^{*})- y_ia e^{{\bm{w}^{*}}^\top x_i +b^{*}}\right) y_i  \exp({\bm{w}^{*}}^\top x_i +b^{*}) +\lambda a^{*}=0, \notag\\
	\left.\nabla_b \tilde{L}_{n}(\tilde{\bm{ \theta}})\right|_{\tilde{\bm{\theta}}=\tilde{\bm{\theta}}^{*}} &=- a^{*} \sum_{i=1}^n \ell'\left( -y_if(x_i;\bm{\theta}^{*})- y_ia e^{{\bm{w}^{*}}^\top x_i +b^{*}}\right) y_i  \exp({\bm{w}^{*}}^\top x_i+b^{*})=0. \notag
	\end{align}
From above two equations, it is not difficult to see that $a^{*}$ satisfies $\lambda {a^{*}}^{2}=0$ or, equivalently, $a^{*}=0$.  

\textbf{Proof of Lemma~\ref{lemma4::a=0} ($ii$)}. The main idea of the proof is to use the high order information of the $k$-th order stationary point to prove the Lemma. Due to the assumption that $\tilde{\bm{\theta}}=(\bm{\theta}^{*}, a^{*}, \bm{w}^{*}, b^{*})$ is a $k$-th order stationary point of the empirical loss function $\tilde{L}_{n}$, there exists a positive constant $C$ and $\exists\delta\in(0,1)$ such that $\tilde{L}_{n}(\tilde{\bm{\theta}}^{*}+\bm{\Delta})\ge\tilde{L}_{n}(\tilde{\bm{\theta}}^{*})-C\|\bm{\Delta}\|_{2}^{k+1}$ for $\forall\bm{\Delta}: \|\bm{\Delta}\|_{2}\le\delta$. 

Now, we use $\delta_{a}$, $\bm{\delta_{w}}$ to denote the perturbations on the parameters $a$ and $\bm{w}$, respectively. Next, we consider the loss value at the point $\tilde{\bm{\theta}}^{*}+\bm{\Delta}=(\bm{\theta}^{*}, a^{*}+\delta_{a}, \bm{w}^{*}+\bm{\delta_{w}}, b^{*})$, where we set $|\delta_{a}|=\e^{(k+1)/2}$ and $\bm{\delta_{w}}=\e\bm{u}$ for an arbitrary unit vector $\bm{u}:\|\bm{u}\|_{2}=1$. Therefore, as $\e$ goes to zero, the perturbation magnitude $\|\bm{\Delta}\|_{2}$ also goes to zero and this indicates that there exists an $\e_{0}\in(0,1)$ such that  $\tilde{L}_{n}(\tilde{\bm{\theta}}^{*}+\bm{\Delta})\ge\tilde{L}_{n}(\tilde{\bm{\theta}}^{*})-C(\e^{2}+\e^{k+1})^{(k+1)/2}$ for $\forall \e\in[0, \e_{0})$. By $a^{*}=0$, the output of the model $\tilde{f}$ under parameters $\tilde{\bm{\theta}}^{*}+\bm{\Delta}$ can be expressed by 
$$\tilde{f}(x;\tilde{\bm{\theta}}^{*}+\bm{\Delta})=f(x;\bm{\theta}^{*})+\delta_{a}\exp(\bm{\delta_{w}}^{\top}x)\exp({\bm{w}^{*}}^{\top}x+b^{*}).$$
        Let $g(x_{i};\bm{w}^{*}, \bm{\delta_{w}}^{}, b^{*})=\exp({\bm{\delta_{w}}^{}}^{\top}x_{i})\exp({\bm{w}^{*}}^{\top}x_{i}+b^{*})$.
        For each sample $(x_{i}, y_{i})$ in the dataset, by the second order Taylor expansion with Lagrangian remainder, there exists a scalar $\xi_{i}\in[-|\delta_{a}|, |\delta_{a}|]$ depending on $\delta_{a}$ and $g(x_{i};\bm{w}^{*}, \bm{\delta_{w}}, b^{*})$ such that the following equation holds,
        \begin{align*}
        \ell(-y_{i}f(x_{i};\bm{\theta}^{*})-y_{i}&\delta_{a}g(x_{i};\bm{w}^{*}, \bm{\delta_{w}}, b^{*}))\\
        &=\ell(-y_{i}f(x_{i};\bm{\theta}^{*}))+\ell'(-y_{i}f(x_{i};\bm{\theta}^{*}))(-y_{i})\delta_{a}g(x_{i};\bm{w}^{*}, \bm{\delta_{w}}, b^{*})\\
        &\quad+\frac{1}{2!}\ell''(-y_{i}f(x_{i};\bm{\theta}^{*})-y_{i}\xi_{i}g(x_{i};\bm{w}^{*},\bm{\delta_{w}}, b^{*}))\delta^{2}_{a}g^{2}(x_{i};\bm{w}^{*}, \bm{\delta_{w}}, b^{*}).
        \end{align*}
        Clearly, for all $\varepsilon<1$,  $|\delta_{a}|<1$ and $\|\bm{\delta_{{w}}}\|_{2}<1$, we have 
        $$g(x_{i};\bm{w}^{*}, \bm{\delta_{w}}, b^{*})=\exp(\bm{\delta_{w}}^{\top}x_{i})\exp({\bm{w}^{*}}^{\top}x_{i}+b^{*})\le \exp(\|x_{i}\|_{2})\exp({\bm{w}^{*}}^{\top}x_{i}+b^{*}).$$
        Since $|\xi_{i}|<|\delta_{a}|<1$, then for each $i\in[n]$, there exists a constant $C_{i}$ depend on $\bm{\theta}^{*}, \bm{w}^{*}, b^{*}$ such that 
        $$|\ell''(-y_{i}f(x_{i};\bm{\theta}^{*})-z)|< C_{i}$$
        holds for all $z\in[-\exp(\|x_{i}\|_{2}+{\bm{w}^{*}}^{\top}x_{i}+b^{*}), \exp(\|x_{i}\|_{2}+{\bm{w}^{*}}^{\top}x_{i}+b^{*})]$.
        
        Since $\tilde{\bm{\theta}}^{*}$ is a $k$-th order stationary point, then there exists $\varepsilon_{0}\in(0, 1)$ such that the inequality
        \begin{align*}
        L_{n}(\tilde{\bm{\theta}}^{*}+\bm{\Delta}) - L_{n}(\tilde{\bm{\theta}}^{*})&=
        \sum_{i=1}^{n}\ell(-y_{i}f(x_{i};\bm{\theta}^{*})-y_{i}\delta_{a}g(x_{i};\bm{w}^{*}, \bm{\delta_{w}}, b^{*}))+\frac{\lambda\delta_{a}^{2}}{2}-\sum_{i=1}^{n}\ell(-y_{i}f(x_{i};\bm{\theta}^{*}))\\
        &=\sum_{i=1}^{n}\ell'(-y_{i}f(x_{i};\bm{\theta}^{*}))(-y_{i})\delta_{a}g(x_{i};\bm{w}^{*}, \bm{\delta_{w}}, b^{*})+\frac{\lambda\delta_{a}^{2}}{2}\\
        &\quad+\sum_{i=1}^{n}\frac{1}{2!}\ell''(-y_{i}f(x_{i};\bm{\theta}^{*})-y_{i}\xi_{i}g(x_{i};\bm{w}^{*},\bm{\delta_{w}}, b^{*}))\delta^{2}_{a}g^{2}(x_{i};\bm{w}^{*}, \bm{\delta_{w}}, b^{*})\\
        &\ge -C(\e^{2}+\e^{k+1})^{(k+1)/2}\ge -C'\e^{k+1}
        \end{align*}
        holds for all $\varepsilon<\min\{\varepsilon_{0},1\}$ and some positive constant $C'$. In addition, we have 
        \begin{align*}
        \sum_{i=1}^{n}\ell''(-y_{i}f(x_{i};\bm{\theta}^{*})-&y_{i}\xi_{i}g(x_{i};\bm{w}^{*},\bm{\delta_{w}}, b^{*}))\delta^{2}_{a}g^{2}(x_{i};\bm{w}^{*}, \bm{\delta_{w}}, b^{*})\\
        &\le \sum_{i=1}^{n}C_{i}\delta^{2}_{a}g^{2}(x_{i};\bm{w}^{*}, \bm{\delta_{w}}, b^{*})\\
        &\le \e^{k+1}\sum_{i=1}^{n}C_{i}\exp(-\|x_{i}\|_{2}+{\bm{w}^{*}}^{\top}x_{i}+b^{*})
        \end{align*}
       Recall that scalar $C_{i}$ only depends on $\tilde{\bm{\theta}}^{*}=(\bm{\theta}^{*}, \bm{w}^{*}, b^{*})$ and $x_{i}$, thus the scalar $C(\tilde{\bm{\theta}}^{*},\mathcal{D})=\sum_{i=1}^{n}C_{i}\exp(-\|x_{i}\|_{2}+{\bm{w}^{*}}^{\top}x_{i}+b^{*})+\lambda/2+C'$ can be viewed as a scalar depending only on parameters $\tilde{\bm{\theta}}^{*}$ and dataset $\mathcal{D}$. Thus, for any $\varepsilon:\varepsilon<\varepsilon_{0}$ and for any $\sgn(\delta_{a})\in\{-1, 1\}$, the inequality 
       \begin{equation}
       \sgn(\delta_{a})\e^{(k+1)/2}\sum_{i=1}^{n}\ell'(-y_{i}f(x_{i};\bm{\theta}^{*}))(-y_{i})g(x_{i};\bm{w}^{*}, \varepsilon\bm{u}, b^{*})+C(\tilde{\bm{\theta}}^{*},\mathcal{D})\e^{k+1}\ge 0
       \end{equation}
       always holds.  
       This indicates that for any $\varepsilon:\varepsilon<\varepsilon_{0}$ and for any $\sgn(\delta_{a})\in\{-1, 1\}$, the inequality 
        \begin{equation}\label{lemma4::eq-2}
       \sgn(\delta_{a})\sum_{i=1}^{n}\ell'(-y_{i}f(x_{i};\bm{\theta}^{*}))(-y_{i})\exp(\varepsilon\bm{u}^{\top}x_{i})\exp({\bm{w}^{*}}^{\top}x_{i}+b^{*})+C(\tilde{\bm{\theta}}^{*},\mathcal{D})\e^{(k+1)/2}\ge 0
       \end{equation}
       always holds. 
       We now proceed by induction. For the base case where $p=0$, 
       for each $\sgn(\delta_{a})\in\{-1, 1\}$, we take the limit on the both sides of inequality~\eqref{lemma4::eq-2} as $\varepsilon\rightarrow 0$ and thus obtain   
       \begin{equation}
       \sgn(\delta_{a})\sum_{i=1}^{n}\ell'(-y_{i}f(x_{i};\bm{\theta}^{*}))(-y_{i})\exp({\bm{w}^{*}}^{\top}x_{i}+b^{*})\ge 0,
       \end{equation}
       which further establishes the base case
       \begin{equation}
       \sum_{i=1}^{n}\ell'(-y_{i}f(x_{i};\bm{\theta}^{*}))(-y_{i})\exp({\bm{w}^{*}}^{\top}x_{i}+b^{*}) = 0.
 	\end{equation}
	The inductive hypothesis is that the equality 
	\begin{equation}\label{lemma4::eq-3}
	\sum_{i=1}^{n}\ell'(-y_{i}f(x_{i};\bm{\theta}^{*}))(-y_{i})\exp({\bm{w}^{*}}^{\top}x_{i}+b^{*})(\bm{u}^{\top}x_{i})^{j} = 0
	\end{equation}
	 holds for all $j=0,...,s-1$ and $s\le\lfloor k/2\rfloor$. Now we need to prove that the equality~\eqref{lemma4::eq-3} holds for $j=s$. 
	 Since the equality holds for all $j= 0,..., s-1$, then we have 
	 \begin{align}
	 \sgn(\delta_{a})\sum_{i=1}^{n}\ell'(-y_{i}f(x_{i};\bm{\theta}^{*}))(-y_{i})\exp({\bm{w}^{*}}^{\top}x_{i}+b^{*})&\left[\frac{\exp(\varepsilon\bm{u}^{\top}x_{i})-\sum_{j=0}^{s-1}\frac{(\varepsilon \bm{u}^{\top}x_{i})^{j}}{j!}}{\varepsilon^{s}}\right]\notag\\
	 &+C(\tilde{\bm{\theta}}^{*},\mathcal{D})\varepsilon^{(k+1)/2-s}\ge 0\label{lemma4::eq-4}
	 \end{align}
	 When $ s\le\lfloor k/2\rfloor$ or, equivalently, $(k+1)/2-s\ge1/2$, taking the limit on the both sides of Eq.~\ref{lemma::eq-4}, we obtain that the inequality
	 \begin{equation}\label{lemma4::eq-5}
	\sgn(\delta_{a})\sum_{i=1}^{n}\ell'(-y_{i}f(x_{i};\bm{\theta}^{*}))(-y_{i})\exp({\bm{w}^{*}}^{\top}x_{i}+b^{*})(\bm{u}^{\top}x_{i})^{s} \ge 0,
	\end{equation}
	holds for every $\sgn(\delta_{a})\in\{-1, 1\}$ and this further implies 
	\begin{equation}\label{lemma4::eq-6}
	\sum_{i=1}^{n}\ell'(-y_{i}f(x_{i};\bm{\theta}^{*}))(-y_{i})\exp({\bm{w}^{*}}^{\top}x_{i}+b^{*})(\bm{u}^{\top}x_{i})^{s} = 0
	\end{equation} 
	Thus Eq.~\eqref{lemma4::eq-6} finishes our induction. 
\end{proof}
	
\subsection{Proof  of Proposition~\ref{thm::stationary}}
\begin{proof}
For every dataset $\mathcal{D}$ satisfying Assumption~\ref{assump::realizability}, by the Lagrangian interpolating polynomial, there always exists a polynomial of degree $p$ (i.e., $P(x;p)=\sum_{j}c_{j}\pi_{j}(x)$) defined on $\mathbb{R}^{d}$ such that it can correctly classify all samples in the dataset with margin at least one, i.e., $y_{i}P(x_{i};p)\ge 1,\forall i\in[n]$, where $\pi_{j}$ denotes the $j$-th monomial in the polynomial $P(x)$. 
Therefore, from Lemma~\ref{lemma4::a=0} and \ref{lemma::tensor}, it follows that 
\begin{align*}
\sum_{i=1}^{n}\ell'(-y_{i}f(x_{i};\bm{\theta}^{*}))e^{{\bm{w}^{*}}^{\top}x_{i}+b^{*}}y_{i}P(x_{i})=\sum_{j}c_{j}\sum_{i=1}^{n}\ell'(-y_{i}f(x_{i};\bm{\theta}^{*}))y_{i}e^{{\bm{w}^{*}}^{\top}x_{i}+b^{*}}\pi_{j}(x_{i})=0,
\end{align*}
where the second equality holds only when the degree of monomial $\pi_{j}$ is not larger than $\lfloor k/2\rfloor$. In other words, $p\le \lfloor k/2\rfloor$ or, equivalently, $k\ge 2p$, which is guaranteed by the assumption in Proposition~\ref{thm::stationary}. 
Since $y_{i}P(x_{i})\ge 1$ and $e^{{\bm{w}^{*}}^{\top}x_{i}+b^{*}}>0$ hold for  $\forall i\in[n]$ and the loss function $\ell$ is a non-decreasing function, i.e., $\ell'(z)\ge 0,\forall z\in\mathbb{R}$, then $\ell'(-y_{i}f(x_{i};\bm{\theta}^{*}))=0$ holds for all $i\in[n]$. In addition, from the assumption that every critical point of the loss function $\ell$ is a global minimum, it follows that $z_{i}=-y_{i}f(x_{i};\bm{\theta}^{*})$ achieves the global minimum of the loss function $\ell$ and this further indicates that $\bm{\theta}^{*}$ is a global minimum of the empirical loss $L_{n}(\bm{\theta})$. Furthermore, since at $k$-th order stationary point, the exponential neuron is inactive, $a^{*}=0$, then the set of parameters $\tilde{\bm{\theta}}^{*}$ is a global minimum of the loss function $\tilde{L}_{n}(\tilde{\bm{\theta}})$. Finally, since every critical point of the loss function $\ell(z)$ satisfies $z<0$, then for every sample, $\ell'(-y_{i}f(x_{i};\bm{\theta}^{*}))=0$ indicates that $y_{i}f(x_{i};\bm{\theta}^{*})>0$, or, equivalently, $y_{i}=\sgn(f(x_{i};\bm{\theta}^{*}))$. Therefore, the set of parameters $\bm{\theta}^{*}$ also minimizes the training error. In summary, the set of parameters $\tilde{\bm{\theta}}^{*}=(\bm{\theta}^{*},a^{*}, \bm{w}^{*}, b^{*})$ minimizes the  loss function $\tilde{L}_{n}(\tilde{\bm{\theta}})$ and the set of parameters $\bm{\theta}^{*}$ simultaneously minimizes the empirical loss function $L_{n}(\bm{\theta})$ and the training error $R_{n}(\bm{\theta};f)$. 
\end{proof}

\end{appendix}

\end{document}